%% file: main.tex
\documentclass[twoside]{article}

\usepackage[accepted]{styles/aistats2026/aistats2026}
\usepackage[authoryear,round]{natbib}

\PassOptionsToPackage{usenames,dvipsnames,svgnames,x11names,table}{xcolor}
\usepackage{xcolor}
\usepackage{booktabs}       %

\usepackage{lipsum}         %
\usepackage{graphicx}
\usepackage{subcaption}
\usepackage{thm-restate} %
\usepackage{enumitem} %
\usepackage{wrapfig}
\usepackage{placeins}
\usepackage{packages/colab} %
\usepackage{packages/math} %
\usepackage{packages/refs} %
\usepackage{packages/algo} %
\usepackage{packages/mylistings} %

\usepackage{hyperref}
\usepackage{cleveref}       %
\usepackage{etoc}           %

\usepackage{xcolor}   %
\definecolor{darkblue}{rgb}{0, 0, 0.5}
\hypersetup{colorlinks=true, citecolor=darkblue, linkcolor=darkblue, urlcolor=darkblue}

\usepackage[most]{tcolorbox}
\tcbset{
  boxrule=0.7pt,
  arc=1.5pt,
  left=0.5pt,
  right=0.5pt,
  top=0.5pt,
  bottom=0.5pt,
  boxsep=4pt,
}

\usepackage{packages/colors} %
\usepackage{packages/symbols} %

\usepackage{url}
\usepackage{etoc}
\begin{document}
\etocdepthtag.toc{mtchapter}              %
\etocsettagdepth{mtchapter}{subsection}   %
\etocsettagdepth{mtappendix}{none}        %

\runningtitle{CTMC Framework for Insertion Language Models}

\runningauthor{Patel, Rozonoyer, Das, Naseem, Rudner, McCallum}

\twocolumn[

\aistatstitle{A Continuous-Time Markov Chain Framework\\for Insertion Language Models}

\aistatsauthor{}

\begin{center}
\vspace*{-30pt}
\setlength{\tabcolsep}{4pt}%
\begin{tabular*}{0.93\textwidth}{@{\extracolsep{\fill}}ccc@{}}
\textbf{Dhruvesh Patel} &
\textbf{Benjamin Rozonoyer} &
\textbf{Soumitra Das} \\\vspace*{-8pt}
UMass Amherst &
UMass Amherst &
UMass Amherst \\
\noalign{\vskip 0.25in}
\textbf{Tahira Naseem} &
\textbf{Tim G.~J.~Rudner} &
\textbf{Andrew McCallum} \\
IBM Research &
University of Toronto \& Vijil &
UMass Amherst \\
\end{tabular*}
\end{center}
\vskip 0.3in plus 2fil minus 0.1in
]

\input{drafts/aistats/sections/00_abstract}

\input{drafts/aistats/sections/01_introduction}
\input{drafts/aistats/sections/03_background}
\input{drafts/aistats/sections/04_method}
\input{drafts/aistats/sections/02_related_work}
\input{drafts/aistats/sections/05_experiments}

\FloatBarrier
\subsubsection*{Acknowledgments}
We thank Michael Boratko and Tomas Geffner for helpful discussions and encouragement.
This work was supported by IBM under IBM Research Collaboration Agreement No W1668553, and by National Science Foundation under the grant IIS-2106391.
The views and conclusions contained herein are those of the authors and should not be interpreted as necessarily representing the official policies or endorsements, either expressed or implied, of IBM or NSF.
\bibliography{references}

\clearpage

\section*{Checklist}

\begin{enumerate}

  \item For all models and algorithms presented, check if you include:
  \begin{enumerate}
    \item A clear description of the mathematical setting, assumptions, algorithm, and/or model. [Yes]
    \item An analysis of the properties and complexity (time, space, sample size) of any algorithm. [Not Applicable]
    \item (Optional) Anonymized source code, with specification of all dependencies, including external libraries. [No]
  \end{enumerate}

  \item For any theoretical claim, check if you include:
  \begin{enumerate}
    \item Statements of the full set of assumptions of all theoretical results. [Yes]
    \item Complete proofs of all theoretical results. [Yes, we provide proofs for the new results and provide references for known results.]
    \item Clear explanations of any assumptions. [Yes]     
  \end{enumerate}

  \item For all figures and tables that present empirical results, check if you include:
  \begin{enumerate}
    \item The code, data, and instructions needed to reproduce the main experimental results (either in the supplemental material or as a URL). [Yes, the code used for this work is available at \texttt{\url{https://github.com/dhruvdcoder/ctmc_dilm}}. Appendix~\ref{app:implementation_details} contains some key implementation details, which are also available in the code.] 
    \item All the training details (e.g., data splits, hyperparameters, how they were chosen). [Yes, these are available in the code and in Appendix~\ref{app:implementation_details}.]
    \item A clear definition of the specific measure or statistics and error bars (e.g., with respect to the random seed after running experiments multiple times). [Not Applicable]
    \item A description of the computing infrastructure used. (e.g., type of GPUs, internal cluster, or cloud provider). [Yes, the details are available in Appendix~\ref{app:implementation_details}.]
  \end{enumerate}

  \item If you are using existing assets (e.g., code, data, models) or curating/releasing new assets, check if you include:
  \begin{enumerate}
    \item Citations of the creator If your work uses existing assets. [Yes]
    \item The license information of the assets, if applicable. [Not Applicable]
    \item New assets either in the supplemental material or as a URL, if applicable. [Yes, we provide the URL to the code repository, which also contains URLs for the model checkpoints.]
    \item Information about consent from data providers/curators. [Not Applicable]
    \item Discussion of sensible content if applicable, e.g., personally identifiable information or offensive content. [Not Applicable]
  \end{enumerate}

  \item If you used crowdsourcing or conducted research with human subjects, check if you include:
  \begin{enumerate}
    \item The full text of instructions given to participants and screenshots. [Not Applicable]
    \item Descriptions of potential participant risks, with links to Institutional Review Board (IRB) approvals if applicable. [Not Applicable]
    \item The estimated hourly wage paid to participants and the total amount spent on participant compensation. [Not Applicable]
  \end{enumerate}

\end{enumerate}

\clearpage
\input{drafts/aistats/sections/z0_appendix}

\end{document}

%% file: drafts/aistats/sections/00_abstract.tex
\begin{abstract}
Insertion Language Models (ILMs) offer several advantages over left-to-right generation and mask-based generation. 
However, existing formulations of insertion-based generation have largely been ad-hoc. 
In this paper, we derive a diffusion-style denoising objective for ILMs from first principles by formulating the noising process as a continuous-time Markov chain on the space of variable-length sequences. We show that previous formulations of ILMs can be viewed as special cases of this denoising framework. Through empirical evaluation on a synthetic planning task, we show that the proposed approach retains the benefits of insertion-based generation over left-to-right generation and masked diffusion models. In language modeling, our diffusion-based approach is competitive with left-to-right generation and masked diffusion models, while offering additional flexibility in sampling compared to existing insertion language models.
\end{abstract}

%% file: drafts/aistats/sections/01_introduction.tex
\section{\MakeUppercase{Introduction}}
\label{sec:introduction}
\begin{figure}[!ht]
    \vspace{15pt}
    \centering
    \includegraphics[trim=0 0 0mm 0,clip,width=0.9\linewidth]{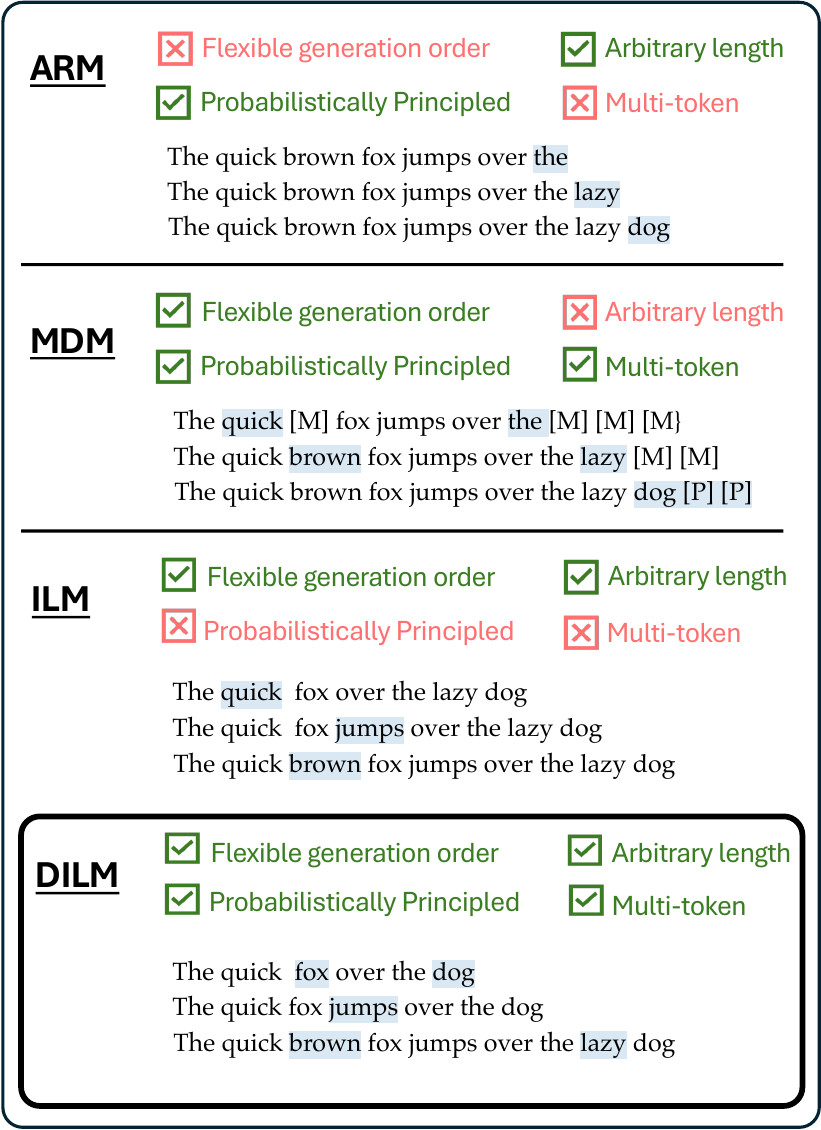}
    \caption{Diffusion-based Insertion Language Models (DILMs) retain the benefits of Insertion Language Models (ILMs) while offering principled training and sampling procedures based on diffusion using a continuous-time Markov chain framework.}
    \label{fig:model_comparison}
\end{figure}
Autoregressive language models (ARMs) generate sequences by predicting tokens from left-to-right. They can naturally generate variable-length sequences by using an \texttt{EOS} token to mark the end of a sequence.
However, certain tasks like planning and sub-goal based infilling require a more flexible approach for sequence generation~\citep{pmlr-v235-bachmann24a}.
Masked diffusion models can generate sequences in arbitrary order while also generating multiple tokens per step~\citep{loudiscrete,sahooSimpleEffectiveMasked2024,shi2024simplified}.
MDMs do not explicitly model the length of the sequence, but are trained to predict variable-length sequences by predicting the \texttt{PAD} tokens like ordinary tokens.
This impacts the quality of generation and restricts the sampling to block-based left-to-right sampling \citep{nie2025scaling,yang2026rhotexttteostrainingfreebidirectionalvariablelength,li2025fixedtrainingfreevariablelengthdenoising,wu2026dreamondiffusionlanguagemodels}.
Insertion Language Models (ILMs) \citep{sternInsertionTransformerFlexible2019a, patel2025insertion}, on the other hand, generate sequences through expansion by iteratively inserting tokens at arbitrary positions.
ILMs combine the benefits of ARMs and MDMs --- they naturally generate variable-length sequences while maintaining the ability to generate in non-left-to-right order and using relative position constraints.
ILMs perform better than MDMs and left-to-right autoregressive models (ARMs) on sub-goal based planning tasks and infilling tasks that require respecting relative ordering constraints \citep{patel2025insertion}.

Existing formulations of ILMs use ad-hoc training objectives and sampling procedures \citep{patel2025insertion,sternInsertionTransformerFlexible2019a}.
In this work, we derive a principled diffusion-style denoising objective and sampling procedures for insertion-based generation.
We formulate the noising process as a continuous-time Markov chain over the space of sequences that drops tokens uniformly with a time-dependent rate.
We present two noising processes and respective transformer-based parameterizations of the rate matrix of the generative process (the reverse of the noising process).
The first parameterization models the joint probability of vocabulary item and insertion location, and the probability of the length-to-go.
The second parameterization models the rate matrix of the generative process using independent, per-dimension probabilities allowing one to sample multiple insertions at once.
We derive the evidence lower bound on the data log-likelihood under the parameterized generative process, and discuss the necessary computational considerations for training and sampling.

The main contributions of this work are as follows:\vspace*{-3pt}
\begin{enumerate}[topsep=0pt, align=left, leftmargin=15pt, labelindent=1pt,
listparindent=\parindent, labelwidth=0pt, itemindent=!, itemsep=3pt, parsep=0pt]
    \item We present a principled diffusion-style denoising framework for insertion-based generation and propose two variants of the generative process in this framework. \Cref{fig:model_comparison} shows how the proposed generative model relates to existing sequence generation models.
    \item We show that the resulting sampling procedures unify the existing formulations of Insertion Language Models.
    \item Empirical evaluation on synthetic planning tasks and language modeling demonstrates improved performance over existing ILMs and MDMs.
\end{enumerate}
\vspace*{5pt}
\begin{tcolorbox}[
    colback=blue!5!white,
    colframe=blue!70!black,
  ]
    Code to reproduce our experiments is available at:\\[-15pt]
    \begin{center}
        \href{https://github.com/dhruvdcoder/ctmc_dilm}{\texttt{https://github.com/dhruvdcoder/ctmc\_dilm}}
    \end{center}
\end{tcolorbox}

\newpage

\begin{figure*}
    \centering
    \includegraphics[trim=0 2mm 0mm 0mm,clip,width=0.9\textwidth]{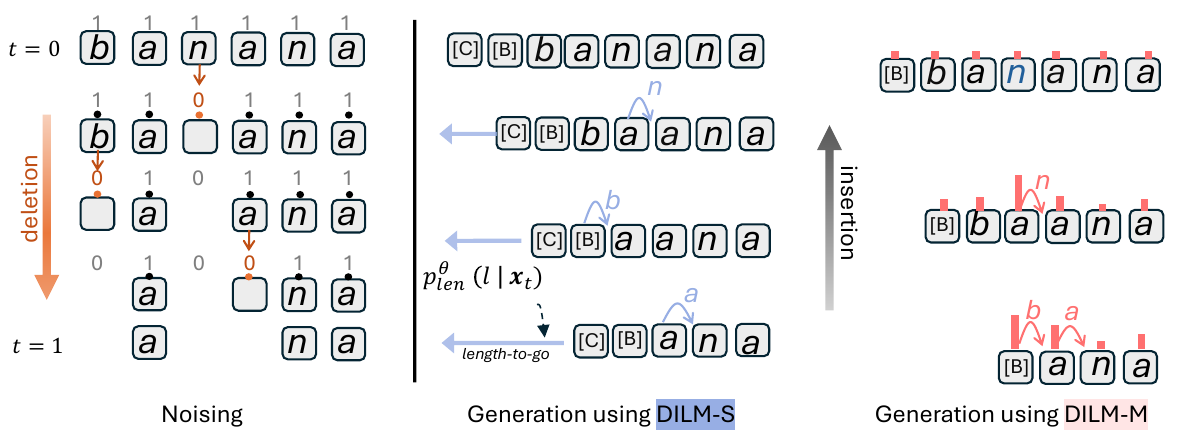}
    \vspace{-5pt}
    \caption{The noising process, shown on the left, is a continuous-time Markov chain that deletes one token at a time uniformly with the deletion events arriving with rate $\sigma_t$.
    The {\color{gray}bit vectors} show the {\color{powerpoint_orange_red_main}{deletion}} path w.r.t.\ the original sequence $\x_0 = (6, \textnormal{banana})$. 
    On the right we show two parameterizations of the generative process. 
    DILM-S predicts the length-to-go and a joint probability $p^\theta_{\ins}(i, w\mid \x_t)$ over insertion locations and vocabulary items.
    The generation stops when the predicted insertion rate modeled through the predicted \emph{length-to-go} value, shown using the blue arrows pointing left, goes to zero.
    DILM-M uses per-position insertion rate, shown as the vertical blue bars, and $p^\theta_{\ins}(w \mid \x_t, i)$. The generation stops when the insertion rate is low for all positions. \texttt{[C]} and \texttt{[B]} are special tokens, where the former is used by \idlm{} to perform length-to-go prediction and the latter is used by both models as a beginning-of-sequence marker to predict the token at the first gap.
    }
    \label{fig:deletion_path}
    \vspace{-8pt}
\end{figure*}

%% file: drafts/aistats/sections/03_background.tex
\section{\MakeUppercase{Background}}
\label{sec:background}
In this section, we will state some elementary results from the theory of Markov chains to set up the framework for our method.
We will also introduce some notation that will be used throughout the paper (see \Cref{app:notation} for a summary of all notation).

\xhdr{Notation.}
Capital letters (e.g. $X, \cB$) are used to denote (scalar or vector valued) random variables and subscripts are used to denote the time index of stochastic processes (e.g., $X_t$, $\cB_t$, etc.).
Boldface vectors and superscripts will be used to denote components of a vector, e.g., $x^i, X_t^i$, for the $i$-th component of $\x$ and $\cX_t$, respectively. %
Double square brackets are used to denote the set of natural numbers up to a specific number, i.e., $\natset{n} = \{1, 2, \ldots, n\}$.
All stochastic processes are assumed to be continuous-time unless an \underline{underline} is used, which denotes a discrete-time process (e.g. $\cX_t, \dX_k$ are continuous- and discrete-time processes, respectively).
We will use the shorthand 
$p_{s|t}(y \mid x)$ to denote the transition probability $\sP(X_s = y \mid X_t = x)$, and in case of discrete time processes, we will use $\ubar p_{r|k}(y \mid x)$.

\xhdr{Continuous-Time Markov Chains.}
Here we briefly discuss continuous-time Markov chains (CTMCs), and their characterization in terms of the rate matrix.
In \cref{sec:method}, we will leverage the relationship between DTMCs and CTMCs to describe our noising process.
Let $X_t$ be a CTMC taking values in a countable state space $\sX$ with right-continuous sample paths.
Its dynamics can be described using a state- and time-dependent jump rate $\lambda_t(x)$ and a post-jump transition kernel $\K[t]{x}{y} \geq 0$ satisfying $\K[t]{x}{x}=0$ and $\sum_{y \in \sX} \K[t]{x}{y} = 1$.
The corresponding transition rate matrix of $X_t$ is then given by
\begingroup
    \setlength{\abovedisplayskip}{4pt}
    \setlength{\belowdisplayskip}{4pt}
\begin{align}
\label{eq:ctmc:transition_rate_matrix}
      \R{t}{x}{y} &= \begin{cases}
          \lambda_t(x) \K[t]{x}{y} & \text{if } x\not=y,\\
          -\lambda_t(x) & \text{if } x = y,
      \end{cases} ~~~\text{with}\nonumber\\
      \K[t]{x}{x} &= 0~~\text{and}~~ \sum_{y \in \sX} \K[t]{x}{y} = 1.
  \end{align}
\endgroup
An informal description of the evolution of the transition probability {in continuous time} for small values of $h$ is given by the first-order discretization
\begingroup
    \setlength{\abovedisplayskip}{4pt}
    \setlength{\belowdisplayskip}{6pt}
\begin{align}
\label{eq:ctmc:first_order_discretization}
    p_{t+h|t}(y \mid x) = \delta_{x}(y) + h\,R_t(x, y) + o(h),
\end{align}
\endgroup
where $\delta_{x}(y)$ is the indicator function that is $1$ if $x=y$ and $0$ otherwise. 
{Intuitively, the tendency $\delta_{x} (y)$ to remain in the same state $x$ is counteracted by the passage of time and the rate parameter $\lambda_t(x) = \sum_{y\not=x} R_t(x,y)$.}

\paragraph{Time Reversal.}
On the time interval $[0, T]$, for any regular CTMC $X_t$ on countable state space, with initial distribution $p_0$ and rate $R_t$, the time reversal $\hat X_{t}\coloneq X_{T-t}$ is also a CTMC with rate $\hat R_t(x, y) = \frac{p_{t}(y)}{p_t(x)} R_{t}(y, x).$\footnote{We have relabeled the time index of the reverse process $T-t$ as $t$ to keep the notation simple. Also, at the jump points the time reversal's value is the left limit of the forward process.}
Let $\hat R^\theta_t$ denote a parameterized rate for the time reversal $\hat X_{t}$ for some parameter value $\theta$. Then the following proposition gives a lower bound on the log-likelihood of the data under the parameterized model.
\begin{restatable}[ELBO]{proposition}{ThmELBO}\label{prop:base_elbo}
    For the time interval $[0,1]$, let $p_0=p_{\textnormal{data}}$ and $p_1=p_{\textnormal{ref}}$ denote the terminal time marginals of the noising process $X_t$. 
    Then for the reverse CTMC with initial distribution $p_{\textnormal{ref}}$, parameterized rate $\hat R_t^\theta$, and the induced terminal marginal $p_0^\theta$, the
     log-likelihood $\E\left[\log p^\theta_0(X_0)\right]$ under the model is bounded from below by
    \begingroup
        \setlength{\abovedisplayskip}{4pt}
        \setlength{\belowdisplayskip}{4pt}
    \begin{equation*}
         \hspace{-2mm}\E \sum_{y \not=x} \left[\hspace{-1pt}-\hat R^\theta_t(x, y) \hspace{-1pt}+\hspace{-1pt} \frac{p_{t|0}(y | x_0)}{p_{t|0}(x | x_0)}R_t(y, x)\log \hat R^\theta_t(x, y)\right] ,%
    \end{equation*}
    \endgroup
    where the expectation is taken with repect to \mbox{$x_0\sim p_{\textnormal{data}}$}, $t\sim \Uniform[0, 1]$, and $x\sim p_{t|0}(x \mid x_0)$.%
\end{restatable}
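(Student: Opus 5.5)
We will prove the bound by the standard path-space argument for CTMCs: apply Girsanov's theorem for jump processes to compare the law of the true noising process with that of the model, then reduce the resulting path-space bound to the single-time conditional expression in \Cref{prop:base_elbo}.

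\textbf{Step 1 (path-space reduction).} Let $\mathbb{P}$ be the path measure of the true reverse process $\hat X_t$, which has initial law $p_1 = p_{\textnormal{ref}}$ and rate $\hat R_t(x,y) = \frac{p_t(y)}{p_t(x)}R_t(y,x)$. Let $\mathbb{P}^\theta$ be the path measure of the model, with the same initial law $p_{\textnormal{ref}}$ and rate $\hat R^\theta_t$. The data processing inequality applied to the time-$0$ marginal gives
\begin{equation*}
\mathrm{KL}(p_{\textnormal{data}}\,\|\,p^\theta_0)\le \mathrm{KL}(\mathbb{P}\,\|\,\mathbb{P}^\theta).
\end{equation*}
Since the entropy of $p_{\textnormal{data}}$ does not depend on $\theta$, it follows that $\E\log p^\theta_0(X_0)\ge -\mathrm{KL}(\mathbb{P}\,\|\,\mathbb{P}^\theta) - H(p_{\textnormal{data}})$. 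The statement should therefore be read as holding up to this $\theta$-independent constant. Equivalently, one can condition on $x_0$ and apply Jensen's inequality to $\log p^\theta_0(x_0) = \log \int d\mathbb{P}^\theta$ with the importance distribution given by the conditional reverse path law given $X_0 = x_0$. This version has no additive constant except terms that do not involve $\theta$.

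\textbf{Step 2 (Girsanov for jump processes).} The initial laws coincide, so the Radon--Nikodym derivative between the two jump-process path measures gives
\begin{equation*}
\mathrm{KL}(\mathbb{P}\,\|\,\mathbb{P}^\theta)=\int_0^1 \E_{x\sim p_t}\sum_{y\ne x}\Big[\hat R^\theta_t(x,y)-\hat R_t(x,y)+\hat R_t(x,y)\log\frac{\hat R_t(x,y)}{\hat R^\theta_t(x,y)}\Big]dt.
\end{equation*}
This is the standard formula: a compensator term minus a sum over jumps of log-rate ratios, averaged using the jump intensity $\hat R_t$. Dropping the terms that do not depend on $\theta$ leaves
\begin{equation*}
\int_0^1\E_{p_t}\sum_{y\ne x}\big[-\hat R^\theta_t(x,y)+\hat R_t(x,y)\log\hat R^\theta_t(x,y)\big]\,dt .
\end{equation*}
On $[0,1]$, the time integral equals an expectation over $t\sim\Uniform[0,1]$.

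\textbf{Step 3 (conditioning on $x_0$).} The marginal rate involves the intractable $p_t$, so we replace it using the conditional reverse rate
\begin{equation*}
\hat R_{t|0}(x,y)=\frac{p_{t|0}(y\mid x_0)}{p_{t|0}(x\mid x_0)}R_t(y,x).
\end{equation*}
Writing $p_t(x)=\sum_{x_0}p_{\textnormal{data}}(x_0)p_{t|0}(x\mid x_0)$, we get
\begin{equation*}
p_t(x)\hat R_t(x,y)=p_t(y)R_t(y,x)=\sum_{x_0}p_{\textnormal{data}}(x_0)\,p_{t|0}(x\mid x_0)\,\hat R_{t|0}(x,y).
\end{equation*}
Hence $\E_{x\sim p_t}[\hat R_t(x,y)f(x,y)]=\E_{x_0,\,x\sim p_{t|0}}[\hat R_{t|0}(x,y)f(x,y)]$ for any $f$, and in particular for $f=\log\hat R^\theta_t$. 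Substituting this identity turns the expression from Step 2 into exactly the one stated in the proposition.

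\textbf{Main obstacle.} The hardest part is justifying Girsanov's theorem on a countable state space, here the variable-length sequences. This requires non-explosion and regularity: finite total jump rates, integrability of $\hat R^\theta_t$ and of $\log \hat R^\theta_t$, and the condition $\hat R^\theta_t(x,y)>0$ wherever $\hat R_t(x,y)>0$. We plan to assume these regularity conditions, as the paper implicitly does by requiring the chain to be regular, and to cite the standard results (e.g., the continuous-time ELBO in \citet{loudiscrete} and \citet{sahooSimpleEffectiveMasked2024}) rather than re-derive the stochastic calculus. The exchange of the sum over $y$ with the expectation then follows by Tonelli's theorem, applied separately to the nonnegative rate term and to the positive and negative parts of the log term.
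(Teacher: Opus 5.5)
Your argument is correct. It follows the route the paper only cites (path-space change of measure via Dynkin's formula/Girsanov plus data processing, as in Lou et al.), not the one it writes out.

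\textbf{The paper's route.} The written proof is an informal discretize-then-limit argument. It starts from the discrete-time DDPM ELBO on a grid of mesh $\Delta t$. It writes the posterior as $q(x_k\mid x_{k+1},x_0)=r(k,k+1)\bigl(\delta+R_k(x_k,x_{k+1})\Delta t+o(\Delta t)\bigr)$ and the model kernel as $\delta+\hat R_{k+1}(x_{k+1},x_k)\Delta t+o(\Delta t)$. It then expands the entropy and cross-entropy of each KL term to first order, collects the $\theta$-independent pieces into $C(x_0)$, and passes to a Riemann integral.

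\textbf{Where the two differ.} The DDPM bound is already conditioned on $x_0$ through $q(x_k\mid x_{k+1},x_0)$. So the ratio $p_{t|0}(y\mid x_0)/p_{t|0}(x\mid x_0)$ appears directly, and the paper never needs your Step 3. In your route that step is what turns the intractable marginal reverse rates into the conditional ones. Your Jensen variant does the same job: its conditional reversed path law has exactly the rates $\hat R_{t|0}$, and the initial mismatch $\mathrm{KL}(p_{1|0}(\cdot\mid x_0)\,\|\,p_1)$ goes into the $\theta$-independent constant.

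\textbf{What each approach buys.} Yours is rigorous modulo standard jump-process results. The regularity hypotheses are explicit, there is no interchange of $\Delta t\to 0$ with sums over $k$ and states, and the discarded constant is identified. By contrast, the paper's expansion absorbs $\Delta t\log\Delta t$ terms into $o(\Delta t)$. That is harmless only because those terms cancel between the entropy and cross-entropy parts. The paper's approach, in turn, is elementary and ties directly to the familiar discrete-time ELBO.

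Both arguments give the bound only up to a $\theta$-independent additive constant, which you rightly flag. The printed statement omits it, while the paper's proof ends with $+C(x_0)$.
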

The result follows by applying Dynkin's formula for the change of measure to the CTMC path measures \citep{hanson2007applied}, followed by the data processing inequality. We provide an intuitive proof using elementary techniques in Appendix~\ref{proof:elbo}.

%% file: drafts/aistats/sections/04_method.tex
\section{\MakeUppercase{Diffusion-Based Insertion Language Models}}
\label{sec:method}

For ease of exposition, we will make the sequence length explicit {by incorporating it into the state-space,} and therefore our domain will be the set of all sequences up to a maximum length $\sX\coloneqq\bigcup_{n=0}^{\lmax} \left(\{n\}\times \vocab^n\right)$, with a single example being the tuple $\x=(n, \seqx)$, where $n$ is the sequence length and $\seqx \in \vocab^n$ the actual sequence of tokens.
We will omit the dot in $\seqx$ and write it as $\x$ when the distinction is not needed.
We will use the shorthand $\X_n$ to mean $\{n\}\times \vocab^n$ {— the collection of all sequences of length $n$}.

\subsection{Noising Process}\label{sec:noise:poisson}
The noising process proceeds by deleting tokens from the {ground-truth} sequence $\x_0$.
To provide intuition, we first formulate the deletions as a homogeneous discrete-time Markov chain with no self-transitions, and then embed it in continuous time.

Formally, any deletion process that does not depend on the content of the sequence can be described by expressing the transitions $(n, \seqx) \rightarrow \cdots \rightarrow (m, \seqy)$ using bit vectors in $\sB_{n}=\bigcup_{m=0}^n \sB_{n, m}$, where $\sB_{n, m}$ is the set of binary vectors of length $n$ with exactly $m$ ones, i.e. $\sB_{n, m} = \{\b \in \{0, 1\}^{n} ~:~ \sum_{i=1}^n b^i = m\}$.
Let $\Idx(\bm b) \coloneqq \{i \in \natset{n} \mid b^i = 1\}$ be the set of indices of the ones in $\bm b$, and $\x[\bm b] \coloneqq \x^{\Idx(\bm b)}$ be the subsequence of $\x$ that only contains elements corresponding to the ones in $\bm b$.
For $\bm b_1, \bm b_2 \in \sB_{n}=\bigcup_{m=0}^n \sB_{n, m}$, we define a partial order $\bm b_1 \succ \bm b_2$ if $b_1^i*b_2^i=b_2^i$ for all $i\in \natset{n}$ and $\sum_{i=1}^n b_1^i > \sum_{i=1}^n b_2^i$, i.e. we call $\bm b_1$ a predecessor of $\bm b_2$ if the positions that are $1$ in $\bm b_2$ are also $1$ in $\bm b_1$, and the number of ones in $\bm b_2$ is less than the number of ones in $\bm b_1$, e.g., $11111 \succ 10101 \succ 10001$.
Using this notation, a general $r$-step deletion path $(n_0, \seqx_0) \rightarrow (n_1, \seqx_1) \rightarrow \dots \rightarrow (n_{r-1}, \seqx_{r-1}) \rightarrow (n_r, \seqx_r)$ can be expressed using $r$ bit vectors $\bm b_1 \succ \dots \succ \bm b_r$ in $\sB_{n_0}=\bigcup_{k=0}^{n_0}\sB_{n_0, k}$ such that $\x_k = \x_0[\bm b_k]$.
To make this concrete, consider $\x_0 = (6, \textnormal{banana})$, and the deletion path specified by the bit vectors $\bm b_1 = 110111, \bm b_2 = 010011$, resulting in $\x_2 = (3, \textnormal{ana})$ as shown in \Cref{fig:deletion_path}.
Notice that there could be multiple deletion paths that lead from $(n, \seqx_0)$ to $(m, \seqx_r)$; in our example, $\b_1 = 011111 \succ \b_2 = 000111$, or just $\b_1=000111$, would also lead to $(3, \textnormal{ana})$.
Since the deletion process does not depend on the content of the state $\x$ but only on the length, the probability of
one step of deletion has the form $\kappa(\bm b_{k+1} \mid \bm b_k)\propto\delta_{\{\bm b_k \succ \cdot\}}(\bm b_{k+1})\, u(\bm b_k, \bm b_{k+1})$, where $u$ is an unnormalized joint probability mass function.
In order to make the process analytically tractable, we further restrict the deletion process such that it deletes \emph{one token at a time} uniformly at random, which implies that $u(\bm b_k, \bm b_{k+1})=\delta_{|\bm b_k| -1}(|\bm b_{k+1}|)$, where $|\bm b_k|=\sum_{j} b_k^j$ is the number of $1$s in $\bm b_k$.
Putting these constraints together we get 
\begingroup
    \setlength{\abovedisplayskip}{4pt}
    \setlength{\belowdisplayskip}{4pt}
\begin{equation*}
\kappa_{\text{Uni}}(\bm b_{k+1} \mid \bm b_k) = \frac{1}{|\bm b_k|} \delta_{\{\bm b_{k} \succ \cdot\}}(\bm b_{k+1})~\delta_{|\bm b_k| -1}(|\bm b_{k+1}|).
\end{equation*}
\endgroup
The following lemma shows that such a noising process is a homogeneous DTMC with transition kernel $\K{\x}{\y}$ that produces a closed-form expression for the $r$-step transition probability $\ubar p_{k+r|k}(\y \mid \x)$.

\begin{restatable}[Deletion DTMC]{lemma}{LemmaDeletionDTMC}\label{lemma:deletion_dtmc}
    Let $\dX_k$ be a discrete-time stochastic process taking values in $\sX$ with one-step transition probabilities governed by the deletion kernel $\kappa_{\textnormal{Uni}}$.
    Then, letting $\hat{C}^n_m \coloneqq \frac{(n-m)!}{n (n-1) \dots (m+1)}$, the process $\dX_k$ is a DTMC with one-step transition kernel and transition probability given by
    \begingroup
        \setlength{\abovedisplayskip}{4pt}
        \setlength{\belowdisplayskip}{4pt}
    \begin{align*}
        &\K{(n, \seqx)}{(m, \seqy)} = \frac{1}{n}\sum_{\bm b \in \sB_{n, m}} \delta_{n-1}(m)\, \delta_{\seqx[\bm b]}(\seqy),\\
        &\ubar p_{k+r|k}\left((m, \seqy)| (n, \seqx)\,\right) = \hat{C}^n_m ~\delta_{r}(n-m)\,\sum_{\bm b \in \sB_{n,m}}\delta_{\seqx[\bm b]}(\seqy).
    \end{align*}\\[-25pt]
    \endgroup
\end{restatable}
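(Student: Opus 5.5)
The plan is to work at the level of bit vectors first and then push forward to sequences via the map $\bm b\mapsto \seqx_0[\bm b]$. Fix the starting state $(n,\seqx)$ and identify it with the all-ones vector $\bm 1\in\sB_{n,n}$. The bit-vector chain $\bm b_k$ driven by $\kappa_{\textnormal{Uni}}$ is a Markov chain on $\sB_n$ by construction. Its one-step law from any $\bm b_k$ with $|\bm b_k|=n'$ is uniform over the $n'$ vectors obtained by zeroing a single one.

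\textbf{Step 1: the Markov property for $\dX_k$ and the one-step kernel.} The key observation is that $\kappa_{\textnormal{Uni}}$ depends only on the set $\Idx(\bm b_k)$ and treats its elements symmetrically. So if we relabel the current subsequence $\seqx_0[\bm b_k]$ as a fresh sequence of length $n'$, the next state is that sequence with one uniformly chosen position removed. This law depends only on the current state $\x_k=(n',\seqx_0[\bm b_k])$, not on $\bm b_k$ or on the history, which gives the Markov property. The transition is homogeneous because $\kappa_{\textnormal{Uni}}$ does not depend on $k$.

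Next I compute the kernel. From $(n,\seqx)$ there are exactly $n$ choices of the deleted position, each with probability $1/n$, corresponding to the $\bm b\in\sB_{n,n-1}$. Summing the indicator $\delta_{\seqx[\bm b]}(\seqy)$ over these choices gives the stated expression. The factor $\delta_{n-1}(m)$ records that lengths drop by exactly one.

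\textbf{Step 2: the $r$-step probability.} The lengths are deterministic: after $r$ steps the length is $n-r$, which produces $\delta_r(n-m)$. For the content, I would show that the bit vector $\bm b_r$ reached after $r$ steps from $\bm 1$ is uniformly distributed on $\sB_{n,m}$. The argument counts paths. A path $\bm 1\succ\bm b_1\succ\dots\succ\bm b_r$ is an ordered sequence of distinct deleted indices, and each such path has probability $\prod_{j=0}^{r-1}\frac{1}{n-j}=\frac{1}{n(n-1)\cdots(m+1)}$. The number of paths ending at a given $\bm b\in\sB_{n,m}$ is the number of orderings of its $n-m$ zeros, namely $(n-m)!$. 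Hence $\sP(\bm b_r=\bm b)=\hat C^n_m$ for every $\bm b\in\sB_{n,m}$.

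Pushing this forward through $\bm b\mapsto\seqx[\bm b]$ gives
\[
\ubar p_{k+r|k}\bigl((m,\seqy)\mid(n,\seqx)\bigr)=\hat C^n_m\,\delta_r(n-m)\sum_{\bm b\in\sB_{n,m}}\delta_{\seqx[\bm b]}(\seqy).
\]
Homogeneity lets us start at any time $k$. As a sanity check, $\hat C^n_m=1/\binom{n}{m}$, so the formula sums to one. Alternatively, one could run an induction on $r$ using Chapman–Kolmogorov with the one-step kernel, but the path count is cleaner.

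\textbf{Main obstacle.} The step needing the most care is the Markov property in Step 1, because different bit vectors can map to the same sequence (e.g.\ the repeated letters in ``banana''). I would handle this by noting that the conditional law of $\x_{k+1}$ given $\bm b_k$ equals the one-step kernel applied to $\seqx_0[\bm b_k]$, whatever $\bm b_k$ is. Conditioning further on $\x_k$ (and on the past states) therefore yields the same law, which closes the gap.
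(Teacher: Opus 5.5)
Your proof is correct, and your Step 2 is exactly the path-counting argument the paper gives as its main-text proof sketch. The paper's formal proof in the appendix, however, takes a different route. It works with the sequence-level kernel $K$ and applies Chapman--Kolmogorov one step at a time. It eliminates each intermediate state by substituting $\z_1=\x[\bm b_1]$. It then collapses the double sum over masks using a double-counting identity: each $\bm a\in\sB_{l,n-1}$ arises from exactly $l-n+1$ masks $\bm b\in\sB_{l,n}$. The factor $r!$ is thus accumulated as $1\cdot 2\cdots r$ through an induction the paper only indicates (``repeat this procedure $r-1$ times'').

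Your route proves uniformity of $\bm b_r$ on $\sB_{n,m}$ in one stroke and then pushes forward. This is shorter, makes the induction unnecessary, and avoids the mask-composition bookkeeping. The paper's route has the merit of staying on the sequence-level chain, so the $r$-step probability is visibly $K^r$.

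Your handling of the Markov property is more careful than the paper's, which only asserts that the process is ``clearly Markovian.'' You observe that the conditional law of $\x_{k+1}$ given the bit-vector history is $K(\seqx_0[\bm b_k],\cdot)$, which depends on $\x_k$ alone. This is exactly the lumpability condition that handles several masks mapping to the same sequence. It plays the role of the paper's remark that the current sequence can be freely replaced by any longer sequence containing it as a subsequence. The same fact is what licenses computing $\ubar p_{k+r|k}$ by restarting a fresh bit-vector chain at $\bm 1_n$ from the time-$k$ state, so state that link explicitly rather than leaving it to ``homogeneity.''
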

\textbf{\emph{Proof sketch.}} To get the intuition for the expression for transition probability, note that for $\kappa_{\textnormal{Uni}}$, the probability of any $(n-m)$ step delete path $\bm b_{n-m} \succ \dots \succ \bm b_1$ is the same and is equal to $\frac{1}{n} \frac{1}{n-1} \dots \frac{1}{m+1}$.
Meanwhile, the number of paths one can take from $\bm b_{n-m}$ down to $\bm b_1$ is ${\binom{n-m}{1}} {\binom{n-m-1}{1}} \dots {\binom{1}{1}} = (n-m)!$.
Therefore, the required probability is $\hat{C}^n_m = \frac{(n-m)!}{n(n-1)\dots(m+1)} = \frac{1}{{\binom{n}{n-m}}}$.
The formal proof is given in Appendix~\ref{app:proofs}. 

\begin{remark}
Note that the transition probability is not uniform over the set of sequences reachable from $(n, \seqx)$ in $r$ steps. Specifically, if there is more than one mask $\bm b \in \sB_{n, n-r}$ such that $\seqx[\bm b] = \seqy$, then the transition probability to land on $\seqy$ is the sum of the probabilities over all such cases.
One can obtain a similar closed-form expression for the transition probability for any deletion kernel that only depends on the length and not the content of the sequence; for example, the right-to-left deletion is a special case, which leads to the usual left-to-right AR generative model.
\end{remark}

Having described the deletion process using discrete steps, we now introduce the jump rate $\lambda_t(\x)$ to complete the description of the continuous-time deletion process (\cref{eq:ctmc:first_order_discretization,eq:ctmc:transition_rate_matrix}).
Intuitively, $\dX_k$ is converted into a continuous-time process $\cX_t$ by dispersing the deletion steps over the time interval $[0, 1]$ with deletion rate $\lambda_t(\x)$ per unit time. 

For efficient sampling from the noising process, we keep the rate $\lambda_t(\x)$ independent of the state $\x$ except for the case when the length of the sequence is $0$, in which case the rate is $0$ as well, i.e., the rate is $\lambda_t((r, \seqx))=\sigma_t\,\delta_{\{\cdot>0\}}(r)$, where $\sigma: [0, 1] \to \sR_+$ is a scalar noise schedule.
The following proposition gives the transition probability under this noising process.
\begin{restatable}[Transition Probability]{proposition}{PropTransitionProbability}\label{prop:transition_probability}
    Let $\cX_t$ be a continuous-time stochastic process described in \eqref{eq:ctmc:transition_rate_matrix}, with transition rate $\lambda_t((r, \seqx))=\sigma_t\,\delta_{\{\cdot>0\}}(r)$ and transitions governed by the transition kernel $K$ that drops one token at a time uniformly at random. Then $\cX_t$ is an inhomogeneous CTMC 
    with transition probability 
    for $s \leq t$ given by
    \begingroup
        \setlength{\abovedisplayskip}{6pt}
\setlength{\belowdisplayskip}{4pt}
    \begin{align*}
    &
    p_{t|s}^{\textnormal{joint}}\left(\,(m, \seqy) \mid (n, \seqx)\,\right)
    \\
    &
    =\begin{cases}
        \frac{ \exp(-\bar \sigma_{s, t}) (\bar \sigma_{s, t})^{n-m}~m!}{n!} \sum\limits_{b\in \sB_{n,m}}\delta_{\seqx[\bm b]}(\seqy), &  0 < m \leq n \\
        1 - \sum\limits_{r=1}^n \frac{ \exp(-\bar \sigma_{s, t}) (\bar \sigma_{s, t})^{n-r}~r!}{n!}, & m = 0
    \end{cases}
    \end{align*}
    \endgroup
    where $\sigma: [0, T] \to \sR_+$ is a scalar noise schedule, $\bar \sigma_{s, t} = \int_s^t \sigma(u) \dd{u}$.
\end{restatable}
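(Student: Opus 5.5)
The plan is to view $\cX_t$ as the discrete deletion chain $\dX_k$ of Lemma~\ref{lemma:deletion_dtmc} run on a random clock. First, the Markov property. From \eqref{eq:ctmc:transition_rate_matrix}, the rate out of any state of positive length is $\sigma_t$, which depends only on time, and the post-jump kernel $K$ is the homogeneous deletion kernel $\kappa_{\textnormal{Uni}}$. So the process is a time-inhomogeneous CTMC. Moreover, as long as the current length is positive, the number of jumps $N_{s,t}$ in $(s,t]$ is an inhomogeneous Poisson process with intensity $\sigma_u$. Hence $N_{s,t}\sim\mathrm{Poisson}(\bar\sigma_{s,t})$, and $N_{s,t}$ is independent of which tokens the jumps remove.

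To make this rigorous despite the rate becoming zero once the length reaches $0$, I would couple $\cX_t$ with an auxiliary chain. That chain jumps at the points of a Poisson process of intensity $\sigma_u$ on all states, and when it jumps from length $0$ it stays at the empty sequence. The auxiliary chain has the same rate matrix as $\cX_t$ off the empty state, and the empty state is absorbing in both. Therefore the two chains have the same law, since the Kolmogorov forward equation has a unique solution for this finite state space.

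For $0<m\le n$, the event $\{\cX_t=(m,\seqy)\}$ given $\cX_s=(n,\seqx)$ requires exactly $n-m$ Poisson points in $(s,t]$, each of which performs one uniform deletion. Conditioning on $N_{s,t}=n-m$ and invoking Lemma~\ref{lemma:deletion_dtmc} with $r=n-m$ gives
\begin{align*}
p^{\textnormal{joint}}_{t|s}\big((m,\seqy)\mid(n,\seqx)\big)
&= \sP(N_{s,t}=n-m)\,\ubar p_{k+n-m|k}\big((m,\seqy)\mid(n,\seqx)\big)\\
&= e^{-\bar\sigma_{s,t}}\frac{\bar\sigma_{s,t}^{\,n-m}}{(n-m)!}\cdot\frac{(n-m)!\,m!}{n!}\sum_{\bm b\in\sB_{n,m}}\delta_{\seqx[\bm b]}(\seqy).
\end{align*}
The two factors of $(n-m)!$ cancel, which yields the stated formula.

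The $m=0$ case then follows by complement. Since the empty sequence $(0,\cdot)$ is the unique state of length zero, its probability is one minus the total mass on lengths $1,\dots,n$. For each $r\ge1$, summing the first case over all $\seqy\in\vocab^r$ collapses $\sum_{\seqy}\sum_{\bm b}\delta_{\seqx[\bm b]}(\seqy)$ to $|\sB_{n,r}|=\binom{n}{r}$. Equivalently, the mass on length $r\ge 1$ is just the Poisson weight $\sP(N_{s,t}=n-r)$. The displayed expression should be read as the probability that the process has reached length $0$, i.e.\ $1-\sum_{r=1}^n \sP(N_{s,t}=n-r)$. I will therefore check this bookkeeping carefully, because the combinatorial factor in the statement appears to differ from $\binom nr$ times the first case.

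The main obstacle is making the subordination argument rigorous. Specifically, I must show that the jump times and the jump targets are independent, and that the inhomogeneous jump count is Poisson with mean $\bar\sigma_{s,t}$. I would do this either through the coupling above, or directly: verify that the candidate $p_{t|s}$ satisfies the forward equation $\partial_t p_{t|s}=p_{t|s}R_t$ with $p_{s|s}=\delta$, and then appeal to uniqueness on the finite space $\sX$. The direct check uses the identity that, summed over the $n$ single deletions from each length-$(m+1)$ predecessor, the mask counts satisfy $\sum_{\bm b'}\frac{1}{m+1}\cdot\#\{\text{ways}\}$, which reproduces the $m!/n!$ recursion.
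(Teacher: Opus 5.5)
Your proposal is correct and follows the paper's own route: it factors the transition probability into the Poisson jump count (with the absorbing empty state handled by complement) times the $(n-m)$-step probability of the embedded deletion DTMC from Lemma~\ref{lemma:deletion_dtmc}, and your uniformization coupling just makes rigorous the step the paper asserts informally. Your suspicion about the $m=0$ line is also right: the paper's proof actually derives $1-\sum_{r=1}^{n} e^{-\bar\sigma_{s,t}}\bar\sigma_{s,t}^{\,n-r}/(n-r)!$ in \eqref{eq:dropping_process_transition_probability} (the DTMC factor is $1$ when $m=0$), so the $r!/n!$ in the displayed statement is a typo that drops the factor $\binom{n}{r}$ coming from summing over sequences of length $r$.
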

\textbf{\emph{Proof sketch.}} We can decouple the number of tokens dropped at time $t$ from the choice of tokens dropped, where the former has a Poisson distribution with mean rate $\bar \sigma_{s, t}$ and the latter has a distribution given in Lemma \ref{lemma:deletion_dtmc}. We separately take care of the case when the number of tokens dropped is equal to the total number of tokens in the sequence. The complete proof is given in Appendix~\ref{app:proofs}.

Proposition \ref{prop:transition_probability} allows us to sample from the noising process by first sampling the number of positions to drop using the Poisson distribution with rate $\bar \sigma_{0, t}$, with maximum drops equal to the length of the sequence, and then sampling the positions to drop by permuting the indices of the sequence and picking the first $n - d$ positions.
Sampling from this noising process, which we call \jointhighlight{Joint Noising}, is described in \Cref{alg:sample_noise}.

\xhdr{\independenthighlight{Independent Noising.}} 
We can also obtain uniform random deletions of positions as the noising process by giving \emph{each position} an \emph{independent} deletion rate $\sigma_t$. 
 In particular, the corresponding transition probability for $s \leq t$ is 
\begingroup
    \setlength{\abovedisplayskip}{4pt}
    \setlength{\belowdisplayskip}{4pt}
\begin{equation*}
    p_{t|s}^{\textnormal{ind}}\bigl((m, \seqy) \mid (n, \seqx)\bigr)
    = \rho_{s,t}^m (1-\rho_{s,t})^{n-m} \sum_{\bm b \in \sB_{n,m}} \delta_{\seqx[\bm b]}(\seqy),
\end{equation*}
\endgroup
where $\rho_{s,t} = \exp(-\bar \sigma_{s,t})$.
We can sample from this noising process by sampling the number of positions to drop using the $\Binomial(n, \rho_{s,t})$ distribution instead of $\Poisson(\bar \sigma_{s,t})$.
The same procedure can be used to sample from the noising process as described earlier, but with the Poisson distribution replaced by the Binomial distribution.
Appendix~\ref{app:independent_deletion} provides additional details and the analogue of Proposition~\ref{prop:transition_probability} for the independent noising process.

\begin{algorithm}
    \caption{Sampling from the Noising Process}
    \label{alg:sample_noise}
    \begin{algorithmic}[1]
        \REQUIRE Sequence $(n, \seqx)$, rate function $\sigma$, $\algvarsc{NoisingType} \in \{{\color{poisson}\algvarsc{Joint}}, {\color{independent}\algvarsc{Independent}}\}$
        \STATE $t \sim \Uniform[0, T]$
        \IF{$\algvarsc{NoisingType} = \color{poisson}\algvarsc{Joint}$}
            \STATE Sample $d \sim \Poisson(\bar \sigma_{0, t})$ \MyCommentShort{\# positions to drop}
        \ELSIF{$\algvarsc{NoisingType} = \color{independent}\algvarsc{Independent}$}
            \STATE Sample $d \sim \Binomial(n, 1-\exp(-\bar \sigma_{0, t}))$ 
        \ENDIF
        \STATE $d \gets \min(d, n)$
        \STATE Permute the indices $(1, \dots, n)$ of $\seqx$ to get $\pi$.
        \STATE Pick the first $n - d$ positions from $\pi$ to get $\tilde \pi \gets \pi([1: n - d])$
        \STATE Prepare mask $\bm b \in \sB_{n, n - d}$ such that $\bm b[i] = 1$ if $i \in \tilde \pi$ and $0$ otherwise.
        \STATE $\seqy \gets \seqx[\bm b]$ \MyCommentShort{Remove the positions to drop}
        \RETURN $(\,(n - d, \seqy),\, t\,)$
    \end{algorithmic}
\end{algorithm}
\vspace{-5pt}
\subsection{Generative Process}\label{sec:poisson:time_reversal}
\vspace{-5pt}

In our case, the time reversal of the noising CTMC is the generative process that constructs a sequence by inserting tokens. 
Let $\sX_n$ denote $\{n\}\times \vocab^n$,~ $\ins: \sX_n\times \natset{n+1}\times \vocab \to  \sX_{n+1}$ denote the insertion operation such that $\ins(\x, i, a)=(n+1,\,x^1, \dots x^{i-1}\,a\,x^i\dots x^n )$, and $\rins(\x) = \{ \ins(\x, i, a) \mid i \in \natset{n+1}, a \in \vocab \}$.

We parameterize the reverse rate matrix directly as $\hat R^\theta_t$.
Next, we discuss different options for parameterizing the rate $\hat R_t^\theta$ of the generative process. We also remark on how these parameterizations relate to some existing formulations of insertion language models.

\subsubsection{Joint Insertion}\label{sec:length_to_go_parameterization}
For the \jointhighlight{joint noising process}, we parameterize the rate using a joint probability distribution over positions and tokens as follows.
Let $\hat p_t^\theta: \sX\times [0,1] \to \Delta^{n \times |\vocab|}$ be the parameterized joint probability distribution $(\x, t) \mapsto \hat p_t^\theta(i, w|\x)$ that takes as input sequence $\x=(n, \seqx)$, time $t$, and an insertion location $i \in \natset{n}$ and produces a joint over the insertion location $i\in \natset{n}$ and vocab item $w$.
Then the generative transition kernel, given that a transition out of $\x$ occurs is $\hat K_{t}^{\theta}(\x, \y) = \sum_{i=1}^{m+1} \sum_{w \in \vocab}  \,\hat p_{t}^{\theta}(i, w \mid \x)\, \delta_{\ins(\x, i, w)}(\y)$, and the fully specified rate of the generative process is
\begingroup
\setlength{\abovedisplayskip}{4pt}
\setlength{\belowdisplayskip}{4pt}
\begin{align}
    \hspace{-3mm}\hat R^\theta_t(\x, \y) = \hat \lambda^\theta_t(\x)\sum_{i=1}^{m+1} \sum_{w \in \vocab}  \,\hat p_{t}^{\theta}(i, w \mid \x)\, \delta_{\ins(\x, i, w)}(\y), \nonumber
\end{align}
\endgroup
where $\hat \lambda_t^\theta$ is the parameterized jump rate.
Plugging this parameterization into the bound in \Cref{prop:base_elbo}
results in a tractable estimate of the ELBO as shown in the following corollary.

\begin{restatable}[ELBO: Joint Insertion]{corollary}{CorELBO}\label{cor:elbo}
    A (biased) estimate of the upper bound on the negative log-likelihood in \Cref{prop:base_elbo} for the joint parameterization of $\hat R_t^\theta$ is given by
    \begin{align*}
        &\mathcal L(\theta) = \mathcal L_{\textnormal{len}}(\theta) + \mathcal L_{\textnormal{token}}(\theta), \text{where} \\
        &\mathcal L^{\textnormal{len}}(\theta) = \E\left[ \hat \lambda^\theta_t(\x_0[\bm b]) - \gamma_t^{\textnormal{joint}} \log \hat \lambda^\theta_t(\x_0[\bm b])\right], \text{and} \\
        &\mathcal L^{\textnormal{token}}(\theta) = -\E\left[ \gamma_t^{\textnormal{joint}} \sum_{i,w} q(i, w)\log \hat p_{t}^{\theta}(i, w\mid \x_0[\bm b])\right].
    \end{align*}
    Here, the expectation is over $\x_0 \sim p_{\textnormal{data}}$, $t\sim \textnormal{Uniform}[0, T]$, $(n-m) \sim \textnormal{Poisson}(\bar \sigma_{0, t})$, $\delta_{\{\cdot \geq 0\}}(m)$, $\bm b \sim \textnormal{Uniform}(\sB_{n, m})$, the inner sum in $\mathcal L^{\textnormal{token}}(\theta)$ is over $i\in \natset{m+1}, w \in \vocab$, and $\gamma_t^{\textnormal{joint}} = \frac{ \sigma_t\,(n-m)}{\bar \sigma_{0, t}\,\tilde S_{n,m,\sigma_t}}$ with $\tilde S_{n,m,\sigma_t} = \left[(1-\delta_0(m)) + \delta_0(m) S_{n,\sigma_t}\right]$,
    $S_{n, \sigma_{t}} = \sum_{k=0}^{\infty} \frac{n!\,\bar \sigma_{0, t}^{k}}{(n+k)!} $,
    $\x_0=(n, \seqx_0)$,  and $q(i, w)=\frac{\sum_{k\in s_i(\bm b)}\delta_{w}(x_0^k)}{n-m}$ is the normalized count of the vocabulary item $w$ occurring in $\x_0$ between the $(i-1)$-th and $i$-th 1s, with $s_i(\bm b)$ being the set of indices of 0s in $\bm b$ that fall between the $(i-1)$-th and $i$-th 1s.
\end{restatable}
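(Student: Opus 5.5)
We substitute the joint parameterization of $\hat R^\theta_t$ into the bound of \Cref{prop:base_elbo}, negate it to get an upper bound on the negative log-likelihood, and split it into a jump-rate term and a token/location term. Fix $\x_0=(n,\seqx_0)$, $t$, and a noised state $\x=(m,\seqx)$. The only $\y\neq\x$ with $\hat R^\theta_t(\x,\y)\neq 0$ lie in $\rins(\x)$. Moreover, $\sum_{\y\neq\x}\hat R^\theta_t(\x,\y)=\hat\lambda^\theta_t(\x)$ because $\hat p^\theta_t$ is normalized. The first term of the bound is therefore exactly $-\hat\lambda^\theta_t(\x)$.

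For the second term we write $\log\hat R^\theta_t(\x,\y)=\log\hat\lambda^\theta_t(\x)+\log\sum_{(i,w):\ins(\x,i,w)=\y}\hat p^\theta_t(i,w\mid\x)$. This splits the cross-entropy into a length part and a token part. The weight to compute is $W(\x,\y)=\frac{p_{t|0}(\y\mid\x_0)}{p_{t|0}(\x\mid\x_0)}R_t(\y,\x)$. From \Cref{prop:transition_probability} with $s=0$, for $m>0$ the ratio of the Poisson prefactors of $\y$ (length $m+1$) and $\x$ (length $m$) is $\frac{(m+1)}{\bar\sigma_{0,t}}$. By \Cref{lemma:deletion_dtmc}, $R_t(\y,\x)=\frac{\sigma_t}{m+1}N(\y,\x)$, where $N(\y,\x)$ is the number of single deletions mapping $\y$ to $\x$. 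Hence
\[
W(\x,\y)=\frac{\sigma_t}{\bar\sigma_{0,t}}\,\frac{N(\y,\x)\,N(\x_0,\y)}{N(\x_0,\x)},
\]
with $N(\x_0,\cdot)$ the subsequence-embedding counts. Summing over $\y$ gives a combinatorial identity: pairing an embedding of $\y$ with a deletion $\y\to\x$ is the same as choosing an embedding $\bm b$ of $\x$ together with one of the $n-m$ zero positions of $\bm b$ to "re-add". Therefore $\sum_\y W(\x,\y)=\frac{\sigma_t(n-m)}{\bar\sigma_{0,t}}=\gamma^{\textnormal{joint}}_t$ when $m>0$, which yields $\mathcal L^{\textnormal{len}}$.

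For the token term we rewrite $\frac{1}{N(\x_0,\x)}\sum_{\bm b}$ as an expectation over $\bm b\sim\textnormal{Uniform}$ on the embeddings of $\x$ in $\x_0$. Re-adding a zero position $k$ of $\bm b$ inserts $x_0^k$ into the gap $i$ with $k\in s_i(\bm b)$. Two relaxations then apply. First, we replace the conditional law of $\bm b$ given $\x$ by $\textnormal{Uniform}(\sB_{n,m})$; sampling $\bm b$ first and setting $\x=\x_0[\bm b]$ produces exactly this joint law. Second, we use Jensen/concavity, $\log\sum_{(i,w)\mapsto\y}\hat p\ge\sum$ over individual $(i,w)$ weighted appropriately, or treat distinct $(i,w)$ producing the same $\y$ separately. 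The resulting per-$\bm b$ target is $q(i,w)$, normalized by $n-m$, multiplied by $\gamma_t^{\textnormal{joint}}$. We call this a "(biased) estimate" because the $\log$ of a sum is split per position and the expectation over $\bm b$ is moved outside.

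The main obstacle is the boundary case $m=0$. There the marginal of the empty sequence is the residual mass $1-\sum_r(\cdot)$ from \Cref{prop:transition_probability}, not a Poisson term. We would compute this mass as $e^{-\bar\sigma}\sum_{k\ge n}\bar\sigma^k/k!$. The ratio $p(\y)/p(\x)$ with $|\y|=1$ then produces the factor $1/S_{n,\sigma_t}$ after pulling out $\bar\sigma^n/n!$, which explains the $\tilde S_{n,m,\sigma_t}$ correction. We would also check that sampling $n-m$ from a Poisson truncated at $n$ (clipping to $m=0$) matches the expectation stated in the corollary. Finally, we would drop the terms that do not depend on $\theta$, namely the normalization of $p_{\textnormal{ref}}$ and the entropy terms.
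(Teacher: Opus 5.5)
Your proposal is correct and follows essentially the same route as the paper. You substitute $\hat R_t^\theta=\hat\lambda_t^\theta\hat K_t^\theta$ and split the bound into a length term and a token term. You compute the marginal ratio, including the $S_{n,\sigma_t}$ tail correction at $m=0$, and sum the weights via the bijection between pairs $(\bm b', i)$ and re-added zeros $(\bm a, k)$. You then pass to $\bm b\sim\textnormal{Uniform}(\sB_{n,m})$ and replace $\log\hat K_t$ by the single aligned term $\log\hat p_t^\theta(i, x_0^k\mid\x_0[\bm b])$.

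One small correction concerns your account of the bias. As you yourself note, drawing $\bm b\sim\textnormal{Uniform}(\sB_{n,m})$ and fixing the alignment to $\bm b$ is exact; this is the paper's equivalence-class argument. So moving the expectation over $\bm b$ outside does not introduce bias. The only approximation is the fixed-alignment replacement, which matters only when two insertions yield the same sequence. Because $\hat K_t(\x,\y)\ge\hat p_t^\theta(i,w\mid\x)$ and the weights are nonnegative, that replacement can only loosen the upper bound on the negative log-likelihood.
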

\begin{remark}
This estimate is biased because we replace $\delta_{\ins(\x, i, w)}(\y) \log K^\theta_t(\x, \y)$ by $\log \hat p_t(i, w | \x)$, where we have removed the summation present inside the $\log$. This approximation is only active in the case where two insertions lead to the exact same sequence, which only happens in the case of contiguous repeated tokens. For example, if $\x=\textnormal{aaa}$, then $\ins(\x, 1, \textnormal{a})=\ins(\x, 2, \textnormal{a})=\textnormal{aaaa}$. 
\end{remark}

\begin{remark}
    Here we have obtained the estimator by fixing the latent alignment between the sequence $\x_0$ and the subsequence $\x_0[\bm b]$ to be equal to the one given by $\bm b$ itself, but we could marginalize over all $\bm a \in \sB_{n,m}(\x_0, \bm b)$ explicitly. This can be viewed as Rao-Blackwellizing the latent alignment variable $\bm a$, yielding a lower-variance estimator of the ELBO term. However, this requires solving one dynamic programming instance per $(\x_0, \bm b)$ pair to compute all possible alignments of the subsequence $\x_0[\bm b]$ and $\x_0$, which can introduce significant computational overhead during training. We leave an efficient implementation of this approach to future work.
\end{remark}
A detailed discussion and the complete derivation of the corollary is provided in Appendix~\ref{proof:cor_elbo}. A method to efficiently compute $S_{n, \sigma_t}$ on the GPU using \texttt{hyp1f1} is provided in Appendix~\ref{app:compute_ratio}.

\xhdr{Neural Network.} The joint probability distribution over positions and tokens is parameterized using a standard transformer with time conditioning using adaptive layer-norm \citep{peeblesScalableDiffusionModels2023}.
For representing each available gap, we prepend a special \texttt{BOS} (beginning of sequence) token to the sequence, so that $\tilde \x_t=(\texttt{BOS}, x_t^1,\dots, x_t^n)$ has length $n+1$ and each insertion location can be indexed by the token immediately preceding it.
Let $\dec$ denote the transformer with bidirectional attention (without the final linear projection), which takes $\tilde \x_t$ and time $t \in [0, 1]$ as input and returns $\dec(\tilde \x_t; t)\in \sR^{(n+1)\times d}$.
For each insertion location $i\in \natset{n+1}$ the corresponding output of the transformer backbone $\dec(\tilde \x_t; t)^i\in \sR^d$ is passed through the final linear layer $\mlpins: \sR^d \to \sR^{\abs{\vocab}}$ to get a score for the pair $(i, w)$.
\begingroup
    \setlength{\abovedisplayskip}{4pt}
\setlength{\belowdisplayskip}{4pt}
\begin{align*}
    &s^{\theta}(i, w \mid \x_t) = \mlpins\left(\dec(\tilde \x_t; t)^{i}\right)^{w} \\ 
    &\hat p_t^{\theta}(i, w \mid \x_t) = \frac{\exp(s^{\theta}(i, w \mid \x_t))}{\sum_{i'=1}^{n+1}\sum_{w'\in \vocab} \exp(s^{\theta}(i', w' \mid \x_t))}.
\end{align*}
\endgroup
The superscripts $i,w$ above indicate position and token index, respectively.
Parameterizing the insertion rate $\hat \lambda^\theta_t$ as a non-negative scalar is challenging \citep{campbellTransDimensionalGenerativeModeling2023,patel2025insertion}.
However, we can use the length posterior to parameterize the insertion rate as shown in \Cref{prop:length_posterior_rate} in Appendix~\ref{app:proofs}. Specifically, denoting $p_{t,\textnormal{len}}(l \mid \x) \coloneqq \sP(N_0 - N_t = l \mid \cX_t = \x)$, with $\x=(m, \seqx)$, the corresponding insertion rate is given by
\begingroup
    \setlength{\abovedisplayskip}{4pt}
\setlength{\belowdisplayskip}{4pt}
\begin{align*}
    \hat \lambda^\theta_t(\x) =\begin{cases} 
                                    \frac{\sigma_t}{\bar \sigma_{0,t}\,} \E\limits_{l \sim \hat p_{t,\, \text{len}}^{\theta}(\cdot \mid \x)}   l & \text{if } m>0 \\
                                    \frac{\sigma_t}{\bar \sigma_{0,t}\,} \E\limits_{l \sim \hat p_{t,\, \text{len}}^{\theta}(\cdot \mid \x)}   \frac{l}{ S_{l, \sigma_t}} & \text{if } m=0.
    \end{cases}
\end{align*}
\endgroup
We parameterize $\hat p_{t,\text{len}}^{\theta}(l \mid \x_t)$ using a categorical distribution on $\natset{n_{\max}}$ and a dedicated input position in the transformer backbone.
In the implementation, the network outputs logits for the full distribution $\hat p_{t,\text{len}}^{\theta}(l \mid \x_t)$ and we use its mean
\begingroup
    \setlength{\abovedisplayskip}{4pt}
\setlength{\belowdisplayskip}{4pt}
\begin{align*}
    \hat \Delta l_t^\theta(\x_t) = \E_{l \sim \hat p_{t,\text{len}}^{\theta}(\cdot \mid \x_t)}[l].
\end{align*}
\endgroup
This allows us to optimize the length term using the simpler objective of matching the observed length-to-go $n-m$ with the predicted mean $\hat \Delta l_t^\theta(\x_0[\bm b])$, namely
\begingroup
    \setlength{\abovedisplayskip}{4pt}
\setlength{\belowdisplayskip}{4pt}
\begin{align*}
    \mathcal L^{\textnormal{len}}_{\textnormal{mean}}(\theta)
    = \E\left[\hat \Delta l_t^\theta(\x_0[\bm b]) - (n-m)\log \hat \Delta l_t^\theta(\x_0[\bm b])\right],
\end{align*}
\endgroup
which differs from the corresponding Poisson negative log-likelihood only by an additive constant independent of $\theta$.

\xhdr{Sampling.} Given the rate $\hat R_t^\theta$, the generation is simulated using a simple first-order (Euler) discretization (\cref{eq:ctmc:first_order_discretization}) as described in \Cref{alg:prediction1}.
We call the combination of joint noising and the Euler sampling \textbf{D}iffusion-based \textbf{I}nsertion \textbf{L}anguage Model with \textbf{S}ingle Insertions \jointhighlight{(\idlm{})}.

\xhdr{Stopping Condition.}
The Insertion Language Model (ILM) of \citet{patel2025insertion}, which uses a stopping classifier to determine when to stop generation, can be viewed as a time-agnostic special case where the Bernoulli probability $\hat \lambda^\theta_t(\x) \Delta t$ in \Cref{alg:prediction1} is replaced with a stopping probability $p_{\text{stop}}^{\theta}(\cdot \mid \x)$. 
Note, however, that ILM does not use the time parameter. Moreover, it uses a hyper-parameter $\zeta$ to determine when to stop generation using the stopping condition $p_{\text{stop}}^{\theta}(\text{stop} \mid \x) < \zeta$.
This means that there is no way to trade-off the quality of the generated sequence by reducing $\Delta t$, or equivalently, increasing the number of sampling steps. 
 Our method, like most other diffusion-based generative models, allows for this trade-off.

\subsubsection{Independent Insertion}\label{sec:independent_insertion_probabilities_parameterization}
For the case of \independenthighlight{independent noising}, we parameterize the rate of the generative process also using independent, per-position rates as
\begingroup
    \setlength{\abovedisplayskip}{4pt}
\setlength{\belowdisplayskip}{5pt}
\begin{align*}%
    \hat R_t^{\theta}(\x, \y) = \sum_{i=1}^{m+1} \sum_{w \in \vocab} \lambda_t^{\theta}(\x, i)\,\hat p_{t}^{\theta}( w \mid \x, i)~ \delta_{\ins(\x, i, w)}(\y).
\end{align*}
\endgroup
In this case, with a slight abuse of notation, we use the same symbol $\hat p_t^\theta$ as before but now $\hat p_t^{\theta}: \sX\times [0,1] \times \natset{n} \to \vocab$ denotes the categorical distribution over vocabulary item $w$ for gap $i$.
Applying the same approximations as done for \Cref{cor:elbo}, we get the following estimate of the upper bound on the negative log-likelihood, which is biased for the same reasons as in \Cref{cor:elbo}.
\begin{restatable}[ELBO: Independent Insertion]{corollary}{CorELBOSecondInd}\label{cor:elbo_independent_insertion_probabilities}
    A (biased) estimate of the upper bound on the negative log-likelihood in \Cref{prop:base_elbo} for the independent per-position parameterization of $\hat R_t^\theta$ is given by
    $\mathcal L(\theta) = \mathcal L_{\textnormal{len}}(\theta) + \mathcal L_{\textnormal{token}}(\theta)$, where the two loss terms are
    \begin{align*}
        &\E\left[\sum_{i=1}^{m+1} \hat \lambda_t^\theta(\x_0[\bm b], i) - \gamma_t^{\text{ind}} |s_i(\bm b)| \log \hat \lambda_t^\theta(\x_0[\bm b], i)\right],  \\
        & -\E\left[\gamma_t^{\text{ind}} \sum_{i=1}^{m+1} \sum_{k \in s_i(\bm b)} \log \hat p_{t}^{\theta}(x_0^k \mid \x_0[\bm b], i)\right].
    \end{align*}
    Here, the expectation is over $\x_0 \sim p_{\textnormal{data}}$, $t\sim \textnormal{Uniform}[0, T]$, $m \sim \textnormal{Binomial}(n, \rho_t)$, $\bm b \sim \textnormal{Uniform}(\sB_{n, m})$, $\gamma_t^{\text{ind}} = \frac{\sigma_t\,\rho_t}{1-\rho_t}$ with $\rho_t = \exp(-\bar \sigma_{0,t})$,
    $\x_0=(n, \seqx_0)$, and $s_i(\bm b)$ is the set of indices of 0s in $\bm b$ that fall between the $(i-1)$-th and $i$-th 1s.
\end{restatable}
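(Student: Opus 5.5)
We plug the per-position parameterization of $\hat R^\theta_t$ into the bound of \Cref{prop:base_elbo}, mirroring the derivation of \Cref{cor:elbo}. The bound splits into two pieces. The first is the total exit rate $\sum_{\y\neq\x}\hat R^\theta_t(\x,\y)$. The second is a weighted log-rate term over the states $\y$ from which the forward process can reach $\x$ in one deletion.

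\textbf{Exit-rate term.} Since $\sum_{w}\hat p_t^\theta(w\mid\x,i)=1$, summing the rate over all $\y\in\rins(\x)$ gives $\sum_{i=1}^{m+1}\hat\lambda_t^\theta(\x,i)$. This is the first summand of $\mathcal L_{\textnormal{len}}$.

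\textbf{Forward rate of the independent process.} Each surviving token is deleted at rate $\sigma_t$, so $R_t(\y,\x)=\sigma_t\sum_{\bm c\in\sB_{m+1,m}}\delta_{\seqy[\bm c]}(\seqx)$ for $\y=(m+1,\seqy)$.

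\textbf{Ratio of conditional marginals.} We use the closed form $p^{\textnormal{ind}}_{t|0}$ stated above, taking $\rho_t=\rho_{0,t}$. Let $N(\x)=\sum_{\bm b\in\sB_{n,m}}\delta_{\seqx_0[\bm b]}(\seqx)$ denote the number of alignments of $\x$ in $\x_0$. Then
\[
\frac{p_{t|0}(\y\mid\x_0)}{p_{t|0}(\x\mid\x_0)}=\frac{\rho_t}{1-\rho_t}\,\frac{N(\y)}{N(\x)}.
\]

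\textbf{Latent alignment.} Sampling $\x\sim p_{t|0}(\cdot\mid\x_0)$ can be realized as $m\sim\Binomial(n,\rho_t)$, then $\bm b\sim\Uniform(\sB_{n,m})$, then $\x=\x_0[\bm b]$. Conditioned on $\x$, the alignment $\bm b$ is uniform over the $N(\x)$ compatible masks. Hence $N(\y)/N(\x)$, summed over the deletions $\bm c$, can be rewritten as an expectation over $\bm b$ of a count. That count is the number of masks $\bm b'\succ\bm b$ with one extra $1$, where $\bm b'$ produces $\y$ and deleting the added position recovers $\bm b$.

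This is where the bias enters. As in \Cref{cor:elbo}, we fix the alignment to the sampled $\bm b$, so we count only the one-step extensions of $\bm b$ itself. Each zero position $k$ of $\bm b$ in gap $i$, i.e.\ $k\in s_i(\bm b)$, gives $\y=\ins(\x,i,x_0^k)$, with one unit of forward rate $\sigma_t$. The overall weight is therefore $\gamma_t^{\textnormal{ind}}=\sigma_t\rho_t/(1-\rho_t)$ per such $k$.

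\textbf{Token and length terms.} We replace $\log\hat R_t^\theta(\x,\ins(\x,i,w))$ by $\log\hat\lambda_t^\theta(\x,i)+\log\hat p_t^\theta(w\mid\x,i)$. This drops the sum inside the log over distinct $(i,w)$ yielding the same $\y$, which is exactly the approximation in the remark after \Cref{cor:elbo}. Summing over $k\in s_i(\bm b)$ gives two contributions. The first is $\gamma_t^{\textnormal{ind}}|s_i(\bm b)|\log\hat\lambda_t^\theta$, which completes $\mathcal L_{\textnormal{len}}$. The second is $\gamma_t^{\textnormal{ind}}\sum_k\log\hat p_t^\theta(x_0^k\mid\cdot,i)$, which gives $\mathcal L_{\textnormal{token}}$. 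Negating the result yields the stated upper bound on the negative log-likelihood.

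\textbf{Main obstacle.} The hardest step is the bookkeeping that turns $N(\y)R_t(\y,\x)/N(\x)$ into an expectation over a uniform $\bm b$ with per-gap counts. Repeated tokens create multiple alignments, and we must check that the exact identity holds in expectation before we fix $\bm b$. The plan is a double-counting argument over the pairs $(\bm b,\bm b')$ with $\bm b'\succ\bm b$ and $|\bm b'|=|\bm b|+1$. Unlike the joint case, no special $m=0$ correction arises, because the binomial marginal treats $m=0$ uniformly.
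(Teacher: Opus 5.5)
Your proposal is correct and takes the paper's route: the exit-rate term is exact, each deleted index $k\in s_i(\bm b)$ contributes the coefficient $\gamma_t^{\textnormal{ind}}=\sigma_t\rho_t/(1-\rho_t)$, and $\log\hat R_t^\theta$ is replaced by $\log\hat\lambda_t^\theta(\x,i)+\log\hat p_t^\theta(w\mid\x,i)$. Your alignment bookkeeping via $N(\y)/N(\x)$ and the multiplicity in $R_t(\y,\x)$ is more explicit than the paper's independent-case proof, which simply writes the ratio as $\rho_t/(1-\rho_t)$ and the forward rate as $\sigma_t$. One correction: fixing the alignment to the sampled $\bm b$ is not where bias enters, because your double-counting identity together with the tower property over $\bm b\mid\x$ makes that step exact (the paper makes the same point in the joint case), so the only bias comes from dropping the sum inside the log.
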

\let\CorELBOSecond\CorELBOSecondInd
As in the joint case, in the implementation we parameterize a distribution over the number of tokens inserted into each gap, $\hat p_{t,\textnormal{len}}^{\theta}(l \mid \x_t, i)$, with $\hat \Delta l_t^\theta(\x_t, i)\coloneqq \E_{l \sim \hat p_{t,\textnormal{len}}^{\theta}(\cdot \mid \x_t, i)}[l].$
The corresponding reverse-process insertion rate is then
    $\hat \lambda_t^\theta(\x_t, i)\coloneqq \gamma_t^{\textnormal{ind}} \hat \Delta l_t^\theta(\x_t, i).$
Consequently, the length term can be optimized by matching the observed gap size $|s_i(\bm b)|$ with the predicted mean using $\mathcal L^{\textnormal{len}}_{\textnormal{mean}}(\theta)$ given by
\begingroup
    \setlength{\abovedisplayskip}{4pt}
\setlength{\belowdisplayskip}{4pt}
    \begin{align*}
     &\E\left[\gamma_t^{\textnormal{ind}} \sum_{i=1}^{m+1} \left(\hat \Delta l_t^\theta(\x_0[\bm b], i) - |s_i(\bm b)| \log \hat \Delta l_t^\theta(\x_0[\bm b], i)\right)\right],
\end{align*}
\endgroup
which recovers the same objective up to additive constants independent of $\theta$.

\xhdr{Neural Network.} We use the same transformer architecture as in the case of Joint Insertion Parameterization, but now the token head is interpreted independently at each gap:
\begin{align*}
    &s^{\theta}(w \mid \x_t, i) = \mlpins\left(\dec(\tilde \x_t; t)^{i}\right)^{w} \\
    &\hat p_t^{\theta}(w \mid \x_t, i) = \frac{\exp(s^{\theta}(w \mid \x_t, i))}{\sum_{w'\in \vocab} \exp(s^{\theta}(w' \mid \x_t, i))}.
\end{align*}
That is, unlike the joint parameterization, the softmax is taken only over vocabulary items independently for each gap. To parameterize the distribution over the length of each gap, we use an additional head that outputs $\hat p_{t,\textnormal{len}}^{\theta}(l \mid \x_t, i)$. 

\xhdr{Sampling.}
\Cref{alg:prediction-multi} shows an efficient constrained $\tau$-leaping~\citep{gillespie2001approximate} based sampling procedure for the independent parameterization, which can insert multiple tokens per step across different gaps. Additionally, this algorithm can be applied on a mini-batch of sequences in parallel.\footnote{We provide a simplified implementation of \Cref{alg:prediction-multi} in PyTorch in Appendix~\ref{app:lst:multi_token_step}.}
We call the parameterization \independenthighlight{\idlmm{}}, where \emph{M} stands for multiple insertions across gaps.

\begin{algorithm}[t]
    \small
    \caption{Sampling from \jointhighlight{\idlm{}}}\label{alg:prediction1}
    \begin{algorithmic}[1]
        \REQUIRE Sequence $\x=(m, (\vv, \vu))$, steps $\algvarsc{MaxSteps}$
        \STATE $\Delta t \gets \frac{1}{\algvarsc{MaxSteps}}$
        \WHILE{$t < 1$}
            \STATE $e \gets \Uniform[0, 1]$
            \STATE $\hat \lambda_t^\theta(\x) \gets  \frac{\sigma_t}{\bar \sigma_{0,t}\,} \E_{l \sim p_{t,\, \text{len}}^{\theta}(\x)} \frac{l}{\tilde S_{l,m,\sigma_t}}$
            \IF{$e\leq \hat \lambda^\theta_t(\x)\, \Delta t$}
                \STATE $i, w \sim \hat p_t^{\theta}(i, w \mid \x)$
                \STATE $\vv' \gets \texttt{concat}(\vv, w)$
                \FOR{$k=1$ to $\texttt{len}(\vu)$}
                    \IF{$\vu[k] > i$}
                        \STATE $\vu[k] \gets \vu[k]+1 $
                    \ENDIF
                \ENDFOR
                \STATE $\vu' \gets \texttt{concat}(\vu, i+1)$
                \STATE $\x \gets (\vv', \vu')$
            \ENDIF
            \STATE $t \gets t + \Delta t$
        \ENDWHILE
        \RETURN~ $\x$
    \end{algorithmic}
\end{algorithm}
\begin{algorithm}
    \small
    \caption{Sampling from {\independenthighlight{\idlmm{}}}}\label{alg:prediction-multi}
    \begin{algorithmic}[1]
    \REQUIRE Sequence $\x=(x_0,\dots,x_{m-1})$, time $t\in[0,1]$, steps $\algvarsc{MaxSteps}$
    \STATE $\Delta t \gets \frac{1}{\algvarsc{MaxSteps}}$
    \WHILE{$t<1$}
        \STATE Initialize $\mathbf{r}\in\{0,1\}^{m}$ and $\mathbf{w}\in\mathcal{V}^{m}$
        \FOR{$i \in \{0,1,\dots,m-1\}$}
            \STATE $e \sim \Uniform[0,1]$
            \IF{$e \le 1 - \exp(-\hat\lambda_t^\theta(\x,i)\,\Delta t)$}
                \STATE $\mathbf{r}[i] \gets 1$
                \STATE $\mathbf{w}[i] \sim \hat p_t^\theta(w \mid \x,i)$
            \ELSE
                \STATE $\mathbf{r}[i] \gets 0$
            \ENDIF
        \ENDFOR
        \STATE $\x' \gets [\texttt{PAD},\dots,\texttt{PAD}]$
        \STATE $\mathbf{s} \gets \algvarsc{RollRight}(\mathbf{r},1)$
        \STATE $\texttt{idx\_old} \gets \algvarsc{Arange}(m) + \algvarsc{CumSum}(\mathbf{s})$
        \STATE $\texttt{idx\_ins} \gets \algvarsc{Arange}(m) + \algvarsc{CumSum}(\mathbf{r})$
        \STATE $\x' \gets \algvarsc{Scatter}(\x', \texttt{idx\_old}, \x)$
        \STATE $\x'[\texttt{idx\_ins}[\mathbf{r}]] \gets \mathbf{w}[\mathbf{r}]$
        \STATE $t \gets t + \Delta t$; $\x \gets \x'$
    \ENDWHILE
    \RETURN~ $\x$
    \end{algorithmic}
\end{algorithm}

%% file: drafts/aistats/sections/02_related_work.tex
\section{\MakeUppercase{Related Work}}
\label{sec:related work}
\xhdr{Discrete diffusion and flow matching.} 
Discrete diffusion models~\citep{sahooSimpleEffectiveMasked2024,shi2024simplified,austinStructuredDenoisingDiffusion2021,gongScalingDiffusionLanguage2024}, inspired from diffusion models~\citep{hoDenoisingDiffusionProbabilistic2020,sohl-dicksteinDeepUnsupervisedLearning2015,songGenerativeModelingEstimating2019}, generate sequences by iteratively \emph{denoising} the tokens of a noisy sequence by framing both the noising and the denoising processes as Markov chains on countable state spaces.
The discrete diffusion models can only work with fixed length sequences. They generate variable length sequences by modeling special \texttt{PAD} token as an ordinary token.
Our formulation is motivated by the discrete diffusion framework, but with some key distinctions. We do not assume independent noise per-component as done in \citet{campbell2022a} and subsequent works, instead our noising process acts directly on the sequence.

\xhdr{Insertion-based sequence generation.} 
There have been several works in the machine translation literature that explore insertion-style generation in sequence-to-sequence models~\citep{sternInsertionTransformerFlexible2019a,gu_levenshtein_2019,ruis2020insertiondeletion,patel2025insertion}.
\citet{welleck2019nonmonotonic} uses reinforcement learning to learn an insertion policy using the \emph{learning to search} approach~\citep{pmlr-v15-ross11a}.
The insertions in \citet{welleck2019nonmonotonic} are constrained to be level order traversals of a tree and cannot be arbitrary. Moreover, reinforcement learning can be orders of magnitude slower than our approach and is therefore not suitable for pre-training language models.
The insertion transformer \citep{sternInsertionTransformerFlexible2019a} uses an efficient denoising objective to train the insertion model.
It can be viewed as implementing independent insertions rate matrix (\cref{sec:independent_insertion_probabilities_parameterization}). However, it does not parameterize the insertion rate separately from the token probabilities.
In general, deciding \emph{when to stop the generation} is challenging for insertion-style models \citep{sternInsertionTransformerFlexible2019a, patel2025insertion,reid2022diffuserdiscretediffusioneditbased}, and formalizing the process a continuous time Markov chain allows us parameterize the model in a way that separates token predictions from the stopping probability.

\newpage

\xhdr{Concurrent work.} More recently, Discrete flow matching~\citep{campbellGenerativeFlowsDiscrete2024,lipman2024flowmatchingguidecode} and stochastic interpolants~\citep{albergo2023stochasticinterpolantsunifyingframework} have been proposed as simplified frameworks for generative modeling using Markov chains, wherein there is no noising process but only a generative Markov chain that is constrained to generate sequence of time marginals that converge to the true data distribution.
We provide an extended discussion of the relationship between our work and these frameworks in the \cref{app:extended_related_work}.

%% file: drafts/aistats/sections/05_experiments.tex
\section{\MakeUppercase{Empirical Evaluation}}
\label{sec:experiments}

We split our experiments into two parts. First, we demonstrate that the proposed approach retains the benefits of insertion-style generation by evaluating it on the variable-length planning task on the star graphs \citep{pmlr-v235-bachmann24a, patel2025insertion}.
We then evaluate the usefulness of the proposed approach on language modeling using two language modeling corpora with different characteristics.
We compare our results with the standard left-to-right autoregressive language models (ARMs), masked diffusion models~\citep{ouYourAbsorbingDiscrete2024,zheng2025masked} and the Insertion Language Model (ILM) \citep{patel2025insertion}.
\paragraph{Model} For \idlm{}, we use the same transformer backbone as in \citet{patel2025insertion}, with one change. We introduce the time parameter $t$ using adaptive layer normalization (AdaLN-Zero) \citep{peeblesScalableDiffusionModels2023}.
For \idlm{}, we also use the first token position to model $\hat \lambda_t^\theta(\x)$.
For \idlmm{}, we model the per-position insertion rate $\lambda_t^\theta(\x, i)$ using an additional feedforward layer on top of the transformer backbone. 
\subsection{Planning Task}
\begin{table}
    \centering
    \setlength{\tabcolsep}{15pt}
    \caption{Sequence-level exact match accuracy on the synthetic Star graph tasks.
    } \label{tab:star_results}
    \begin{tabular}{@{}lccc@{}}
    \toprule
    {Model}  & {\small Star\textsubscript{easy}} & {\small Star\textsubscript{medium}} & {\small Star\textsubscript{hard}} \\\cmidrule{1-4}
    ARM                     & 32.3                                                & 75.0                                                 & 23.0          \\
    MDM                     & 100.0                                               & 36.5                                                 & 21.0          \\
    ILM                     & 100.0                                               & {100.0}                                       & {99.1} \\ 
    \idlm{}                     & 100.0                                               & \textbf{100.0}                                       & \textbf{99.3} \\
    \idlmm{}                     & 100.0                                               & {98.60}                                       & {95.1} \\
    \bottomrule
    \end{tabular}%
    \end{table}
The simple star graph task \citep{pmlr-v235-bachmann24a} and its harder variable-length variant \citep{patel2025insertion} are minimal tasks designed to exemplify limitations of ARMs and MDMs. \footnote{We use the dataset from \citep{patel2025insertion}.}
This is a sequence-to-sequence task where the source sequence contains randomly permuted edges of a star-shaped graph, followed by the start and the target node. The model needs to generate the edges that connect the start and the target node going through the junction of the star.
This task is difficult for ARMs trained to predict from left-to-right using teacher forcing~\citep{pmlr-v235-bachmann24a} because the model is overwhelmed by the training signal for the easy-to-predict non-junction edges and it never learns to perform the implicit lookahead required to generate the edge that immediately follows the junction.
MDMs, which can model all the tokens simultaneously, can generate the correct sequence only when the arm length is fixed \citep{patel2025insertion}.
As shown in \Cref{tab:star_results}, both \idlm{} and \idlmm{} continue to enjoy the same benefits as ILM and achieve almost perfect accuracy on all three versions of the task.
\begin{figure*}[t]
    \centering
    \begin{subfigure}[t]{0.48\linewidth}
        \centering
        \includegraphics[width=\linewidth]{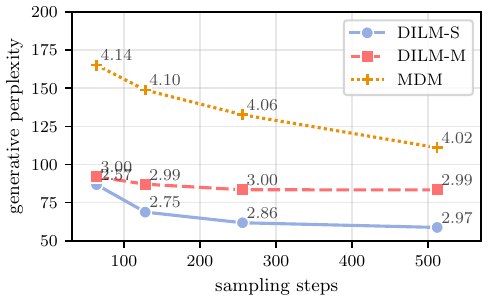}
    \end{subfigure}\hfill
    \begin{subfigure}[t]{0.48\linewidth}
        \centering
        \includegraphics[width=\linewidth]{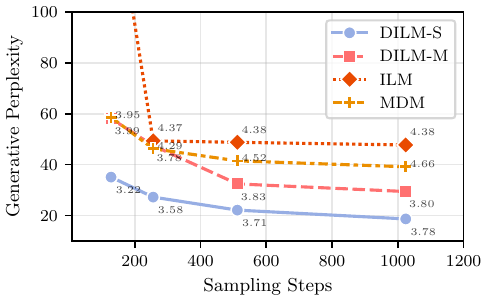}
    \end{subfigure}
    \vspace{-5pt}
    \caption{Generative perplexity vs.\ number of sampling steps for models trained on LM1B (left) and OpenWebText (right). The annotations show the token entropy values.}
    \label{fig:lm_generative_perplexity_vs_sampling_steps}
    \vspace{-8pt}
\end{figure*}
\subsection{Language Modeling}
For all the language modeling experiments, we use a transformer backbone with 12 layers, 768 hidden dimensions, and 12 attention heads with $\approx$87 million non-embedding parameters and train it from scratch using the AdamW optimizer on 8 A100 GPUs.
\begin{table}
    \centering
    \setlength{\tabcolsep}{9.9pt}
    \caption{ Evaluation of unconditional generation quality using per-token NLL under Llama 3.2 3B. The rows with the dataset names contain the NLL and entropy of the examples in the training data. 
    }
    \vspace{-5pt}
            \label{tab:lm-unconditional-results}
            \begin{tabular}{lccc}
                \toprule
                 & NLL\tiny{$\downarrow$} & Ent\tiny{$\uparrow$} & $\overline{\text{len}}$ \\
                \midrule
                \small{LM1B}          & \small{3.71}                   & \small{3.08} & \small{28} \\
                \midrule
                ARM                   & \textbf{3.94}                           & {3.12} & \small{30} \\
                MDM \tiny{\citep{sahooSimpleEffectiveMasked2024}}                    & 4.81                           & {3.70} & \small{85} \\
                ILM \tiny{\citep{patel2025insertion}}     & {4.67}                           & 2.80 & \small{21} \\
                \idlm{} \tiny{(ours)}     & \underline{4.65}                           & 2.87 & \small{30} \\
                \idlmm{} \tiny{(ours)}     & {4.76}                           & 2.99 & \small{48} \\
                \bottomrule
                \small{OpenWebText (padded)}          & 2.59                    & 5.01  & 533  \\
                \midrule
                MDM                   &  {3.96}                          & {4.64} & {320} \\
                ILM     & {4.65}                           & {5.46} & {860} \\
                \idlm{}     & \textbf{3.74}                           & {4.33} & {267} \\
                \idlmm{}     & \underline{3.92}                           & {4.54} & {478} \\
                \bottomrule
            \end{tabular}%
    \vspace{-10pt}
    \end{table}

\xhdr{LM1B} For language modeling on short sequences, we use the One Billion Word Benchmark (LM1B) \citep{chelba2013one}, which contains short sequences from the news domain.
We preprocess that data by tokenizing it with the BERT tokenizer \citep{devlin2019bertpretrainingdeepbidirectional} and pad to a maximum sequence length of 128.
We use a learning rate of $10^{-4}$, an effective batch size of 512, and train for 1 million steps.

To evaluate unconditional generation we compute per-token negative log-likelihood (NLL) under Llama-3.2-3B \citep{grattafiori2024llama} for 1000 unconditionally generated sequences.
As seen in \Cref{tab:lm-unconditional-results}, both \idlm{} and \idlmm{} are competitive with the ILM \citep{patel2025insertion}, with \idlm{} doing slightly better.
The key advantage of the diffusion-based formulation is the ability to trade-off sample quality and number of forward passes.
As shown in \Cref{fig:lm_generative_perplexity_vs_sampling_steps}, the generative perplexity of both \idlm{} and \idlmm{} improves as we increase the number of sampling steps.

\xhdr{OpenWebText.} 
For language modeling on longer sequences, we use the OpenWebText~\citep{Gokaslan2019OpenWeb} corpus.
We preprocess the OpenWebText corpus by tokenizing it using the GPT-2 tokenizer, removing sequences that are longer than 1024 tokens, and padding to a maximum sequence length of 1024.
We use a learning rate of $10^{-4}$, effective batch size of 512, and train for 300 thousand steps.
For evaluation, we compute the per-token negative log-likelihood under the GPT-2 Large model for 1000 unconditionally generated sequences from each model; \Cref{tab:lm-unconditional-results} reports these values.
We can see that \idlm{} and \idlmm{} obtain better NLL than ILM and MDM.
\Cref{fig:lm_generative_perplexity_vs_sampling_steps} (right) shows the generative perplexity \emph{vs.}\ number of sampling steps on OpenWebText.
Here we observe again that the diffusion-based formulation allows for better trade-off between sample quality and number of forward passes. The generative perplexity for ILM is very high at low sampling steps (128); it falls rapidly at medium sampling steps (256), and then remains flat even as the sampling steps are increased. \idlm{} and \idlmm{} both exhibit a smoother trade-off curve between the generative perplexity and the number of sampling steps.
However, on longer sequences, we observe token repetitions, which show up in the lower entropy values.
\section{\MakeUppercase{Conclusion}}
We presented a diffusion-based approach for variable-length sequence generation, leading to two new parameterizations of the generative process.
With clearly stated assumptions, we derived a low-variance learning objective for both parameterizations, and demonstrated their efficacy empirically on planning and language modeling tasks.

\xhdr{Limitations and future work.} 
While \idlmm{} can generate multiple tokens, it is still constrained to produce only one token per gap. 
This can hinder the ability of the neural network in leveraging correlations between tokens that occur in contiguous spans. 
A multi-token insertion model that can insert contiguous spans of tokens is a promising direction for further exploration.
Scaling insertion-based generation to larger models and longer sequences also remains an interesting direction for future work.

%% file: drafts/aistats/sections/z0_appendix.tex
\appendix
\thispagestyle{empty}

\onecolumn
\aistatstitle{\LARGE Appendix}
\etocdepthtag.toc{mtappendix}             %
\etocsettagdepth{mtchapter}{none}         %
\etocsettagdepth{mtappendix}{subsubsection} %
\raggedbottom
\tableofcontents
\clearpage
\flushbottom
\input{drafts/aistats/sections/appendices/notation}
\input{drafts/aistats/sections/appendices/ctmc}

\input{drafts/aistats/sections/appendices/proofs}

\input{drafts/aistats/sections/appendices/extended_related_work}

\input{drafts/aistats/sections/appendices/implementation}

%% file: drafts/aistats/sections/appendices/notation.tex
\section{Summary of Notation}\label{app:notation}

\begin{table}[H]
\setlength{\tabcolsep}{24.5pt}
\centering
\begin{tabular}{ll}
\toprule
\textbf{Notation} & \textbf{Description} \\
\midrule
\multicolumn{2}{c}{\textbf{General}} \\
\midrule
$(\Omega, \mathcal O,  P)$ & Probability space from which all random variables are drawn.  \\
$\omega$ & Sample point in $\Omega$ \\
$\cX_t$, $\cB_t$, $\cD_t$ & Random variables (capital letters) \\
$\x$, $\b$, $d$ & Values of random variables (lowercase) \\
$\bm{X}$, $\bm{x}$ & Non-scalar random variables and values (boldface) \\
$\sX$, $\sB$ & Sets (blackboard font) \\
$\natset{n}$ & Set of natural numbers $\{1, 2, \ldots, n\}$ \\
$\Delta^n$ & Simplex of dimension $n$, i.e., $\{p \in \mathbb{R}^n \mid p \geq 0, \sum_{i=1}^n p_i = 1\}$ \\
$x^i$, $X^i$ & $i$-th component of $\x$ and $\cX$ respectively \\
$\cX_t$, $\cB_t$, $\cD_t$ & Continuous time stochastic processes \\
$\dX_k$, $\dB_k$ & Discrete time stochastic processes \\
$p_{s|t}(\x \mid \x')$ or $p_{t,s}(\x', \x)$ & Transition probability $\sP(\cX_s = \x \mid \cX_{t} = \x')$ \\
$\ubar p_{k|k'}(\x \mid \x')$ & Transition probability $\sP(\dX_k = \x \mid \dX_{k'} = \x')$ \\
$\delta_{\sA}(a)$ & Indicator function for set $\sA$: 1 if $a\in \sA$, 0 otherwise \\
$\delta_{b}(a)$ & Shorthand for $\delta_{\{b\}}(a)$ \\
$\text{Diag}(\lambda)$ & Diagonal matrix with diagonal elements given by $\lambda$ \\
\midrule
\multicolumn{2}{c}{\textbf{Specific variables}} \\
\midrule
$\vocab$ & Vocabulary of tokens \\
$\vocab^n\coloneqq \vocab \times \cdots \times \vocab$ & Set of token sequences of length $n$. $\vocab^0 \coloneqq \emptyset$ by convention. \\
$\sX_n$ & $\{n\}\times \vocab^n$ \\
$\sB_{n, m}$ & Set of all bit vectors of length $n$ with exactly $m$ 1s. \\
$\sB_{n}$ & $\bigcup_{m=0}^{n} \sB_{n, m}$, i.e., set of all bit vectors of length $n$. \\
$|\cdot|: \sB_{n} \to \mathbb{N}$ & The number of 1s in a bit vector; $|\bm b| = \sum_{i=1}^n b^i$ \\
$\Idx(\bm b) \coloneqq \{i \in \natset{n} \mid b^i = 1\}$ & The set of indices of the ones in $\bm b$ \\
$\del(\x, i)$ & $\del(x^1\dots x^i\dots x^n, i) = x^1\dots x^{i-1} x^{i+1}\dots x^n$ \\
$\ins(\x, i, a)$ & $\ins(x^1\dots x^i\dots x^n, i, a) = x^1\dots x^{i-1} a\, x^{i+1}\dots x^n$ \\
$\rins(\x)$ & For $\x=(n, \seqx)$, $\rins(\x) = \{\ins(\x, i, a) \mid i\in \natset{n+1}, a\in \vocab\} $ \\
\bottomrule
\end{tabular}
\caption{Summary of notation used throughout the paper.}
\label{tab:notation}
\end{table}

%% file: drafts/aistats/sections/appendices/ctmc.tex
\section{Background: Continuous-Time Markov Chains on Countable State Spaces}\label{app:ctmc}
In this section, we will review some elementary results from the theory of CTMCs on countable state spaces that are used in the main paper text. Most of these results are well known and can be found in introductory textbooks on stochastic processes like \citet{cinlarIntroductionStochasticProcesses2013}.
\subsection{Structure of CTMCs}
Let $X_t$ be a CTMC taking values in a countable state space $\sX$.

\begin{definition}[Waiting time]\label{def:ctmc:waiting_time}
    The random variable $W_t(\omega)=\inf\{s>0: X_s(\omega)\neq X_t(\omega)\}$ is called the waiting time of the CTMC.
\end{definition}

The following proposition states that the conditional distribution of the waiting time is exponential and only depends on the current state.
\begin{fact}[Conditional Distribution of Waiting Time]\label{prop:ctmc:conditional_distribution_of_waiting_time}
    For any $x\in \sX$, and $t\geq 0$,
    \begin{align*}
        \sP(W_t > u \mid X_t=x) = \exp \left({-\int_{t}^{t+u} \lambda(x, s) \dd s } \right), ~~u \geq 0,
    \end{align*}
    where $\lambda(x, s) \in [0, \infty]$, with the convention that if $\lambda(x, s)=\infty$, then $\sP(W_t > u \mid X_t=x)=0$ for all $u\geq 0$.
\end{fact}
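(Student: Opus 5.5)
We prove the Fact by deriving an ODE for the survival function from the first-order behaviour of the transition probabilities in \cref{eq:ctmc:first_order_discretization}, and then using the Markov property to turn it into a product form. Fix $x$ and $t$ and set $G(u) \coloneqq \sP(W_t > u \mid X_t = x)$. Because sample paths are right-continuous and the state space is countable, the event $\{W_t > u\}$ is the event that $X_s = x$ for all $s \in [t, t+u]$. This can be written as a decreasing limit of the events $\{X_{t+kh} = x,\ k = 0,\dots,\lfloor u/h\rfloor\}$ over dyadic $h \downarrow 0$. Countability of the states and right-continuity are exactly what make this interchange of intersection and limit legitimate.

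The next step is a multiplicative identity. By the Markov property at time $t+u$,
\begin{equation*}
G(u+h) = G(u)\,\sP\bigl(X_s = x \ \forall s\in[t+u,t+u+h] \mid X_{t+u}=x\bigr).
\end{equation*}
The staying probability on a short interval is $1 - \lambda(x,t+u)\,h + o(h)$. The upper bound $\le p_{t+u+h|t+u}(x\mid x) = 1 - \lambda h + o(h)$ is immediate from \cref{eq:ctmc:first_order_discretization}. For the matching lower bound, we use the regularity assumption that at most one jump occurs in an interval of length $h$ up to $o(h)$: the event "leave and return within $h$" requires two jumps, so its probability is $o(h)$.

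Dividing by $h$ and letting $h\downarrow 0$ gives the right derivative $G'(u) = -\lambda(x,t+u)\,G(u)$ with $G(0)=1$, since $W_t>0$ almost surely by right-continuity. When $\lambda$ is locally integrable in time, $G$ is absolutely continuous, and the unique solution is $G(u) = \exp\bigl(-\int_t^{t+u}\lambda(x,s)\,\dd s\bigr)$. An alternative route avoids the ODE: take a product over a partition of $[t,t+u]$, apply $\log(1-\lambda h+o(h)) = -\lambda h + o(h)$, and pass to the Riemann sum limit.

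For the degenerate case $\lambda(x,s)=\infty$, the state is instantaneous: the staying probability over any interval is $0$. This gives $G(u)=0$ for every $u \ge 0$, which is consistent with the stated convention.

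The main obstacle is the lower bound on the staying probability, that is, excluding returns to $x$ within $[t+u, t+u+h]$ at order $o(h)$. This requires uniformity in the $o(h)$ terms, which is not automatic for inhomogeneous rates. We would first try to assume $\lambda(\cdot,s)$ bounded on compacts, which is the case in the paper, where $\lambda \le \sigma_t$ or $\sigma_t(n+1)$ on a finite state space. Under that assumption, a two-jump bound $\le (\sup\lambda\, h)^2$ controls the error directly.
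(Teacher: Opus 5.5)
Your argument is sound under the regularity you assume, but it proves a different, and in one sense stronger, statement than the paper's proof. The paper uses only $\{W_t>u+v\}=\{W_t>u,\ W_{t+u}>v\}$ and the Markov property to get $p(x,u+v;t)=p(x,u;t)\,p(x,v;t+u)$, which is exactly your multiplicative identity for $G$. It sets $u=0$ to obtain $p(x,0;t)=1$, and then simply \emph{defines} $\lambda(x,s)\coloneqq-\frac{\mathrm{d}}{\mathrm{d}s}\log p(x,s;0)$, so the exponential form is a telescoping of logarithms. No short-time expansion and no control of returns to $x$ is needed, which is why $\lambda=\infty$ can be admitted. The price is that differentiability (absolute continuity) of $\log p(x,\cdot\,;0)$ is tacitly assumed, and $\lambda$ is not yet tied to the rate matrix. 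In the appendix that tie, $R_v(x,x)=-\lambda(x,v)$ in the Kolmogorov forward equation, is derived after and from this Fact. So feeding \eqref{eq:ctmc:first_order_discretization} into the proof would be circular in the appendix's logical order. Your route instead takes the main text's infinitesimal description of the chain as primitive and \emph{verifies} that the holding-time hazard equals the prescribed rate. That identification is what the paper actually relies on, for example the Poisson count of deletions with mean $\bar\sigma_{s,t}$ in the proof of Proposition~\ref{prop:transition_probability}. However, your method only reaches bounded, time-regular rates, and your $\lambda=\infty$ paragraph restates the convention rather than deriving anything.

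One step should be repaired. You bound ``leave and return within $h$'' by a two-jump estimate $(\sup\lambda\,h)^2$. That presupposes that, from every state, the probability of at least one jump in time $h$ is at most $\sup\lambda\,h+o(h)$. This is precisely the lower bound on staying probabilities you are trying to establish, so as written the step is circular. Your first paragraph already contains a fix that removes this ``main obstacle'' altogether. With $t_k=t+k\,u2^{-N}$, right-continuity gives $G(u)=\lim_{N\to\infty}\sP(X_{t_k}=x\ \forall k\mid X_t=x)$, and the Markov property turns each term into $\prod_k p_{t_{k+1}|t_k}(x\mid x)$. Only diagonal transition probabilities enter. If the $o(h)$ in \eqref{eq:ctmc:first_order_discretization} is uniform in time (true for the paper's finite state space with continuous $\sigma$), then $\log\prod_k p_{t_{k+1}|t_k}(x\mid x)=-\sum_k\lambda(x,t_k)\,u2^{-N}+o(1)\to-\int_t^{t+u}\lambda(x,s)\,\mathrm{d}s$. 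This needs no leave-and-return estimate, no ODE, and no unproved absolute-continuity claim for $G$.
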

\begin{proof}
    Note that $\{W_t > u+v\} = \{W_t > u, W_{t+u} > v\}$ because $\omega \in \{ W_t > u+v \}$ implies $X_t(\omega)=X_{t+(u+v)}(\omega)=X_{t+u}(\omega)$, which further implies that $\omega \in \{ W_{t} > u \}$ and $\omega \in \{ W_{t+u} > v \}$.
    Therefore,
    \begin{align*}
        &\sP(W_t > u+v \mid X_t=x) \\
        &= \sP(W_t > u,~ W_{t+u} > v \mid X_t=x)  \\
        &= \sP(W_t > u \mid X_t=x) \sP(W_{t+u} > v \mid X_t=x, W_t > u) \\
        &= \sP(W_t > u \mid X_t=x) \sP(W_{t+u} > v \mid X_{t+u}=x) ~~~\text{(Markov property)}\\
    \end{align*}
    Denoting $\sP(W_t > u \mid X_t=x)$ as $p(x, u; t)$, we have
    \begin{align}\label{app:eq:ctmc:1}
        p(x, u+v;\, t) &= p(x, u;\, t)~ p(x, v;\, t+u).
    \end{align}
    Setting $u=0$ we get $p(x, v;\, t) = p(x, 0;\, t)~ p(x, v;\, t)$, which implies that $p(x, 0;\, t) = 1$ since $p(x, v;\, t)$ is not identically zero.
    This, along with \Eqref{app:eq:ctmc:1},  implies 
    \begin{align*}
        p(x, v;\, u) &= \frac{p(x, u+v;\, 0)}{p(x, u;\, 0)}\\
        \implies \log p(x, v;\, u) &= \log p(x, u+v;\, 0) - \log p(x, u;\, 0)\\
        \implies p(x, v;\, u) &= \exp \left({-\int_{u}^{u+v} \lambda(x, s) \dd s } \right), 
    \end{align*}
    where $\lambda(x, s) \coloneqq -\frac{\dd}{\dd s} \log p(x, s;\, 0)$, i.e., $\int_{u}^{u+v} \lambda(x, s) \dd s = \log p(x, u+v;\, 0) - \log p(x, u;\, 0)$.
\end{proof}

Next we will show that a CTMC with right-continuous sample paths can be decomposed into a product of Poisson process arrivals and DTMC transitions.
\begin{definition}[Right-continuous CTMC]\label{app:def:ctmc:right_continuous}
    A CTMC $X_t$ is said to have right-continuous sample paths if for any $t\in [0, \infty)$,
    \[
        \lim_{s\downarrow t} X_s = X_t.
    \]
\end{definition}

\begin{definition}[Types of States]\label{app:def:ctmc:types_of_states}
    Let $X_t$ be a CTMC with rate parameter $\lambda(x, s)$ as described in Proposition \ref{prop:ctmc:conditional_distribution_of_waiting_time}.
    Then a state $x\in \sX$ is called
    \begin{itemize}
        \item \emph{absorbing} if $\lambda(x, s) = 0$ for all $s\geq 0$,
        \item \emph{stable} if $0<\lambda(x, s)<\infty$ for all $s\geq 0$,
        \item \emph{instantaneous} if $\lambda(x, s)=\infty$ for some $s\geq 0$.
    \end{itemize} 
\end{definition}
\begin{fact}[Right Continuity]\label{prop:ctmc:right_continuity}
    A CTMC $X_t$ has right-continuous sample paths if there are no instantaneous states.
\end{fact}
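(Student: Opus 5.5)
The plan is to prove the fact pathwise \emph{up to a modification}: I will first show that right-continuity holds at each fixed time almost surely, and then upgrade this to all times at once by building a piecewise-constant version of $X_t$ from its holding times and jumps.

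\textbf{Fixed $t$.} Fix $t\ge 0$ and $x\in\sX$. Since no state is instantaneous, $\lambda(x,s)<\infty$ for all $s$. I will additionally assume that $\lambda(x,\cdot)$ is locally integrable; this is harmless for the bounded schedules $\sigma_t$ used in the paper, but it is not implied by finiteness alone. Under this assumption, Fact~\ref{prop:ctmc:conditional_distribution_of_waiting_time} gives
\[
\sP(W_t>u\mid X_t=x)=\exp\Bigl(-\int_t^{t+u}\lambda(x,s)\dd s\Bigr)\longrightarrow 1 \quad\text{as } u\downarrow 0 .
\]
Hence $\sP(W_t>0\mid X_t=x)=1$. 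Summing over the countably many $x$ gives $W_t>0$ almost surely. On the event $\{W_t>0\}$, Definition~\ref{def:ctmc:waiting_time} says $X_s=X_t$ for all $s\in[t,t+W_t)$, so $\lim_{s\downarrow t}X_s=X_t$.

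\textbf{All $t$ simultaneously.} The fixed-$t$ statement only gives an exceptional null set that depends on $t$, and there are uncountably many $t$. This is the main obstacle. I would handle it by constructing a version of the process explicitly, as follows.
\begin{itemize}
\item Set $T_0=0$ and $Y_0=X_0$.
\item Given $(T_k,Y_k)$, draw the holding time $W$ with survival function $\exp\bigl(-\int_{T_k}^{T_k+u}\lambda(Y_k,s)\dd s\bigr)$, and set $T_{k+1}=T_k+W$.
\item Draw $Y_{k+1}$ from the post-jump kernel $K_{T_{k+1}}(Y_k,\cdot)$.
\item Define $\tilde X_t=Y_k$ for $t\in[T_k,T_{k+1})$.
\end{itemize}
By construction $\tilde X$ is piecewise constant on half-open intervals $[T_k,T_{k+1})$, and each holding time is strictly positive almost surely by the computation above. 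So every sample path of $\tilde X$ is right-continuous on $[0,\sup_k T_k)$.

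\textbf{Same law and no explosion.} It remains to check two things.
\begin{itemize}
\item \emph{Same finite-dimensional distributions.} Using the Markov property and Fact~\ref{prop:ctmc:conditional_distribution_of_waiting_time}, $\tilde X$ and $X$ share the same transition function, hence the same finite-dimensional laws. So $\tilde X$ is a right-continuous version of $X$, which is the sense in which the fact should be read.
\item \emph{Non-explosion.} We need $\sup_k T_k=\infty$, or at least $\sup_k T_k > T$ on the horizon of interest. In the paper's setting the rates are bounded by $\sup_t\sigma_t<\infty$ and the length decreases at each deletion, so there are at most $\lmax$ jumps and explosion is impossible. In general I would state non-explosion as a standing regularity assumption, and justify it by a Borel--Cantelli argument on the holding times when the rates are bounded.
\end{itemize}
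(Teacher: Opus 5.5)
The paper gives no proof of this Fact. It is quoted as a standard textbook result, so your proposal has to stand on its own. Several parts are sound:
\begin{itemize}
\item The skeleton (jump--hold construction, identification of the law, non-explosion) is the standard constructive route.
\item Your fixed-$t$ step is fine.
\item Your remark that finiteness of $\lambda(x,\cdot)$ does not give local integrability is a genuine refinement of the paper's pointwise definition of ``instantaneous''.
\end{itemize}
The gap is the sentence asserting that $\tilde X$ and $X$ have the same transition function ``using the Markov property and Fact~\ref{prop:ctmc:conditional_distribution_of_waiting_time}''. All the content of the statement sits in that step, and those two ingredients cannot deliver it. Fact~\ref{prop:ctmc:conditional_distribution_of_waiting_time} only controls the probability of not leaving $x$; it gives $p_{t+u|t}(x\mid x)\ge e^{-\Lambda(x,u;\,t)}$. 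It says nothing about how the escaping mass splits among targets $y\neq x$, so it cannot fix the off-diagonal transition probabilities. The natural way to tie $X$ to your recursion is a first-jump decomposition of $X$ itself. That needs the strong Markov property at the random time $t+W_t$, and the fact that $X_{t+W_t}$ is the post-jump state with law $K_{t+W_t}(x,\cdot)$. This is exactly Fact~\ref{prop:ctmc:structure}, which the paper proves only \emph{after} assuming right-continuity, so the route is circular. Also, $\tilde X$ is driven by fresh randomness. Even with the laws identified, you obtain an equal-in-law copy of $X$, not a modification of it.

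There are two standard repairs.

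\textbf{(i) Work only with transition functions.} Regard the chain as specified by $R_t$ through \eqref{eq:ctmc:first_order_discretization}, with rates bounded by $M$ and continuous in $t$.
\begin{itemize}
\item $\tilde X$ makes two or more jumps in $[t,t+h]$ with probability at most $(Mh)^2$, so it satisfies the same expansion $p_{t+h|t}(y\mid x)=\delta_x(y)+hR_t(x,y)+o(h)$.
\item Both transition functions then solve $\partial_t p_{t|s}=p_{t|s}R_t$ with $p_{s|s}=I$. This equation has a unique solution for bounded rates; on the paper's finite state space it is a linear ODE.
\item Equal transition functions plus equal initial laws give equal finite-dimensional laws.
\end{itemize}

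\textbf{(ii) Stay closer to your fixed-$t$ idea and get a genuine modification.} Take a countable dense $D\subset[0,T]$ and a finite $F\subset D$ inside a window of length $h$.
\begin{itemize}
\item Since $\{W_s>s'-s\}\subset\{X_{s'}=X_s\}$, Fact~\ref{prop:ctmc:conditional_distribution_of_waiting_time} and the Markov property at the deterministic times of $F$ show the chain changes value along $F$ with probability at most $1-e^{-Mh}\le Mh$.
\item Applying the Markov property again at the first change, it changes at least twice with probability at most $(Mh)^2$.
\item Cut $[0,T]$ into $N$ windows with endpoints in $D$ and let $F\uparrow D$. The path restricted to $D$ has finitely many changes outside an event of probability at most $M^2T^2/N$. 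Since $N$ is arbitrary, this holds almost surely.
\item Then $X'_t\coloneqq\lim_{s\downarrow t,\,s\in D}X_s$ is a right-continuous step function.
\item $X'_t=X_t$ a.s.\ for each fixed $t$, by the same bound applied just to the right of $t$; this is your fixed-$t$ computation.
\end{itemize}
This route needs neither the jump kernel nor a separate law-identification or non-explosion argument.
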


\begin{figure}
    \centering
    \includegraphics[width=0.5\textwidth]{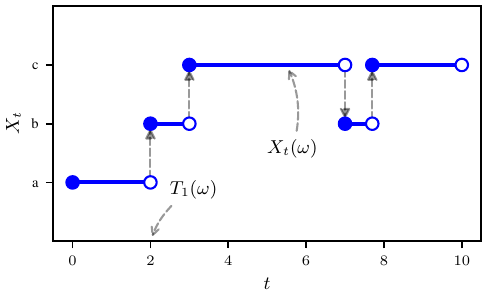}
    \caption{Sample path of a right-continuous CTMC with states $\sX=\{a, b, c\}$.}
    \label{app:fig:ctmc:right_continuous_sample_path}
\end{figure}
\Figref{app:fig:ctmc:right_continuous_sample_path} shows an example of a sample path of a right-continuous CTMC with states $\sX=\{a, b, c\}$.
Assume for the rest of this section that the CTMC is right-continuous, i.e., the mapping $t\mapsto X_t(\omega)$ is right-continuous for almost all $\omega\in \Omega$ and there are no instantaneous states. Moreover, $\sX$ is countable and has discrete topology, $\Delta$ is the point at infinity if $\sX$ is not finite. If $\sX$ is finite, then we don't need $\Delta$ and the one-point compactification of $\sX$ is the same as $\sX$.
Let $\dT_n$ be the time of the $n$-th jump of the CTMC, with $\dT_0=0$, and let $\ubar{X}_n = X_{\dT_n}$ when $\dT_n < \infty$, and $\ubar{X}_n = \ubar{X}_{n-1}$ when $\dT_n = \infty$, where the last condition is needed to handle the case of an absorbing state.
Then the following holds for all $n\geq 1$
\begin{align*}
    \dT_0 =0~;~~~~W_{\infty} = \infty~;~~~~
    \dT_n &= \dT_{n-1} + W_{\dT_{n-1}}; ~~~~
    \ubar{X}_n = \begin{cases}
        X_{\dT_n} & \text{if } \dT_n < \infty,\\
        \ubar{X}_{n-1} & \text{if } \dT_n = \infty.
    \end{cases}
\end{align*}

The next proposition clarifies the structure of CTMC $X_t$ in terms of the arrival times $\dT_n$ and the transitions $\ubar{X}_n$.
Specifically, it shows that $\ubar{X}_n$ is a DTMC and the waiting times $\dT_n - \dT_{n-1}$ depend only on the current state $\ubar{X}_{n-1}$, and are independent of the past.
\begin{fact}[Structure of CTMC]\label{prop:ctmc:structure}
    Let $X_t$ be a right-continuous CTMC with states $\sX$. 
    Then for all $n \in \sN$, and $x'\in \sX$, and $u\in \sR_+$, we have
    \begin{align*}
        &\sP(\ubar{X}_{n+1} = x',\, \dT_{n+1} - \dT_n > u \mid  \ubar{X}_0, \ldots, \ubar{X}_n, \dT_0, \ldots, \dT_n) 
        =K_{t+u}(x' \mid x)~~e^{-\Lambda(x, u;\,t)},
    \end{align*}
    if $\{\ubar{X}_n = x,~~ \dT_n = t\}$ occurs. Here $\Lambda(x, u;\,t) = \int_{t}^{t+u}\lambda(x, s) \dd s$, $K_{t+u}(x' \mid x) \geq 0$, and  $\sum_{x'\in \sX} K_{t+u}(x' \mid x) = 1$.
\end{fact}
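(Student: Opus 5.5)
The plan is to reduce everything to a statement about the process restarted at the jump time $\dT_n$, and then combine Proposition~\ref{prop:ctmc:conditional_distribution_of_waiting_time} with a definition of the post-jump kernel. First, I would check that each $\dT_n$ is a stopping time for the natural filtration of $X_t$. This follows by induction from $\dT_n = \dT_{n-1} + W_{\dT_{n-1}}$ and right-continuity (Definition~\ref{app:def:ctmc:right_continuous}). With right-continuous paths and no instantaneous states (Proposition~\ref{prop:ctmc:right_continuity}), $X$ on a countable discrete space is progressively measurable. For such a chain the strong Markov property holds at $\dT_n$: on $\{\ubar X_n = x, \dT_n = t\}$, the shifted process $(X_{\dT_n + s})_{s\geq 0}$ is conditionally independent of $\mathcal F_{\dT_n}$ given $(X_{\dT_n}, \dT_n)$, and it is distributed as the chain started at $x$ at time $t$. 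Since $\ubar X_0,\dots,\ubar X_n,\dT_0,\dots,\dT_n$ are all $\mathcal F_{\dT_n}$-measurable, the conditioning on the whole history collapses to conditioning on $\{X_t = x\}$ for a chain started fresh at time $t$.

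Second, under this reduction $\dT_{n+1}-\dT_n = W_t$ and $\ubar X_{n+1} = X_{t+W_t}$. Proposition~\ref{prop:ctmc:conditional_distribution_of_waiting_time} directly gives $\sP(W_t>u\mid X_t=x) = e^{-\Lambda(x,u;t)}$. For the post-jump state, I would define the kernel by
\begin{equation*}
K_{s}(x'\mid x) \coloneqq \sP\bigl(X_{t+W_t} = x' \bigm| X_t = x,\ t+W_t = s\bigr),
\end{equation*}
that is, the regular conditional distribution of the landing state given the jump time. This object automatically satisfies $K_s(x\mid x)=0$ (right-continuity forces a genuine change of state) and $\sum_{x'}K_s(x'\mid x)=1$.

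To show $K_s$ depends only on $(x,s)$ and not on the starting time $t$, I would apply the Markov property at an intermediate time $t+u<s$ on the event $\{W_t>u\}$, on which $X_{t+u}=x$. This shows that the conditional law of the landing state given $\{W_t>u\}$ equals the law of the landing state of the chain started at $(x,t+u)$. Consequently the joint law of $(W_t, X_{t+W_t})$ restricted to $\{W_t>u\}$ factorizes as $e^{-\Lambda(x,u;t)}$ times the law of the landing state for the chain restarted at time $t+u$.

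The main obstacle is matching this to the exact form in the statement, with the kernel indexed at $t+u$. The factorization above produces
\begin{equation*}
e^{-\Lambda(x,u;t)}\,\sP\bigl(\ubar X_{n+1}=x'\bigm| \ubar X_n = x,\ \dT_n = t+u\bigr).
\end{equation*}
I would therefore define $K_{t+u}(x'\mid x)$ to be this latter probability, i.e.\ the landing distribution of the chain restarted from $x$ at time $t+u$. It is a probability vector over $x'$, which gives exactly the claimed product with the normalization properties.

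Finally, I would remark that when rates are regular enough, this $K_{t+u}$ is the $\lambda$-weighted average of the instantaneous jump kernel over future jump times. It coincides with the instantaneous jump kernel of \eqref{eq:ctmc:transition_rate_matrix} when that kernel is time-homogeneous, as is the case for $\kappa_{\mathrm{Uni}}$.
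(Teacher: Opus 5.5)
Your proposal is correct and follows the paper's route: strong Markov property at the stopping time $\dT_n$, the survival factor $e^{-\Lambda(x,u;t)}$ from Proposition~\ref{prop:ctmc:conditional_distribution_of_waiting_time}, and the Markov property at time $t+u$ on $\{W_t>u\}$, which identifies $K_{t+u}(x'\mid x)$ as the landing distribution $\sP(X_{(t+u)+W_{t+u}}=x'\mid X_{t+u}=x)$ of the chain restarted at $(x,t+u)$, exactly as the paper defines it. The detour through the kernel conditioned on the exact jump time is unnecessary, but your closing remark is a correct clarification the paper leaves implicit: this $K_{t+u}$ is in general a weighted average of the instantaneous jump kernel, and equals it when that kernel is time-homogeneous, as for $\kappa_{\mathrm{Uni}}$.
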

\begin{proof}
    First we will convert the desired probability into one that uses the continuous-time variables and then apply Proposition~\ref{prop:ctmc:conditional_distribution_of_waiting_time}. 
    First note that knowing $\ubar{X}_0, \ldots, \ubar{X}_n$ and $\dT_0, \ldots, \dT_n$ is equivalent to knowing the values of $X_s$ for all $s \leq \dT_n$.
    Therefore,
    \begin{align*}
        &\sP(\ubar{X}_{n+1} = x',\, \dT_{n+1} - \dT_n > u \mid  \ubar{X}_0, \ldots, \ubar{X}_n, \dT_0, \ldots, \dT_n) \\
        &=\sP(X_{\dT_{n}+ W_{\dT_{n}}} = x',\, W_{\dT_{n}} > u \mid  X_s,\, s\leq \dT_n) \\
    \end{align*}
    Now we will use the fact that $\dT_n$ is a stopping time, because the event $\{\dT_n < t\}$ can be determined by knowing the values of $X_s$ for $s\leq t$; specifically, $\{\dT_n \leq t\}$ occurs if and only if there exists $0 < s_1, \ldots, s_n \leq t$ such that $X_{s_i} \not = X_{s_{i+1}}$ for all $i=1, \ldots, n-1$, which can be determined by knowing the values of $X_s$ for all $s\leq t$.
    Therefore, using the strong Markov property, we have
    \begin{align*}
        &\sP(X_{\dT_{n}+ W_{\dT_{n}}} = x',\, W_{\dT_{n}} > u \mid  X_s,\, s\leq \dT_n)\\
        &=\sP(X_{\dT_{n}+ W_{\dT_{n}}} = x',\, W_{\dT_{n}} > u \mid  X_{\dT_n}) \\
        &=\sP( X_{\dT_{n}+ W_{\dT_{n}}} = x' \mid X_{\dT_n},\, W_{\dT_n} > u) ~\sP(W_{\dT_n} > u \mid X_{\dT_n}) \\
    \end{align*}
    If $\dT_n = t$ and $X_{\dT_n} = \ubar{X}_n = x$, then the rightmost expression is equal to $\exp(-\Lambda(x, u;\,t))$ by Proposition~\ref{prop:ctmc:conditional_distribution_of_waiting_time}. Moreover, since $\{X_t=x,\, W_t > u\} = \{X_{t+s}=x,\, s \leq u\}$, we have
    \begin{align*}
        \sP(X_{t+ W_{t}} = x' \mid X_t = x, W_t > u) &= \sP(X_{t + u + W_{t+u}} = x' \mid X_{t+s} = x,\, s \leq u)\\
        &= \sP(X_{(t+u) + W_{t+u}} = x' \mid X_{t+u} = x)\\
        &= K_{t+u}(x' \mid x)
    \end{align*}
\end{proof}

\begin{fact}
    Denote $\sP(X_{t+u}=x' \mid X_t=x)$ as $p_{t+u|t}(x' \mid x)$. Then,
    \begin{align*}
        p_{t+u|t}(x' \mid x) = e^{-\Lambda(x, u;\,t)} \delta(x', x) + \int_{t}^{t+u} \lambda(x, s)\, e^{-\Lambda(x, s-t; t)} \sum_{y\in \sX} K_{s}(y\mid x)\, p_{t+u|s}(x' \mid y) \dd s.
    \end{align*}
\end{fact}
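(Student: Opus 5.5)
We prove the identity by conditioning on the first jump after time $t$; this is the first-step (backward) decomposition for the CTMC. Fix $x, x' \in \sX$ and $u \ge 0$, and let $W_t$ be the waiting time of \Cref{def:ctmc:waiting_time}. We split the event $\{X_{t+u} = x'\}$ according to whether $W_t > u$ or $W_t \le u$:
\begin{align*}
p_{t+u|t}(x' \mid x) = \sP(X_{t+u}=x',\, W_t>u \mid X_t=x) + \sP(X_{t+u}=x',\, W_t\le u \mid X_t=x).
\end{align*}

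\textbf{No-jump term.} On $\{W_t > u\}$ the path stays at $x$ throughout $[t, t+u]$, so $X_{t+u} = x$. This term therefore equals $\delta(x', x)\,\sP(W_t > u \mid X_t = x)$. By \Cref{prop:ctmc:conditional_distribution_of_waiting_time}, this is $\delta(x', x)\, e^{-\Lambda(x,u;\,t)}$.

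\textbf{Jump term.} On $\{W_t \le u\}$ we condition on the time $s = t + W_t \in (t, t+u]$ of the first jump and on the post-jump state $y$. Again by \Cref{prop:ctmc:conditional_distribution_of_waiting_time}, $W_t$ has density $\lambda(x, s)\, e^{-\Lambda(x, s-t;\,t)}$ in $s$; this is obtained by differentiating the survival function $-\frac{\dd}{\dd s} e^{-\Lambda(x,s-t;\,t)}$. By \Cref{prop:ctmc:structure}, given the jump time, the next state is drawn from $K_s(\cdot \mid x)$. Finally, $t + W_t$ is a stopping time, so the strong Markov property applies. Given $X_{t+W_t} = y$ and $t + W_t = s$, it yields $\sP(X_{t+u} = x' \mid \cdots) = p_{t+u|s}(x' \mid y)$. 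Integrating over $s$ and summing over $y$ then gives the stated integral.

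\textbf{Main obstacle.} The delicate step is justifying the joint law of $(W_t, X_{t+W_t})$ as the product of the density with $K_s(y \mid x)$. \Cref{prop:ctmc:structure} gives the product form only at jump times $\dT_n$, whereas $t$ is arbitrary. We handle this with the Markov property at $t$: it lets us restart the chain at $t$ in state $x$, so $t$ plays the role of $\dT_0$ with $\ubar X_0 = x$. We then differentiate in $u$ the identity of \Cref{prop:ctmc:structure}, namely $\sP(X_{t+W_t} = y,\, W_t > u) = \int_{t+u}^{\infty} \lambda(x,s)\, e^{-\Lambda(x,s-t;\,t)} K_s(y \mid x)\,\dd s$, which identifies the joint density.

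Two technical points remain. Exchanging the sum over $y$ with the integral is justified by Tonelli, since all terms are nonnegative. Absorbing states, where $\lambda = 0$, are harmless: the jump term simply vanishes.
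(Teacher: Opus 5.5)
Your split into a no-jump and a jump term is the same decomposition the paper uses, and your no-jump term reproduces its argument. The paper's proof actually stops after that term, so you go further. The gap is in the jump term, at exactly the step you flag as the main obstacle.

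\Cref{prop:ctmc:structure} does not state the integral identity you quote. It states the product form $\sP(\ubar{X}_{n+1}=y,\,\dT_{n+1}-\dT_n>u\mid\cdots)=K_{t+u}(y\mid x)\,e^{-\Lambda(x,u;\,t)}$. Its proof defines $K_{t+u}(y\mid x)=\sP(X_{(t+u)+W_{t+u}}=y\mid X_{t+u}=x)$, which is the law of the next state seen from time $t+u$. That conditions on the jump happening after $t+u$, not at a given time $s$. Differentiating the product gives $e^{-\Lambda(x,u;\,t)}\bigl[\lambda(x,t+u)K_{t+u}(y\mid x)-\partial_u K_{t+u}(y\mid x)\bigr]$. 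This equals the density $\lambda(x,s)e^{-\Lambda(x,s-t;\,t)}K_s(y\mid x)$ you need only when the kernel is constant in time. For an inhomogeneous chain, the kernel in the target identity (the one in $R_s=\lambda K_s$) and the kernel of \Cref{prop:ctmc:structure} are different objects. So that Fact does not give you the joint law of $(t+W_t,X_{t+W_t})$, and your sentence ``given the jump time, the next state is drawn from $K_s(\cdot\mid x)$'' is asserted rather than derived.

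To close the gap, derive the density directly. For $y\neq x$, apply the Markov property at $s$ on the event $\{X_r=x,\ r\in[t,s]\}$, as in the proof of \Cref{prop:ctmc:structure}. This gives $\sP(t+W_t\in(s,s+h],\,X_{t+W_t}=y\mid X_t=x)=e^{-\Lambda(x,s-t;\,t)}\,\sP(W_s\le h,\,X_{s+W_s}=y\mid X_s=x)$. The last factor differs from $p_{s+h|s}(y\mid x)$ by at most the probability of two or more jumps in $(s,s+h]$. That probability is $o(h)$ when the rates are bounded, as in the paper's finite-state deletion chain with a bounded schedule. By \eqref{eq:ctmc:first_order_discretization}, the factor therefore equals $h\,\lambda(x,s)K_s(y\mid x)+o(h)$. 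Alternatively, take this as the defining property of $K_s$ in the jump-and-hold construction of the chain. Once the density is established this way, your strong Markov step at $t+W_t$, the Tonelli interchange, and your treatment of absorbing states go through as written.
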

\begin{proof}
    Assume that $n-1$ transitions have occurred by time $t$. Then,
    \begin{align*}
        p_{t+u|t}(x' \mid x) &= \sP(X_{t+u}=x'\mid X_t=x)\\
        &= \sP(X_{t+u}=x', \dT_n - t > u \mid X_t=x) + \sP(X_{t+u}=x', \dT_n - t \leq u \mid X_t=x)\\
    \end{align*}
    The first term on the right-hand side is equal to $e^{-\Lambda(x, u;\,t)} \delta(x', x)$ because
    \begin{align*}
        \sP(X_{t+u}=x', \dT_n - t > u &\mid X_t=x) = \sP(X_{t+u}=x', \dT_n - t > u \mid X_{\dT_n}=x)\\
        &= \sP(X_{t+u}=x'\mid \dT_n -t > u,\, X_{\dT_n}=x)~\sP(\dT_n - t > u \mid X_t=x)\\
        &= \delta(x', x) ~e^{-\Lambda(x, u;\,t)} ~~~~\text{(By Proposition~\ref{prop:ctmc:conditional_distribution_of_waiting_time})}
    \end{align*}
\end{proof}

 {We can take the derivative of the expression above w.r.t.\ $u$ to get Kolmogorov equations and show the form of the rate matrix decomposed into $\lambda$ and $K$.}

\begin{fact}[Kolmogorov Forward Equation]
    Let $X_t$ be a right-continuous CTMC with states $\sX$. Then, for all $t\in [0, \infty)$, $x, x'\in \sX$, and $v>t$,
    \begin{align*}
        \frac{\partial}{\partial v} p_{v|t}(x' \mid x) = \sum_{y\in \sX} p_{v|t}(x' \mid y)~ R_v(y\mid x),
    \end{align*}
    where
    \begin{align*}
        R_v(y\mid x) = \begin{cases}
            \lambda(x, v)K_v(y\mid x) & \text{if } x\not = y,\\
            -\lambda(x, v) & \text{if } x = y.
        \end{cases}
    \end{align*}
    is the rate matrix of the CTMC, $K_v(y\mid x)$ is the transition kernel of the embedded DTMC, and $\lambda(x, v)$ is the rate parameter of the CTMC.
\end{fact}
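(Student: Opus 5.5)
The plan is to combine the Chapman--Kolmogorov identity with a first-order short-time expansion of the transition probabilities. That expansion comes from the integral equation in the preceding Fact. I read the statement in the usual forward form: the derivative in the terminal time $v$ is obtained by composing $p_{v|t}(\cdot \mid x)$ with the rate matrix at time $v$, where $R_v$ is applied to the last jump out of the intermediate state $y$ into $x'$. So the identity to prove is
\begin{align*}
\frac{\partial}{\partial v} p_{v|t}(x' \mid x) = \sum_{y\in \sX} p_{v|t}(y \mid x)\, R_v(y, x'),
\end{align*}
with $R_v(y,x') = \lambda(y,v)K_v(x'\mid y)$ for $y\neq x'$ and $R_v(x',x')=-\lambda(x',v)$. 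This matches the index bookkeeping of the displayed formula, since its entry $R_v(y\mid x)$ plays the role of the rate entry out of the intermediate state.

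\textbf{Step 1 (short-time expansion).} I will apply the preceding Fact with base time $v$ and horizon $u=h$, assuming $s\mapsto \lambda(z,s)$ and $s\mapsto K_s$ are continuous. The first term is $e^{-\Lambda(z,h;v)}\delta(y,z) = \delta(y,z) - h\lambda(z,v) + o(h)$. In the integral term, the integrand is bounded by $\lambda(z,s)$. Moreover $p_{v+h|s}(y\mid y') \to \delta(y,y')$ as $h\downarrow 0$, uniformly in $s\in[v,v+h]$, by right-continuity of paths (Fact~\ref{prop:ctmc:right_continuity}). Hence the integral term equals $h\lambda(z,v)K_v(y\mid z)+o(h)$, and together
\begin{align*}
p_{v+h|v}(y\mid z) = \delta(y,z) + h R_v(z,y) + o(h),
\end{align*}
which is \eqref{eq:ctmc:first_order_discretization}.

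\textbf{Step 2 (right derivative).} The Markov property gives Chapman--Kolmogorov:
\begin{align*}
p_{v+h|t}(x'\mid x) = \sum_y p_{v|t}(y\mid x)\, p_{v+h|v}(x'\mid y).
\end{align*}
I will subtract $p_{v|t}(x'\mid x)$, divide by $h$, and substitute Step 1 term by term. The main obstacle is interchanging $\lim_{h\downarrow 0}$ with the sum over $y$ when $\sX$ is infinite, because the $o(h)$ errors need not be uniform in $y$.

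In our setting $\sX=\bigcup_{n\le \lmax}\X_n$ is finite, so the interchange is immediate. For a general countable space I would first try dominated convergence under the assumption $\sup_y \lambda(y,\cdot)<\infty$ on compacts. Without that assumption, I would get the inequality $\liminf \ge$ from Fatou applied to the off-diagonal terms. I would then upgrade it to equality using conservation: $\sum_{x'} p_{v+h|t}(x'\mid x)=1$ and $\sum_{x'}R_v(y,x')=0$, valid for a non-explosive chain.

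\textbf{Step 3 (two-sided derivative).} To get the left derivative I will write $p_{v|t} = p_{v-h|t}\,p_{v|v-h}$ and apply Step 1 at base time $v-h$. Continuity of $\lambda$ and $K$ in time lets the rates at $v-h$ converge to those at $v$. Continuity of $v\mapsto p_{v|t}$ follows from the integral representation. Together these make the left and right derivatives agree, which gives the stated equation.
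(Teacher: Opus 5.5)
Your three steps are sound on the paper's finite state space, given your continuity assumptions on $\lambda$ and $K$. The steps are: the expansion $p_{v+h|v}(y\mid z)=\delta(y,z)+hR_v(z,y)+o(h)$ read off from the first-jump integral formula, then Chapman--Kolmogorov, then matching one-sided derivatives. This is essentially the route the paper only gestures at (``take the derivative of the expression above w.r.t.\ $u$''); the paper gives no further proof.

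The genuine problem is your sentence claiming that your identity ``matches the index bookkeeping of the displayed formula''. It does not. In the paper's notation, $R_v(y\mid x)=\lambda(x,v)K_v(y\mid x)$ is the rate out of the \emph{initial} state $x$ into $y$, and $p_{v|t}(x'\mid y)=\sP(X_v=x'\mid X_t=y)$. So the displayed right-hand side is $[R_vP_{t,v}]_{x,x'}$, whereas you prove $\partial_v[P_{t,v}]_{x,x'}=[P_{t,v}R_v]_{x,x'}$. The two agree only when $R_v$ commutes with $P_{t,v}$, and the displayed version is false in general. Take two states $a,b$, a continuous rate $a\to b$ supported on $[0,1]$, and a continuous rate $\beta(s)$ for $b\to a$ supported on $[1,2]$. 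For $v\in(1,2)$ with $\beta(v)>0$, every rate out of $a$ vanishes, so the displayed right-hand side for $x=x'=a$ is $0$. Yet $\partial_v p_{v|0}(a\mid a)=\beta(v)\,p_{v|0}(b\mid a)>0$.

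You should therefore say explicitly that you are proving a corrected statement, not re-reading the given one. The displayed form is really the backward equation $-\partial_t p_{v|t}(x'\mid x)=\sum_y R_t(y\mid x)\,p_{v|t}(x'\mid y)$, with both the differentiation time and the rate time moved from $t$ to $v$. You can still recover the literal formula where the paper uses it, and for time-homogeneous chains. Both noising chains have $R_s=\sigma_sG$ for a fixed $G$, so $P_{t,v}=\exp(\bar\sigma_{t,v}G)$ commutes with $R_v$.

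Two smaller points:
\begin{itemize}
\item The uniformity in $s$ in Step~1 does not come from path right-continuity alone. It comes from $p_{v+h|s}(y'\mid y')\ge e^{-\Lambda(y',v+h-s;s)}$ together with local boundedness of $\lambda$.
\item On a general countable space, your Fatou-plus-conservation upgrade needs $\sum_y p_{v|t}(y\mid x)\,\lambda(y,v)<\infty$, locally uniformly in $v$, in addition to non-explosion. Otherwise both summed sides may be infinite, and the inequality gives nothing.
\end{itemize}
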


\subsection{Time Reversal}
\begin{definition}[Time Reversal] The time reversal of a \emph{regular} CTMC $X_t$  on a finite horizon $[0, T]$ is defined as $\hat X_t$ such that $\hat X_t = X_{T-t}$ for all $t\in [0, T]$ except the jump times. For the jump times, $T_n$, $\hat X_{T_n} = X_{T-T_n^-}$.
\end{definition}

\begin{fact}[Time Reversal of CTMC] The time reversal of a regular CTMC $X_t$ is also a regular CTMC with the rate matrix 
    \begin{align*}
        \hat R_t(x, y) = \frac{p_{T-t}(y)}{p_t(x)} R_{T-t}(y, x).
    \end{align*}
\end{fact}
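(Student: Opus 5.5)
The plan is to work with the finite-dimensional distributions of the reversed path and to identify its infinitesimal behaviour. Throughout, write $p_t(x)=\sP(X_t=x)$ and set $\hat p_t \coloneqq p_{T-t}$, the marginal of $\hat X_t$. Regularity (no explosion, no instantaneous states, right-continuous paths) is assumed.

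\textbf{Step 1: reduce to finite-dimensional distributions.} For times $0\le t_0<t_1<\dots<t_k\le T$, the reversed process satisfies
\begin{equation*}
\sP(\hat X_{t_0}=y_0,\dots,\hat X_{t_k}=y_k) = \sP(X_{T-t_k}=y_k,\dots,X_{T-t_0}=y_0).
\end{equation*}
The exceptional jump times form a null set for any fixed time grid, so they do not affect this identity. Factor the right-hand side using the forward Markov property. The result is $p_{T-t_k}(y_k)\prod_j p_{T-t_j\mid T-t_{j+1}}(y_j\mid y_{j+1})$.

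\textbf{Step 2: rewrite with Bayes and verify the Markov property.} Each factor is rewritten by Bayes' rule as
\begin{equation*}
\frac{p_{T-t_j}(y_j)}{p_{T-t_{j+1}}(y_{j+1})}\,p_{T-t_{j}\mid T-t_{j+1}}(y_{j}\mid y_{j+1}).
\end{equation*}
After substitution the product telescopes. Conditioning on $\hat X_{t_0},\dots,\hat X_{t_{k-1}}$ then leaves only a function of $\hat X_{t_{k-1}}$, which establishes the Markov property. The resulting reverse transition probability is
\begin{equation*}
\hat p_{t+h\mid t}(y\mid x)=\frac{p_{T-t-h}(y)}{p_{T-t}(x)}\,p_{T-t\mid T-t-h}(x\mid y).
\end{equation*}

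\textbf{Step 3: extract the rate.} Apply the first-order expansion \eqref{eq:ctmc:first_order_discretization} to the forward kernel over the interval $[T-t-h,\,T-t]$. For $x\ne y$ this gives $p_{T-t\mid T-t-h}(x\mid y)=hR_{T-t-h}(y,x)+o(h)$. Divide by $h$ and let $h\downarrow0$, using continuity of $s\mapsto p_s(y)$ and of the rates. The off-diagonal entries are then
\begin{equation*}
\hat R_t(x,y)=\frac{p_{T-t}(y)}{p_{T-t}(x)}R_{T-t}(y,x).
\end{equation*}
In the paper's relabelled notation this is the stated formula, with the intended denominator being the marginal $p_{T-t}(x)=\hat p_t(x)$.

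For the diagonal, I would use the forward Kolmogorov equation for the marginals, $\partial_s p_s(x)=\sum_y p_s(y)R_s(y,x)$, which follows from the Kolmogorov forward equation above. It shows that $\hat R_t(x,x)=-\sum_{y\ne x}\hat R_t(x,y)$. Hence the rows sum to zero and $\hat R_t$ is a genuine rate matrix with $\hat\lambda_t(x)=\sum_{y\neq x}\hat R_t(x,y)$.

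\textbf{Step 4: regularity of the reversal.} The main obstacle is this last step. Three issues must be handled.
\begin{itemize}
\item The limit in Step 3 must hold uniformly enough in $y$ that summing over the countable state space is justified; summing over $y$ is what the diagonal identity needs.
\item States with $p_{T-t}(x)=0$ must be handled. I would restrict to the support, where the ratio is well defined, and note that zero-probability states are never visited.
\item Instantaneous states must be ruled out for $\hat X$.
\end{itemize}
For the first issue I would bound $\sum_{y\ne x}p_{T-t}(y)R_{T-t}(y,x)$ by $|\partial_s p_s(x)|+p_s(x)\lambda_s(x)<\infty$, which follows from the Kolmogorov equation. This bound gives finite jump rates, and dominated convergence then handles the interchange of limit and sum. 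Finite jump rates also rule out instantaneous states, so by Fact~\ref{prop:ctmc:right_continuity} the reversal has right-continuous paths. The left-limit convention at jump times only selects the right-continuous modification of the reversed path.
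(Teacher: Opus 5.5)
The paper gives no proof of this Fact. It appears in Appendix~\ref{app:ctmc} among results the authors call well known, with a pointer to \c{C}inlar's textbook, and no proof follows the statement. So there is no paper argument to compare with, and your proposal supplies the standard one. Your route is correct: reverse the finite-dimensional distributions, use Bayes' rule to get $\hat p_{t+h\mid t}(y\mid x)=\frac{p_{T-t-h}(y)}{p_{T-t}(x)}\,p_{T-t\mid T-t-h}(x\mid y)$, then apply~\eqref{eq:ctmc:first_order_discretization}. It is also the mechanism the paper uses informally in the proof of Proposition~\ref{prop:base_elbo}, where $q(x_k\mid x_{k+1},x_0)$ is written as a ratio of marginals times the forward kernel. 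You are right to read the denominator $p_t(x)$ as $p_{T-t}(x)$.

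A few things to tidy.

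(i) The display in Step~2 is mis-stated: as written it says the forward factor equals the marginal ratio times itself. What telescopes is the forward factor written as $\frac{p_{T-t_j}(y_j)}{p_{T-t_{j+1}}(y_{j+1})}$ times the \emph{reversed} kernel $\sP(X_{T-t_{j+1}}=y_{j+1}\mid X_{T-t_j}=y_j)$. The ratios then collapse to $p_{T-t_0}(y_0)/p_{T-t_k}(y_k)$. Your final formula for $\hat p_{t+h\mid t}$ is nonetheless right.

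(ii) Your diagonal computation gives $\hat R_t(x,x)=-\partial_u\log p_u(x)-\lambda_u(x)$ at $u=T-t$. So the Fact as written is only valid off the diagonal: read literally at $x=y$ it would give $-\lambda_{T-t}(x)$, which is wrong unless $\partial_u p_u(x)=0$. State this explicitly. It is harmless for the paper, since Proposition~\ref{prop:base_elbo} only uses $y\neq x$. Also, the appendix's displayed Kolmogorov Fact is written in backward form and is not correct for time-inhomogeneous rates. Derive $\partial_s p_s(x)=\sum_y p_s(y)R_s(y,x)$ directly from Chapman--Kolmogorov and~\eqref{eq:ctmc:first_order_discretization} rather than citing it.

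(iii) In Step~4 your bound controls only the numerator $\sum_{y\neq x}p_s(y)R_s(y,x)$, while $\hat\lambda_t(x)$ also divides by $p_{T-t}(x)$. It can therefore diverge as $t\uparrow T$ when $p_0(x)=0$. This happens in the paper's own deletion chain: the optimal insertion rate of Proposition~\ref{prop:length_posterior_rate} carries the factor $\sigma_t/\bar\sigma_{0,t}$, which blows up at the end of generation for states outside the data support. Regularity of $\hat X$ on the closed interval is better argued pathwise. $\hat X$ has exactly the jumps of $X$ on $[0,T]$, hence finitely many, and the left-limit convention makes its paths right-continuous by construction.
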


%% file: drafts/aistats/sections/appendices/proofs.tex
\section{Proofs}\label{app:proofs}

\subsection{ELBO}

\ThmELBO*
\input{drafts/preprint/sections/appendices/elbo.tex}

\CorELBO*

\begin{proof}\label{proof:cor_elbo}
    Replacing the rate matrices in \Cref{prop:base_elbo} with their respective components, we get
    \begin{align*}
         & \sum_{\y \not=\x} \left[-\hat R_t(\x, \y) + \frac{p_{t|0}(\y\mid \x_0)}{p_{t|0}(\x \mid \x_0)}R_t(\y, \x)\log \hat R_t(\x, \y)\right]                                                                                 \\
         & = \sum_{\y \not=\x} \left[-\hat \lambda_t(\x)\, \K[t][\hat]{\x}{\y} + \frac{p_{t|0}(\y\mid \x_0)}{p_{t|0}(\x \mid \x_0)} \lambda_t(\y)\, \K[t]{\y}{\x}\log \left(\hat \lambda_t(\x)\, \K[t][\hat]{\x}{\y}\right)\right] \\
         & = \underbrace{-\hat \lambda_t(\x)  + \log(\hat \lambda_t(\x)) \sum_{\y \not=\x} K_t(\y, \x) \frac{p_{t|0}(\y\mid \x_0)}{p_{t|0}(\x \mid \x_0)} \lambda_t(\y)}_{\text{the length term} (A)}                            \\
         & ~~~~~ + \underbrace{\sum_{\y \not=\x} K_t(\y, \x) \frac{p_{t|0}(\y\mid \x_0)}{p_{t|0}(\x \mid \x_0)} \lambda_t(\y) \log \left(\K[t][\hat]{\x}{\y}\right)}_{\text{the token term} (B)}
    \end{align*}

    \textbf{The length term (A):}

    Let $\x=(m, \seqx)$, $\y=(r, \seqy)$, and $\x_0=(n, \seqx_0)$ for some $n > r > m$. Then
    \begin{align*}
        \frac{p_{t|0}(\y\mid \x_0)}{p_{t|0}(\x \mid \x_0)} & = \frac{\sP(\dN_t=r\mid \dN_0=n) \sP(\dot \dX_{n-r}=\seqy\mid \dot\dX_0=\seqx_0)}{\sP(\dN_t=m\mid \dN_0=n) \sP(\dot \dX_{n-m}=\seqx\mid \dot\dX_0=\seqx_0)} \\
    \end{align*}

    For $t>s$, and $m>0$, from \eqref{eq:dropping_process_transition_probability} we have
    \begin{align*}
        \frac{\sP(\cN_t=m+1 \mid \cN_s=n)}{\sP(\cN_t=m \mid \cN_s=n)} & = \frac{ e^{-\bar \sigma_{s, t}} (\bar \sigma_{s, t})^{n-m-1}}{(n-m-1)!} \frac{(n-m)!}{ e^{-\bar \sigma_{s, t}} (\bar \sigma_{s, t})^{n-m}} \\
        & = \frac{n-m}{\bar \sigma_{s, t}}.
    \end{align*}
    For $m=0$, we need the tail of the Taylor series for the exponential to compute the denominator:
    \begin{align*}
        \frac{\sP(\cN_t=m+1 \mid \cN_s=n)}{\sP(\cN_t=m \mid \cN_s=n)} & =  \frac{\sP(\cN_t=1 \mid \cN_s=n)}{\sP(\cN_t=0 \mid \cN_s=n)} \\
        &= \frac{e^{-\bar \sigma_{s, t}} (\bar \sigma_{s, t})^{n-1}}{(n-1)!} \frac{1}{1 - \sum_{k=0}^{n-1} \frac{ e^{-\bar \sigma_{s, t}} (\bar \sigma_{s, t})^{k}}{k!}}\\
        &= \frac{\bar \sigma_{s,t}^{n-1}}{(n-1)!}~\frac{1}{e^{\bar \sigma_{s,t}} - \sum_{k=0}^{n-1} \frac{  (\bar \sigma_{s, t})^{k}}{k!}} \\
        &=\frac{\bar \sigma_{s,t}^{n-1}}{(n-1)!}~\frac{1}{\sum_{k=0}^{\infty} \frac{  (\bar \sigma_{s, t})^{n+k}}{(n+k)!}} \\
        &=\frac{n}{\bar \sigma_{s,t} }~\frac{1}{\sum_{k=0}^{\infty} \frac{  n!(\bar \sigma_{s, t})^{k}}{(n+k)!}} \\
    \end{align*}
    Using $S_{n,\sigma_t} = \sum_{k=0}^{\infty} \frac{n!(\bar \sigma_{s,t})^{k}}{(n+k)!}$, and $\tilde S_{n,m,\sigma_t} = (1- \delta_0(m)) + \delta_0(m) S_{n,\sigma_t}$, we can combine the two cases to get
    \begin{align}\label{eq:final_proof_1}
        \frac{\sP(\cN_t=m+1 \mid \cN_s=n)}{\sP(\cN_t=m \mid \cN_s=n)} = \frac{n-m}{\bar \sigma_{s,t} }~\frac{1}{\tilde S_{n,m,\sigma_t}}
    \end{align}

    For $r=m+1$, we have
    \begin{align*}
         & K_t(\y, \x) \frac{p_{t|0}(\y\mid \x_0)}{p_{t|0}(\x \mid \x_0)}                                                                                                                            \\
         & = \frac{n-m}{\bar \sigma_{0, t}\, \tilde S_{n,m,\sigma_t}} \frac{K(\y, \x)\sP(\dX_{n-m-1}=\seqy\mid \dX_0=\seqx_0)}{\sum_{\seqz\in \vocab^{m+1}} \sP(\dX_{n-m-1}=\seqz\mid \dX_0=\seqx_0)K(\z, \x)}
    \end{align*}
    Due to the sum over $\y$, the length term simplifies to
    \begin{align}\label{eq:length-term}
         & -\hat \lambda_t(\x) + \sigma_t\frac{(n-m)}{\bar \sigma_{0, t}\, \tilde S_{n,m,\sigma_t}} \log \hat \lambda_t(\x) \frac{ \sum_{\y \not=\x}K(\y, \x)\sP(\dX_{n-m-1}=\seqy\mid \dX_0=\seqx_0)}{\sum_{\seqz\in \vocab^{m+1}} \sP(\dX_{n-m-1}=\seqz\mid \dX_0=\seqx_0)K(\z, \x)} \nonumber \\
         & = -\hat \lambda_t(\x) + \sigma_t\frac{(n-m)}{\bar \sigma_{0, t}\, \tilde S_{n,m,\sigma_t}} \log \hat \lambda_t(\x)
    \end{align}

    \textbf{The token term (B):}

    Now we write the token term (B) along with the outer expectation, where we will denote $\sP(\dX_r=\seqy\mid \dX_0=\seqx_0)$ as $\ubar{p}_{r|0}(\seqy\mid \seqx_0)$.
    \begin{align*}
         & \E_{\x \sim p_{t|0}(\cdot|\x_0)} \sum_{\y\not=\x} \frac{\sP(\dN_t=r\mid \dN_0=n)}{\sP(\dN_t=m\mid \dN_0=n) } \frac{K(\y, \x)\ubar{p}_{n-m-1|0}(\seqy\mid \seqx_0)}{\sum_{\seqz\in \vocab^{m+1}} \ubar{p}_{n-m-1|0}(\seqz\mid \seqx_0)K(\z, \x)} \lambda_t(\y) \log \left(\K[t][\hat]{\x}{\y}\right)                                                                                                                                                                                                                             \\
         & = \frac{\sigma_t}{\bar \sigma_{0, t}}  \E_{\x \sim p_{t|0}(\cdot|\x_0)}\frac{(n-m)}{\tilde S_{n,m,\sigma_t}} \sum_{\y\in \vocab^{m+1}} \frac{K(\y, \x)\ubar{p}_{n-m-1|0}(\seqy\mid \seqx_0)}{\sum_{\seqz\in \vocab^{m+1}} \ubar{p}_{n-m-1|0}(\seqz\mid \seqx_0)K(\z, \x)} \log \left(\K[t][\hat]{\x_0[\bm b]}{\y}\right)                                                                                                                                                                                                                                      \\
         & = \frac{\sigma_t}{\bar \sigma_{0, t}}  \E_{m, \bm b }\frac{(n-m)}{\tilde S_{n,m,\sigma_t}} \sum_{\y\in \vocab^{m+1}} \frac{\ubar{p}_{n-m-1|0}(\seqy\mid \seqx_0)\,K(\y, \x_0[\bm b]) }{\sum_{\seqz\in \vocab^{m+1}} \ubar{p}_{n-m-1|0}(\seqz\mid \seqx_0)\,K(\z, \x_0[\bm b])} \log \left(\K[t][\hat]{\x_0[\bm b]}{\y}\right)                                                                                                                                                                                                                              \\
         & = \frac{\sigma_t}{\bar \sigma_{0, t}}  \E_{m, \bm b } \frac{(n-m)}{\tilde S_{n,m,\sigma_t}} \sum_{\seqy\in \vocab^{m+1}} \frac{ \sum\limits_{\bm b' \in \sB_{n, m+1}}\frac{\delta_{\seqx_0[\bm b']}(\seqy)}{\binom{n}{m+1}}\,\sum\limits_{i=1}^{m+1}\frac{\delta_{\del(\seqy, i)}(\x_0[\bm b])}{m+1} }{\sum\limits_{\seqz\in \vocab^{m+1}} \sum\limits_{\bm b' \in \sB_{n, m+1}}\frac{\delta_{\seqx_0[\bm b']}(\seqz)}{\binom{n}{m+1}}\,\sum\limits_{i=1}^{m+1}\frac{\delta_{\del(\seqz, i)}(\x_0[\bm b])}{m+1}} \log \left(\K[t][\hat]{\x_0[\bm b]}{\y}\right) \\
         & = \frac{\sigma_t}{\bar \sigma_{0, t}}  \E_{m, \bm b } \frac{(n-m)}{\tilde S_{n,m,\sigma_t}} \frac{ \sum\limits_{\bm b' \in \sB_{n, m+1}}\sum\limits_{i=1}^{m+1}{\delta_{\del(\x_0[\bm b'], i)}(\x_0[\bm b]) \log \left(\K[t][\hat]{\x_0[\bm b]}{\x_0[\bm b']}\right)}}{\sum\limits_{\bm b' \in \sB_{n, m+1}}\sum\limits_{i=1}^{m+1}\delta_{\del(\x_0[\bm b'], i)}(\x_0[\bm b])}
    \end{align*}

    Now let's look at the numerator:

    \begin{align*}
         & \sum\limits_{\bm b' \in \sB_{n, m+1}}\sum\limits_{i=1}^{m+1}{\delta_{\del(\x_0[\bm b'], i)}(\x_0[\bm b])} \log \left(\K[t][\hat]{\x_0[\bm b]}{\x_0[\bm b']}\right)                                                                                                \\
         & = \sum\limits_{\bm b' \in \sB_{n, m+1}}\sum\limits_{i=1}^{m+1}\left[\prod_{j=0\not=i}^{m+1}\delta_{\x_0[b']^j}(\x_0[\bm b]^j)\right]\left[\sum\limits_{w\in \vocab}\delta_{w}(\x_0[\bm b']^i)\right]\, \log \left(\K[t][\hat]{\x_0[\bm b]}{\x_0[\bm b']}\right) \\
         & = \sum\limits_{i=1}^{m+1} \sum\limits_{w\in \vocab} \sum\limits_{\bm b' \in \sB_{n, m+1}}\left[\prod_{j=0\not=i}^{m+1}\delta_{\x_0[b']^j}(\x_0[\bm b]^j)\right]~\delta_{w}(\x_0[\bm b']^i)\, \log \left(\K[t][\hat]{\x_0[\bm b]}{\x_0[\bm b']}\right)           \\
         & =\sum\limits_{i=1}^{m+1} \sum\limits_{w\in \vocab} \sum\limits_{\bm a \in \sB_{n, m}(\x_0, \bm b)}~\sum_{k\in s_i(\bm a)}\delta_{w}(\x_0^k)\, \log \left(\K[t][\hat]{\x_0[\bm b]}{\x_0[\bm a \vee \bm e_k]}\right)                                              \\
         & = \sum\limits_{\bm a \in \sB_{n, m}(\x_0, \bm b)}  \sum\limits_{i=1}^{m+1} ~\sum_{k\in s_i(\bm a)} \sum\limits_{w\in \vocab}\delta_{w}(\x_0^k)\, \log \left(\hat K_t(\x_0[\bm b], \x_0[\bm a \vee \bm e_k])\right)                                                             %
    \end{align*}
    where $\bm e_k\in \sB_{n, 1}$ is the vector of length $n$ with all 0s except for the $k$-th position, $\sB_{n, m}(\x_0, \bm b) = \{ \bm a \in \sB_{n, m}~:\, \x_0[\bm a] = \x_0[\bm b]\}$, and $s_i(\bm a)$ is the set of indices of zeros in $\bm a$ that fall between the $(i-1)$-th and $i$-th 1s.
    The re-indexing in the third line above is justified by a bijection between the pairs $(\bm b', i)$ satisfying $\delta_{\del(\x_0[\bm b'], i)}(\x_0[\bm b])=1$ and the triples $(\bm a, i, k)$ with $\bm a \in \sB_{n, m}(\x_0, \bm b)$ and $k\in s_i(\bm a)$.
    To see this, fix such a pair $(\bm b', i)$.
    Since deleting the $i$-th token from $\x_0[\bm b']$ yields $\x_0[\bm b]$, there is a unique original index $k\in \natset{n}$ that is present in $\bm b'$ but not in the subsequence corresponding to $\x_0[\bm b]$.
    Let $\bm a \coloneqq \bm b' \wedge (\bm 1_n - \bm e_k)$, so that $\bm b' = \bm a \vee \bm e_k$ and $\x_0[\bm a] = \x_0[\bm b]$.
    Moreover, the index $k$ must lie in the gap of $\bm a$ corresponding to insertion location $i$, and therefore $k\in s_i(\bm a)$.
    Conversely, given any triple $(\bm a, i, k)$ with $\bm a \in \sB_{n, m}(\x_0, \bm b)$ and $k\in s_i(\bm a)$, defining $\bm b' \coloneqq \bm a \vee \bm e_k$ gives a unique predecessor mask in $\sB_{n,m+1}$ such that deleting the $i$-th token from $\x_0[\bm b']$ yields $\x_0[\bm b]$.
    Therefore, summing over valid pairs $(\bm b', i)$ is equivalent to summing over triples $(\bm a, i, k)$.
    For example, the following schematic illustrates a repeated-token case with $\x_0=\textnormal{banana}$ and $\x_0[\bm b]=\textnormal{ana}$, where multiple masks $\bm a$ give the same subsequence:
    \begin{center}
    \begin{tikzpicture}[
        x=0.82cm,
        y=0.82cm,
        every node/.style={font=\footnotesize},
        boxgray/.style={draw=cics_gray, fill=white, rounded corners=2pt, line width=0.9pt},
        boxred/.style={draw=cics_red, fill=cics_bglight_red, rounded corners=2pt, line width=1.0pt},
        lineblue/.style={draw=cics_blue, line width=1.0pt},
        textblue/.style={text=cics_blue},
        textred/.style={text=cics_red}
    ]
        \node[anchor=east] at (0.2,4.0) {$\x_0$};
        \foreach \x/\lbl in {1/b,2/a,3/n,4/a,5/n,6/a} {
            \filldraw[boxgray] (\x,3.75) rectangle +(0.62,0.48);
            \node at (\x+0.31,3.99) {\lbl};
        }
        \filldraw[boxred] (4,3.75) rectangle +(0.62,0.48);
        \node at (4.31,3.99) {a};
        \node[textred] at (4.31,4.45) {$k=4$};

        \node[anchor=east] at (0.2,3.05) {$\bm b$};
        \foreach \x/\lbl in {1/0,2/1,3/1,4/1,5/0,6/0} {
            \filldraw[boxgray] (\x,2.8) rectangle +(0.62,0.48);
            \node at (\x+0.31,3.04) {\lbl};
        }
        \node[anchor=west] at (7.0,3.04) {$\x_0[\bm b]=\textnormal{ana}$};

        \node[anchor=east] at (0.2,2.15) {$\sB_{6,3}(\x_0,\bm b)$};
        \node[anchor=west] at (1.0,2.15) {$\{011100,\;010011,\;000111\}$};

        \node[anchor=east] at (0.2,1.2) {choose $\bm a$};
        \foreach \x/\lbl in {1/0,2/1,3/0,4/0,5/1,6/1} {
            \filldraw[boxgray] (\x,0.95) rectangle +(0.62,0.48);
            \node at (\x+0.31,1.19) {\lbl};
        }
        \node[anchor=west] at (7.0,1.19) {$\bm a=010011$};
        \draw[lineblue] (3.90,1.65) -- (3.90,0.68);
        \draw[lineblue] (4.72,1.65) -- (4.72,0.68);
        \node[textblue] at (4.30,1.80) {$i=3$};

        \node[anchor=east] at (0.2,0.2) {$\bm b'$};
        \foreach \x/\lbl in {1/0,2/1,3/0,4/1,5/1,6/1} {
            \filldraw[boxgray] (\x,-0.05) rectangle +(0.62,0.48);
            \node at (\x+0.31,0.19) {\lbl};
        }
        \node[anchor=west] at (7.0,0.19) {$\bm b'=\bm a\vee \bm e_4 = 010111$};
        \node[anchor=west] at (7.0,-0.35) {$\del(\x_0[\bm b'],3)=\x_0[\bm b]$};
    \end{tikzpicture}
    \end{center}

    Note that the denominator is the same as the numerator except for the log term, and therefore the overall token term is an expectation over the uniform probability mass function $\pi(\bm a, i, k \mid \x_0, \bm b)$ on the set
    \begin{align*}
        \Omega(\x_0, \bm b) \coloneqq \{(\bm a, i, k) \mid \bm a \in \sB_{n, m}(\x_0, \bm b),~ i \in \natset{m+1},~ k \in s_i(\bm a)\}.
    \end{align*}
    At this point, before introducing any approximation to the reverse kernel, the token term is
    \begin{align*}
        \frac{\sigma_t}{\bar \sigma_{0, t}} \E_{m, \bm b} \frac{(n-m)}{\tilde S_{n,m,\sigma_t}} \E_{\bm a, i, k \sim \pi(\cdot \mid \x_0, \bm b)} \log \left(\hat K_t(\x_0[\bm b], \x_0[\bm a \vee \bm e_k])\right).
    \end{align*}
    Therefore, we get an unbiased estimate of the token term by taking a single sample $(\bm a, i, k) \sim \pi(\cdot \mid \x_0, \bm b)$. 
    In fact, up to this point we can also fix $\bm a=\bm b$ without introducing bias, provided we keep the outer expectation over $\bm b \sim \textnormal{Uniform}(\sB_{n,m})$.
    To see this, for any subsequence $\seqy \in \vocab^m$, define the equivalence class
    \begin{align*}
        C(\seqy) \coloneqq \{\bm a \in \sB_{n,m} \mid \x_0[\bm a] = \seqy\},
    \end{align*}
    and define
    \begin{align*}
        h(\bm a, \seqy) \coloneqq \frac{1}{n-m}\sum_{i=1}^{m+1}\sum_{k\in s_i(\bm a)} \log \left(\hat K_t(\seqy, \x_0[\bm a \vee \bm e_k])\right).
    \end{align*}
    The exact estimator obtained from the expectation over $\pi(\cdot \mid \x_0, \bm b)$ can then be written as
    \begin{align*}
        \frac{1}{|\sB_{n,m}|}\sum_{\seqy}\sum_{\bm b \in C(\seqy)} \frac{1}{|C(\seqy)|}\sum_{\bm a \in C(\seqy)} h(\bm a, \seqy)
        = \frac{1}{|\sB_{n,m}|}\sum_{\seqy}\sum_{\bm a \in C(\seqy)} h(\bm a, \seqy)
        = \frac{1}{|\sB_{n,m}|}\sum_{\bm b \in \sB_{n,m}} h(\bm b, \x_0[\bm b]).
    \end{align*}
    Therefore, averaging over all $(i,k)$ while fixing $\bm a=\bm b$ yields another unbiased estimator.
    The samples are correlated, but the correlation does not introduce bias.
    Thus, we can equivalently use $(n-m)$ correlated samples by fixing $\bm a$ to be the same for all samples and equal to $\bm b$, which gives the exact estimator
    \begin{align}\label{eq:token-term}
         & \frac{\sigma_t}{\bar \sigma_{0, t}} \E_{m, \bm b} \frac{(n-m)}{\tilde S_{n,m,\sigma_t}} \frac{1}{n-m}\sum\limits_{i=1}^{m+1} ~\sum_{k\in s_i(\bm b)} \sum\limits_{w\in \vocab}\delta_{w}(\x_0^k)\, \log \left(\hat K_t(\x_0[\bm b], \x_0[\bm b \vee \bm e_k])\right) \nonumber\\
         & \overset{\textcolor{blue}{(*)}}{\approx} \frac{\sigma_t}{\bar \sigma_{0, t}} \E_{m, \bm b} \frac{(n-m)}{\tilde S_{n,m,\sigma_t}} \frac{1}{n-m}\sum\limits_{i=1}^{m+1} ~\sum_{k\in s_i(\bm b)} \sum\limits_{w\in \vocab}\delta_{w}(\x_0^k)\, \log \left(\hat p_{\ins}^{\theta}(i, x_0^k \mid \x_0[\bm b])\right) \nonumber\\
         & =  \frac{\sigma_t}{\bar \sigma_{0, t}} \E_{m, \bm b} \frac{(n-m)}{\tilde S_{n,m,\sigma_t}} \sum\limits_{\substack{i\in\natset{m+1},                                                                                         \\ w \in \vocab}} p_{\ins}^{\text{target}}(i, w \mid \x_0, \bm b)\, \log \left(\hat p_{\ins}^\theta(i, w \mid \x_0[\bm b])\right).
    \end{align}
    \textcolor{blue}{(*)} Here, in the second step, we invoke the following approximation. We fix the alignment between the positions of the current sequence $\x_0[\bm b]$ and a predecessor state, and identify the predecessor state $\x_0[\bm b \vee \bm e_k]$ by the insertion location $i$ and token $x_0^k$.
    Under this approximation, the reverse kernel $\hat K_t(\x_0[\bm b], \x_0[\bm b \vee \bm e_k])$ is replaced by the parameterized insertion probability $\hat p_{\ins}^{\theta}(i, x_0^k \mid \x_0[\bm b])$.
    This is exact whenever the predecessor state determines a unique insertion location-token pair, and becomes approximate in the repeated-token case discussed below.
    If $\x=\textnormal{aa}$ and $\y=\textnormal{aaa}$, then the same predecessor state $\y$ can be obtained from $\x$ by inserting the token $\textnormal{a}$ at any of the three insertion locations. Therefore, $\hat K_t(\x,\y) = \sum_{i=1}^{3} \hat p_{\ins}^{\theta}(i, \textnormal{a} \mid \x)$, and replacing $\log \left(\hat K_t(\x,\y)\right)$ with a single aligned term $\log \left(\hat p_{\ins}^{\theta}(i, \textnormal{a} \mid \x)\right)$ is an approximation in this case.
    
    Finally, $p_{\ins}^{\text{target}}(i, w \mid \x_0, \bm b) = \frac{1}{n-m}\sum\limits_{k\in s_i(\bm b)}\delta_{w}(\x_0^k)$ can be seen as the target joint probability distribution over positions and tokens. To see this, note that $\sum_{i=1}^{m+1} s_i(\bm b) = n-m$.
    Putting \cref{eq:length-term} and \cref{eq:token-term} together, we get the final expression.
\end{proof}
\begin{remark}[Rao-Blackwellization]\label{remark:rao_blackwellization}
    Instead of using the unbiased estimator obtained by fixing $\bm a=\bm b$, we could marginalize over all $\bm a \in \sB_{n,m}(\x_0, \bm b)$ explicitly. This can be viewed as Rao-Blackwellizing the latent alignment variable $\bm a$, yielding a lower-variance estimator of the ELBO term. However, this requires solving one dynamic programming instance per $(\x_0, \bm b)$ pair to compute all possible alignments of the subsequence $\x_0[\bm b]$ and $\x_0$, which can introduce significant computational overhead during training. We leave an efficient implementation of this approach to future work.
\end{remark}

\begin{proposition}[Insertion rate from the length posterior]\label{prop:length_posterior_rate}
    Fix $t$ and a current state $\x=(m,\seqx)$. Let $L \coloneqq N_0 - N_t$ and define
    \begin{align*}
        p_{t,\textnormal{len}}(l \mid \x) \coloneqq \sP(L=l \mid \cX_t=\x).
    \end{align*}
    Then the pointwise minimizer of the conditional length term in \Cref{cor:elbo} is
    \begin{align*}
        \hat \lambda_t^\star(\x)
        = \frac{\sigma_t}{\bar \sigma_{0,t}} \E\left[\frac{L}{\tilde S_{m+L,m,\sigma_t}} \,\middle|\, \cX_t=\x\right]
        = \frac{\sigma_t}{\bar \sigma_{0,t}} \sum_{l \ge 0} p_{t,\textnormal{len}}(l \mid \x)\,\frac{l}{\tilde S_{m+l,m,\sigma_t}}.
    \end{align*}
\end{proposition}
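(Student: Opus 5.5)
The plan is to rewrite the length term of \Cref{cor:elbo} as an expectation over the current state $\cX_t$. Then, for fixed $t$, the objective separates into independent scalar problems, one per state $\x$, and each can be minimized in closed form. I will use the form of the length term obtained in \eqref{eq:length-term}, before the sampling over $(m,\bm b)$ is made explicit. There, for fixed $t$, $\x_0$, and $\x=(m,\seqx)\sim p_{t|0}(\cdot\mid\x_0)$, the contribution to the negative ELBO is
\[
\hat\lambda_t(\x) - \sigma_t\frac{n-m}{\bar\sigma_{0,t}\,\tilde S_{n,m,\sigma_t}}\log\hat\lambda_t(\x),
\]
with $n$ the length of $\x_0$. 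Taking the outer expectation over $\x_0\sim p_{\textnormal{data}}$ and $\x\sim p_{t|0}(\cdot\mid\x_0)$ gives an expectation over the joint law of $(\cX_0,\cX_t)$. By the tower property, I will condition on $\cX_t=\x$.

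Under this conditioning, $n-m=N_0-N_t=L$ and $n=m+L$. The conditional length term is therefore
\[
\hat\lambda_t(\x) - \frac{\sigma_t}{\bar\sigma_{0,t}}\E\!\left[\frac{L}{\tilde S_{m+L,m,\sigma_t}}\,\middle|\,\cX_t=\x\right]\log\hat\lambda_t(\x).
\]
One point to check is that $\tilde S$ depends on $n$ only in the case $m=0$, through $S_{n,\sigma_t}$. This is exactly why it has to stay inside the conditional expectation rather than be factored out. For $m>0$ it equals $1$, so in that case the formula reduces to the $m>0$ branch stated in the main text.

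Since $\hat\lambda_t(\x)$ is a free nonnegative scalar for each $\x$, the minimization is pointwise. Each scalar problem has the form $f(\lambda)=\lambda-c\log\lambda$ with $c\ge 0$. For $c>0$, $f$ is strictly convex on $(0,\infty)$, and setting $f'(\lambda)=1-c/\lambda=0$ gives the unique minimizer $\lambda^\star=c$. For $c=0$, the infimum is attained in the limit $\lambda\to 0$, which agrees with the formula. Substituting the conditional coefficient for $c$, and expanding the conditional expectation as a sum over $l\ge0$ weighted by $p_{t,\textnormal{len}}(l\mid\x)$, yields both displayed expressions.

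The main obstacle is justifying the interchange: that minimizing the full expected objective is equivalent to minimizing conditionally on each $\x$. This needs the conditional expectation of the coefficient to be finite. It is, because $L\le\lmax$ and $\tilde S\ge1$. It also needs the minimization to range over all measurable functions of $(\x,t)$, which is the usual ``pointwise minimizer'' interpretation. A secondary subtlety is noting that the sampling of $\bm b$ in \Cref{cor:elbo} induces exactly the law $p_{t|0}$ on $\x=\x_0[\bm b]$. This holds by \Cref{prop:transition_probability}, so the conditional law of $L$ given $\cX_t=\x$ is indeed $p_{t,\textnormal{len}}$.
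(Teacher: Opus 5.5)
Your proposal is correct and follows the same route as the paper. Like the paper, you start from the per-sample length term in \eqref{eq:length-term}, substitute $n=m+L$ on $\{\cX_t=\x\}$, and take the conditional expectation given $\cX_t=\x$; you then minimize the convex scalar objective $a-c\log a$ to get $a=c$. Your extra points are details the paper leaves implicit but does not need for the argument: the coefficient is finite because $L$ is bounded by the maximum length and $\tilde S\ge 1$, and $\x_0[\bm b]$ has law $p_{t|0}(\cdot\mid\x_0)$.
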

\begin{proof}
    By \Cref{eq:length-term}, for a sample with current state $\cX_t=\x=(m,\seqx)$ and initial length $N_0=n$, the length contribution to the ELBO is
    \begin{align*}
        \hat \lambda_t(\x) - \sigma_t\frac{(n-m)}{\bar \sigma_{0,t}\,\tilde S_{n,m,\sigma_t}} \log \hat \lambda_t(\x).
    \end{align*}
    On the event $\{\cX_t=\x\}$ we have $N_t=m$, hence $n-m=L$ and $n=m+L$. Therefore,
    \begin{align*}
        \hat \lambda_t(\x) - \sigma_t\frac{(n-m)}{\bar \sigma_{0,t}\,\tilde S_{n,m,\sigma_t}} \log \hat \lambda_t(\x)
        =
        \hat \lambda_t(\x) - \frac{\sigma_t}{\bar \sigma_{0,t}} \frac{L}{\tilde S_{m+L,m,\sigma_t}} \log \hat \lambda_t(\x).
    \end{align*}
    Taking the conditional expectation with respect to $L$ given $\cX_t=\x$ yields
    \begin{align*}
        \E\left[\hat \lambda_t(\x) - \frac{\sigma_t}{\bar \sigma_{0,t}} \frac{L}{\tilde S_{m+L,m,\sigma_t}} \log \hat \lambda_t(\x) \,\middle|\, \cX_t=\x\right]
        = \hat \lambda_t(\x) - \frac{\sigma_t}{\bar \sigma_{0,t}} \E\left[\frac{L}{\tilde S_{m+L,m,\sigma_t}} \,\middle|\, \cX_t=\x\right] \log \hat \lambda_t(\x).
    \end{align*}
    Note that if $m>0$, then $\tilde S_{m+L,m,\sigma_t}=1$, so this reduces to
    \begin{align*}
        \hat \lambda_t(\x) - \frac{\sigma_t}{\bar \sigma_{0,t}} \E\left[L \mid \cX_t=\x\right] \log \hat \lambda_t(\x).
    \end{align*}
    More generally, let
    \begin{align*}
        c_t(\x) \coloneqq \frac{\sigma_t}{\bar \sigma_{0,t}} \E\left[\frac{L}{\tilde S_{m+L,m,\sigma_t}} \,\middle|\, \cX_t=\x\right].
    \end{align*}
    The conditional objective is $f(a)=a-c_t(\x)\log a$ for $a>0$. Its derivative is $f'(a)=1-\frac{c_t(\x)}{a}$ and $f''(a)=\frac{c_t(\x)}{a^2}\ge 0$, so the unique minimizer for $c_t(\x)>0$ is $a=c_t(\x)$, and if $c_t(\x)=0$ the minimum is attained at $a=0$ by continuity. Therefore,
    \begin{align*}
        \hat \lambda_t^\star(\x)
        = c_t(\x)
        = \frac{\sigma_t}{\bar \sigma_{0,t}} \E\left[\frac{L}{\tilde S_{m+L,m,\sigma_t}} \,\middle|\, \cX_t=\x\right].
    \end{align*}
    Expanding the conditional expectation over the posterior of $L$ gives
    \begin{align*}
        \hat \lambda_t^\star(\x)
        = \frac{\sigma_t}{\bar \sigma_{0,t}} \sum_{l \ge 0} p_{t,\textnormal{len}}(l \mid \x)\,\frac{l}{\tilde S_{m+l,m,\sigma_t}}.
    \end{align*}
\end{proof}

\subsection{Joint Noising}

\LemmaDeletionDTMC*

\begin{remark}\label{remark:deletion_dtmc}
    Let $\bm 1_n$ be the vector of length $n$ with all 1s.
    Assume that $\z=(l, \seqz)$ and $\x=(n, \seqx)$ such that $\x=\z[\bm b]$ for some $\bm b \in \sB_{l, n}$ with $l \geq n$.
    Then there exists a bijection $\phi_{\bm b}: \sB_{l, n - 1} \to \sB_{n, n - 1}$ such that $\kappa_{\textnormal{Uni}}(\bm a\mid \bm b) = \kappa_{\textnormal{Uni}}(\phi_{\bm b}(\bm a)\mid \bm 1_n)$, and $\z[\bm a]=\x[\phi_{\bm b}(\bm a)]$ for all $\bm a \in \sB_{l, n - 1}$ with $\kappa_{\textnormal{Uni}}(\bm a\mid \bm b) > 0$.
    This is all to say that at any point during the deletion process, we can freely replace the current sequence $(n, \seqx)$ with some longer sequence $(l, \seqz)$ and vice versa, such that $\x = \z[\bm b]$, and it will not change anything about the subsequent deletion process.

    Now note that
    \begin{align*}
         & \sum_{\x\in \sX_n} \sum_{\bm b \in \sB_{l, n}} \sum_{\bm c \in \sB_{n, n-1}} \delta_{\seqz[\bm b]}(\seqx) ~\delta_{\seqx[\bm c]}(\seqy) \\
         & \overset{(a)}{=} \sum_{\bm b \in \sB_{l, n}} \sum_{\bm c \in \sB_{n, n-1}} \delta_{\seqz[\bm b][\bm c]}(\seqy)                        \\
         & \overset{(b)}{=} \sum_{\bm b \in \sB_{l, n}}\sum_{\bm a \in \sA_{l, n-1, \bm b}}  \delta_{\seqz[\bm a]}(\seqy)                          \\
         & = \sum_{\bm a \in \sB_{l, n-1}} (l - n + 1) \delta_{\seqz[\bm a]}(\seqy) \numberthis\label{eq:dtmc_remark_last_step}
    \end{align*}
    Here, $(a)$ follows from the fact that for a specific $\bm b$, there exists a unique $\x$ such that $\x=\z[\bm b]$. Therefore, we replace $\x$ with $\z[\bm b]$.
    In (b), $\sA_{l, n-1, \bm b} = \{ \bm a \in \sB_{l, n-1} \mid \kappa_{\textnormal{Uni}}(\bm a\mid \bm b) > 0 \}$. In the final step, we get rid of the sum over $\bm b$ by first noting that $\bigcup_{\bm b \in \sB_{l, n}} \sA_{l, n-1, \bm b} = \sB_{l, n-1}$, i.e., while going over the double sum, every $\bm a \in \sB_{l, n-1}$ appears at least once. Using this we flip the order of the summation and note that for a specific $\bm a \in \sB_{l, n-1}$, there will be $l-(n-1)$ choices for $\bm b$ such that $\kappa_{\textnormal{Uni}}(\bm a\mid \bm b) > 0$ because there are $l-(n-1)$ positions with zero in $\bm a$ and exactly one of them has to become 1 to obtain a valid $\bm b$.
\end{remark}

\begin{proof}
    The transition kernel is given by
    \begin{align*}
        \K{(n, \seqx)}{(m, \seqy)} & \coloneqq
        \sP(\dX_{k+1} = (m, \seqy) \mid \dX_k = (n, \seqx) )                                                                                                                                                                                   \\
                                   & = \sum_{\bm b \in \sB_{n}} \kappa_{\textnormal{Uni}}(\bm b\mid \bm 1_n)  ~\delta_{\seqx[\bm b]}(\seqy)                                                                                                    \\
                                   & = \sum_{\bm b \in \sB_{n}} \frac{1}{|\bm 1_n|} \delta_{\{\bm 1_n \succ \cdot\}}(\bm b)\,\delta_{|\bm 1_n| -1}(|\bm b|)~\delta_{\seqx[\bm b]}(\seqy) ~~\text{(by definition of $\kappa_{\textnormal{Uni}}$)} \\
                                   & = \sum_{\bm b \in \sB_{n, m}} \frac{1}{n} \,\delta_{n-1}(m)\,\delta_{\seqx[\bm b]}(\seqy)                                                                                                                   \\
                                   & = \sum_{i=1}^{n} \frac{1}{n} \,\delta_{\del(\x, i)}(\y)
    \end{align*}

    The process is clearly Markovian because the transition probabilities only depend on the current state and any history does not change the one-step transition probability, and therefore any future transition probabilities.

    To get the $r$-step transition probability we can use the Chapman-Kolmogorov equation, which in the case of a DTMC simply means that we take the product of the one-step transition probabilities and marginalize over the intermediate states.

    Denoting $\z_0 = \x = (n, \seqx)$, $\z_r = \y = (m, \seqy)$, and $\bm b_0 = \bm 1_n$, and using the expression for $K$ from above, we get the following expression for one step of marginalization.
    \begin{align*}
         & \sum_{\z_1} \K{\z_0}{\z_1} \K{\z_1}{\z_2}                                                                                                                                                                                                                  \\
         & = \sum_{\z_1} \left[\sum_{\bm b_1 \in \sB_{n_0, n_1}} \frac{1}{n_0} \,\delta_{n_0-1}(n_1)\,\delta_{\seqx[\bm b_1]}(\seqz_1)\right] \left[\sum_{\bm b_2 \in \sB_{n_1, n_2}} \frac{1}{n_1} \,\delta_{n_1-1}(n_2)\,\delta_{\seqz_1[\bm b_2]}(\seqz_2)\right].
    \end{align*}
    As in \Cref{remark:deletion_dtmc}, we note that for a specific $\bm b_1$, there exists a unique $\z_1$ such that $\z_1=\x[\bm b_1]$. Therefore, we can replace $\z_1$ with $\x[\bm b_1]$ in the above expression and get rid of the sum over $\z_1$.
    We can also apply the conditions on $n_1$ and $n_2$ to write them in terms of $n_0$.

    \begin{align*}
         & = \sum_{\bm b_1 \in \sB_{n_0, n_0-1}} \frac{1}{n_0} \sum_{\bm b_2 \in \sB_{n_0-1,\,n_0-2}} \frac{1}{n_0-1} \delta_{(\seqz_0[\bm b_1])[\bm b_2]}(\seqz_2) \\
         & = \sum_{\bm b \in \sB_{n, n-2}} \frac{1\cdot 2}{n\,(n-1)} \delta_{\seqx[\bm b]}(\seqz_2)\numberthis\label{eq:single_step_prod}
    \end{align*}
    In the final step, we removed the summation over $\bm b_1$ using the same argument as we used in \Cref{eq:dtmc_remark_last_step}. We also renamed $\bm b_2$ to $\bm b$ to simplify the notation, and used the fact that $n_0=n$ and $\x=\z_0$.
    We can now repeat this procedure $r-1$ times to get the following expression for the $r$-step transition probability, which concludes the proof.
    \begin{align*}
         & \sP(\dX_{k+r} = (m, \seqy) \mid \dX_k = (n, \seqx))                                                                          \\
         & = \sum_{\z_1, \dots, \z_{r-1}} \prod_{i=1}^{r} \K{\z_{i-1}}{\z_{i}}                                                            \\
         & = \sum_{\z_{r-1}} \K{\z_{r-1}}{\z_{r}}\sum_{\z_{r-2}}\cdots \sum_{\z_2}\K{\z_2}{\z_3} \sum_{\z_1} \K{\z_0}{\z_1}\K{\z_1}{\z_2} \\
         & = \sum_{\bm b \in \sB_{n, m}} \frac{r!}{n\,(n-1)\cdots(n-(r-1))} \delta_{\seqx[\bm b]}(\seqy) \delta_{r}(n-m)
    \end{align*}
\end{proof}

\PropTransitionProbability*
\begin{proof}
    An equivalent way to represent the continuous-time and discrete-time noising processes is $\cX_t = (\cN_t, \cseqX_t)$, and $\dX_k = (\dN_k, \dseqX_k)$, respectively, where $\cN_t$ and $\dN_k$ are the lengths of the sequences at time $t$ and $k$, respectively. Using this, we can write the transition probability as
    \begin{align*}
         & \sP(\cX_t= (m, \seqy) \mid \cX_s= (n, \seqx))                                                                                                           \\
         & = \sP(\cN_t=m, \cseqX_t=\seqy \mid \cN_s=n, \cseqX_s=\seqx)                                                                                             \\
         & = \sP(\cseqX_t=\seqy \mid \cseqX_s=\seqx, \cN_s=n, \cN_t=m) \sP(\cN_t=m \mid \cN_s=n, \cseqX_s=\seqx)                                                 \\
         & \overset{(a)}{=} \sP(\dseqX_{n-m} = \seqy \mid \dseqX_0=\seqx, \dN_0=n) \sP(\cN_t=m \mid \cN_s=n)                                                     \\
         & \overset{(b)}{=} \sP(\dX_{n-m} = (m, \seqy) \mid \dX_0=(n, \seqx)) \sP(\cN_t=m \mid \cN_s=n)                                                     \numberthis\label{eq:ctmc_transition_probability}
    \end{align*}
    Here, in (a), $\sP(\cseqX_t=\seqy \mid \cseqX_s=\seqx, \cN_s=n, \cN_t=m) = \sP(\dseqX_{n-m} = \seqy \mid \dseqX_0=\seqx, \dN_0=n)$ because the embedded DTMC is time-homogeneous, and given the initial and final lengths, the CTMC only depends on the transitions of the embedded DTMC.
    Now note that deletions arrive with rate $\lambda_t((r, \seqx))=\sigma_t\delta\{r>0\}$, and therefore the number of deletions until there is nothing left to delete has a Poisson distribution with rate $\bar \sigma_{s, t} = \int_s^t \sigma(u) \dd{u}$.
    The general expression for $\sP(\cN_t=m \mid \cN_s=n)$ can be obtained by creating a special case for the absorbing state $m=0$ and noting that the total probability is 1:
    \begin{align}\label{eq:dropping_process_transition_probability}
        \sP( \cN_t =m  \mid \cN_s = n) = \begin{cases}
            \frac{ e^{-\bar \sigma_{s, t}} (\bar \sigma_{s, t})^{n-m}}{(n-m)!}                  & \text{if } 0 < m \leq n \\
            1 - \sum_{k=1}^n \frac{ e^{-\bar \sigma_{s, t}} (\bar \sigma_{s, t})^{n-k}}{(n-k)!} & \text{if } m = 0.
        \end{cases}
    \end{align}
    We obtain the transition probability for the CTMC by substituting the expression for $\sP(\dX_{n-m} = (m, \seqy) \mid \dX_0=(n, \seqx))$ from \Cref{lemma:deletion_dtmc}, and for $\sP(\cN_t=m \mid \cN_s=n)$ from \Cref{eq:dropping_process_transition_probability}, in \Cref{eq:ctmc_transition_probability}.
\end{proof}

\subsection{Independent Noising}\label{app:independent_deletion}

We can also obtain uniformly random deletion of positions by giving each position an independent deletion clock with rate $\sigma_t$. 
So the probability that a position is \emph{retained} at time $t$ given it was present at time $s$ is $\rho_{s,t} \coloneqq \exp(-\bar\sigma_{s,t})$, with $\bar\sigma_{s,t} = \int_s^t \sigma(u)\,\dd{u}$. At time $t$ the number of retained positions is therefore $\Binomial(n, \rho_{s,t})$, and conditional on length $m$, the mask is uniform over $\sB_{n,m}$.

\begin{proposition}[Transition probability for Independent Noising]\label{prop:transition_independent}
Let $\cX_t$ be the continuous-time process where each of the $n$ positions is deleted independently with rate $\sigma_t$. Then for $s \le t$,
\begin{align*}
p_{t|s}\bigl((m, \seqy) \mid (n, \seqx)\bigr) = \binom{n}{m} \rho_{s,t}^m (1-\rho_{s,t})^{n-m} \,\frac{1}{\binom{n}{m}} \sum_{\bm b \in \sB_{n,m}} \delta_{\seqx[\bm b]}(\seqy) = \rho_{s,t}^m (1-\rho_{s,t})^{n-m} \sum_{\bm b \in \sB_{n,m}} \delta_{\seqx[\bm b]}(\seqy),
\end{align*}
for $0 \le m \le n$, with $\rho_{s,t} = \exp(-\bar\sigma_{s,t})$. The empty sequence $(0, \emptyset)$ has mass $(1-\rho_{s,t})^n$.
\end{proposition}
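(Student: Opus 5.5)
The plan is to model the process by attaching an independent deletion clock to each of the $n$ original positions. Because the rate $\sigma_u$ is the same for every position and the clocks do not depend on content, the survival event of position $j$ on $[s,t]$ is governed by an inhomogeneous exponential waiting time. By \Cref{prop:ctmc:conditional_distribution_of_waiting_time}, applied to a single position with jump rate $\sigma_u$, each position survives until time $t$ with probability $\exp(-\int_s^t\sigma_u\dd u)=\rho_{s,t}$, and survival events are independent across positions. Let the random mask $\cB\in\sB_n$ record which positions survive. Then
\[
\sP(\cB=\bm b)=\rho_{s,t}^{|\bm b|}(1-\rho_{s,t})^{n-|\bm b|},
\]
and $\cX_t=(|\cB|,\seqx[\cB])$ given $\cX_s=(n,\seqx)$.

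Next I would marginalize over masks:
\[
p_{t|s}\bigl((m,\seqy)\mid(n,\seqx)\bigr)=\sum_{\bm b\in\sB_n}\sP(\cB=\bm b)\,\delta_{|\bm b|}(m)\,\delta_{\seqx[\bm b]}(\seqy).
\]
The weight depends only on $|\bm b|=m$, so it factors out, giving $\rho_{s,t}^m(1-\rho_{s,t})^{n-m}\sum_{\bm b\in\sB_{n,m}}\delta_{\seqx[\bm b]}(\seqy)$. Multiplying and dividing by $\binom nm$ yields the first form in the statement: the count $|\cB|\sim\Binomial(n,\rho_{s,t})$ times a uniform mask given the length. For the empty sequence, $\sB_{n,0}$ contains only the zero vector, and $\seqx[\bm 0]=\emptyset$, so the mass is $(1-\rho_{s,t})^n$. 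Unlike the joint case, no separate absorbing-state correction is needed, because once $m=0$ no clocks remain.

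The main obstacle is justifying that this per-position clock description is a legitimate CTMC on $\sX$ that is consistent with the framework of \eqref{eq:ctmc:transition_rate_matrix}. I would do this by checking its rate. From state $(r,\seqz)$ the total jump rate is $\lambda_t=r\sigma_t$ (the minimum of $r$ exponential clocks), and the post-jump kernel deletes a uniformly chosen position, i.e.\ it is $\kappa_{\textnormal{Uni}}$, by symmetry of the clocks. The Markov property then follows from memorylessness: given the current surviving set, the residual clocks are fresh, again by \Cref{prop:ctmc:conditional_distribution_of_waiting_time}. Given this identification, the marginal computation above is routine.
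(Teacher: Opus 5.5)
Your proposal is correct and follows the paper's proof essentially step for step: independent per-position survival with probability $\rho_{s,t}$, the product law $\rho_{s,t}^{|\bm b|}(1-\rho_{s,t})^{n-|\bm b|}$ on retention masks, marginalizing over masks of size $m$ and pulling out the weight (which depends only on $m$), and then reading off the Binomial-times-uniform-mask factorization. Your extra check that the clock model is a CTMC with jump rate $r\sigma_t$ and deletion kernel $\kappa_{\textnormal{Uni}}$ is not in the paper, which simply takes the per-position clocks as the definition of the process, but it is correct and harmless.
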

\begin{proof}
For each position $i \in \natset{n}$, let $b^i \in \{0,1\}$ indicate whether that position is retained at time $t$ given it was present at time $s$. Since the deletion clock at each position has rate $\sigma_t$, the survival probability over $[s,t]$ is
\begin{align*}
    \sP(b^i = 1) = \rho_{s,t} = \exp(-\bar\sigma_{s,t}),
\end{align*}
and the indicators are independent across positions. Hence the retention mask $\bm b = (b^1, \dots, b^n)$ satisfies
\begin{align*}
    \sP(\bm b) = \rho_{s,t}^{|\bm b|}(1-\rho_{s,t})^{n-|\bm b|}.
\end{align*}
Therefore, for any $\seqy$ of length $m$,
\begin{align*}
     & p_{t|s}\bigl((m, \seqy) \mid (n, \seqx)\bigr) \\
     & = \sum_{\bm b \in \sB_n} \sP(\bm b)\,\delta_{|\bm b|}(m)\,\delta_{\seqx[\bm b]}(\seqy) \\
     & = \sum_{\bm b \in \sB_{n,m}} \rho_{s,t}^m(1-\rho_{s,t})^{n-m}\,\delta_{\seqx[\bm b]}(\seqy) \\
     & = \rho_{s,t}^m(1-\rho_{s,t})^{n-m}\sum_{\bm b \in \sB_{n,m}} \delta_{\seqx[\bm b]}(\seqy).
\end{align*}
Equivalently, since every mask in $\sB_{n,m}$ has the same probability, the number of retained positions is $\mathrm{Binomial}(n, \rho_{s,t})$, and conditional on $|\bm b|=m$, the mask is uniform over $\sB_{n,m}$, which gives the first factorization in the proposition.

\end{proof}

\input{drafts/aistats/sections/appendices/direct_parameterization.tex}

%% file: drafts/preprint/sections/appendices/elbo.tex
\begin{proof}\label{proof:elbo}
The result can be proved for any regular CTMC using Dynkin's formula \citep{hanson2007applied} for the change of measure on the CTMC path space followed by the data processing inequality as discussed in \citet{loudiscrete}.

In order to provide useful intuition for the result, we instead use elementary techniques to provide an informal argument in which we derive the continuous-time ELBO from the discrete-time ELBO, and then use that to establish the result.
Assume that the time range is $[0, 1]$, and we discretize the time into $K$ intervals of equal length $\Delta t = 1/K$. The endpoints of the intervals are $0=t_0 < t_1 < \cdots < t_K = 1$.
For brevity, we will use $k$ instead of $t_k$ throughout the derivation.

Before we begin, we recall a few useful facts that we will use multiple times.

\begin{enumerate}
    \item Taylor expansion of log:
          \begin{align}\label{fact:taylor_log}
              \log(1+x) & = x - \frac{x^2}{2} + \frac{x^3}{3} - \cdots \nonumber \\
                        & = x - \frac{x^2}{2} + o(x^2)
          \end{align}
    \item For $a,b > 0$,
          \begin{align}
              \log(ab + o(a)) & = \log(a(b + o(1)))                 \nonumber   \\
                              & = \log(a) + \log(b + o(1))          \nonumber   \\
                              & = \log(a) + \log(b(1+ o(1)))        \nonumber   \\
                              & = \log(a) + \log(b) + \log(1+ o(1)) \nonumber   \\
                              & = \log(a) + \log(b) + o(1) \label{fact:log_add}
          \end{align}
    \item Bayes rule and the Markov property imply
          \begin{align}\label{fact:bayes_markov}
              q(x_k \mid x_{k+1}, x_0) & = \frac{ q(x_k \mid x_0)}{q(x_{k+1} \mid x_0)} q(x_{k+1} \mid x_k) \nonumber \\
                                         & = r(k, k+1) q(x_{k+1} \mid x_k),
          \end{align}
          where we denote the ratio $\frac{q(x_k \mid x_0)}{q(x_{k+1} \mid x_0)}$ with $r(k, k+1)$.
\end{enumerate}

With slight abuse of notation, we will denote the probability law of the noising process with $q$ and its time-reversal with $p$.
To begin, recall that the discrete-time ELBO \citep{hoDenoisingDiffusionProbabilistic2020} is given by
\begin{align*}
    \log p_\theta(x_0) \geq \underbrace{\mathop{\mathbb E}_{x_1\sim q_{x_1|x_0}} \log p_\theta(x_0|x_1)}_{-L_0} - \underbrace{\mathop{\mathbb E}_{x_{2:T}\sim q_{x_{2:T}|\,x_0}} \sum_{t=2}^T \KL\left[{q(x_{t-1}|x_{t}, x_0)}\,\Vert\, {p_\theta(x_{t-1}\,|\,x_{t})}\right]}_{L_{1:T-1}} - \underbrace{\KL[ q(x_T\,|\,x_0)\,\Vert\,p_\theta(x_T)]}_{L_T}
\end{align*}

We will focus on one term of $L_{1:T-1}$, denoting it by $L_k$; we will also drop $\theta$ from $p_\theta$ for brevity. Taking the expectation inside the summation for $L_{1:T-1}$ and writing one term, we get
\begin{align*}
    L_{k}(x_0) & = \E_{\substack{x_{k+1}\sim q_{k+1|0}}} \KL\left[q(x_k \mid x_{k+1}, x_0) \,\Vert\, p(x_k \mid x_{k+1})\right]\\
               & = \E_{\substack{x_{k+1}\sim q_{k+1|0}}} \mathcal -H[q(x_k \mid x_{k+1}, x_0)] + \textnormal{CE}[q(x_k \mid x_{k+1}, x_0)\,\Vert\, p(x_k \mid x_{k+1})]
\end{align*}

We will expand the entropy and the cross entropy terms by writing the conditional probabilities in terms of the forward and reverse rate matrices and making approximations knowing that we will ultimately apply the limit $\Delta t \to 0$, and $K\to \infty$. The approximations do not introduce any error but are a way to apply the limit throughout the derivation to manage the complexity of the expressions. The following are the expressions for the conditional probabilities in terms of the forward and reverse rate matrices:
\begin{align}
    q(x_k \mid x_{k+1}, x_0) & = r(k, k+1) \left(\delta_{x_k}(x_{k+1}) + R_k(x_k, x_{k+1}) \Delta t + o(\Delta t) \right) \nonumber \\
    p(x_k \mid x_{k+1})      & = \delta_{x_k}(x_{k+1}) + \hat R_{k+1}(x_{k+1}, x_k) \Delta t + o(\Delta t)
\end{align}
where we assume $r(k, k+1)$ exists and is finite, i.e., $q(x_{k+1} \mid x_0) > 0$. %
We will also suppress the indices involved in $r(k, k+1)$  and $\delta_{x_k}(x_{k+1})$ for brevity as they do not change in the following derivation.
So unless stated explicitly, $r=r(k, k+1)$ and $\delta=\delta_{x_k}(x_{k+1})$ and $\bar \delta = 1 - \delta_{x_k}(x_{k+1})$.

\begin{align*}
     & -\textnormal{CE}[q(x_k \mid x_{k+1}, x_0)\,\Vert\, p(x_k \mid x_{k+1})]                                                                                                 \\
     & = \sum_{x_k} q(x_k \mid x_{k+1}, x_0) \log p(x_k \mid x_{k+1}) \nonumber                                                                                                \\
     & = \sum_{x_k} r \left(\delta + R_k(x_k, x_{k+1}) \Delta t + o(\Delta t)\right) \log \left(\delta + \hat R_{k+1}(x_{k+1}, x_k) \Delta t + o(\Delta t) \right) \nonumber \\
\end{align*}

Note that
\begin{align}
    \log(\delta + \hat R_{k+1}(x_{k+1}, x_k) \Delta t + o(\Delta t)) & = \delta \log(1 + \hat R_{k+1}(x_{k+1}, x_k) \Delta t + o(\Delta t)) \nonumber                                                                             \\
                                                                     & ~~~~~~~~+ \bar \delta \log(\hat R_{k+1}(x_{k+1}, x_k) \Delta t + o(\Delta t))                                                     \nonumber                \\
                                                                     & = \delta (\hat R_{k+1}(x_{k+1}, x_k) \Delta t + o(\Delta t)) \nonumber                                                                                     \\
                                                                     & ~~~~~~~~+ \bar \delta \log(\hat R_{k+1}(x_{k+1}, x_k) \Delta t) + o(\Delta t) ~~~\text{ (by \ref{fact:taylor_log})} \nonumber                              \\
                                                                     & = \delta (\hat R_{k+1}(x_{k+1}, x_k) \Delta t + o(\Delta t)) \nonumber                                                                                     \\
                                                                     & ~~~~~~~~+ \bar \delta \log(\hat R_{k+1}(x_{k+1}, x_k) \Delta t) + o(\Delta t)  \nonumber                                                                   \\
                                                                     & = \delta (\hat R_{k+1}(x_{k+1}, x_k) \Delta t ) \nonumber                                                                                                  \\
                                                                     & ~~~~~~~~+ \bar \delta (\log(\hat R_{k+1}(x_{k+1}, x_k)) + \log(\Delta t) + o(1)) + o(\Delta t) ~~~\text{ (by \ref{fact:log_add})} \label{eq:expansion_log}
\end{align}

Therefore,
\begin{align}
     & -\textnormal{CE}[q(x_k \mid x_{k+1}, x_0)\,\Vert\, p(x_k \mid x_{k+1})]                                                                                                                     \nonumber \\
     & = \sum_{x_k} r \left\{\delta \hat R_{k+1}(x_{k+1}, x_k) \Delta t + \bar \delta \Delta t R_k(x_k, x_{k+1}) \log(\hat R_{k+1}(x_{k+1}, x_k)) + \Delta t \log(\Delta t) + o(\Delta t) \right\} \nonumber \\
     & = \sum_{x_k} r \left\{\delta \hat R_{k+1}(x_{k+1}, x_k) \Delta t + \bar \delta \Delta t R_k(x_k, x_{k+1}) \log(\hat R_{k+1}(x_{k+1}, x_k)) + o(\Delta t) \right\} \nonumber                           \\
     & = \Delta t \hat R_{k+1}(x_{k+1}, x_{k+1})  + \Delta t\sum_{x_k\not=x_{k+1}} r R_k(x_k, x_{k+1}) \log(\hat R_{k+1}(x_{k+1}, x_k)) + o(\Delta t)
\end{align}

Similarly,
\begin{align}
    \mathcal -H[q(x_k \mid x_{k+1}, x_0)] & = \sum_{x_k} r \left\{\delta  + R_k(x_k, x_{k+1}) \Delta t + o(\Delta t)  \right\} \log r \left\{\delta + R_k(x_k, x_{k+1}) \Delta t + o(\Delta t) \right\} \nonumber \\
                                          & = \sum_{x_k} r \left\{\delta R_k(x_k, x_{k+1}) \Delta t + \bar \delta \Delta t R_k(x_k, x_{k+1}) \log(R_k(x_k, x_{k+1})) + o(\Delta t) \right\} \nonumber             \\
                                          & ~~~~~~~~+ \sum_{x_k} \delta r \log r + r \Delta t R_k(x_k, x_{k+1}) \log r \nonumber                                                                                  \\
                                          & = \Delta t R_k(x_{k+1}, x_{k+1})  + \Delta t\sum_{x_k\not=x_{k+1}}  r R_k(x_k, x_{k+1}) \log(R_k(x_k, x_{k+1})) + o(\Delta t) \nonumber                               \\
                                          & ~~~~~~~~+ \Delta t \sum_{x_k\not=x_{k+1}} R_k(x_k, x_{k+1}) r \log r   ~~(\text{since } r\log r = 0 \text{ when } x_k=x_{k+1})
\end{align}

Putting the two together, we get
\begin{align}
     L_k(x_0) 
    & =\E_{\substack{x_{k+1}\Hide{\sim q_{k+1|0}}}} \sum_{x_k} r\Delta t \left\{ \underbrace{\left[\delta R_k(x_k, x_{k+1}) + \bar \delta  R_k(x_k, x_{k+1}) \log(R_k(x_k, x_{k+1})) + \bar \delta R_k(x_k, x_{k+1}) \log r  \right]}_{H \textnormal{ term}} \phantom{\hat R}\right. \nonumber \\
    & ~~~~~~~~~~~- \left. \underbrace{\left[ \delta \hat R_{k+1}(x_{k+1}, x_k) + \bar \delta   R_k(x_k, x_{k+1}) \log(\hat R_{k+1}(x_{k+1}, x_k))\right]}_{\operatorname{CE}\textnormal{term}} + \frac{o(\Delta t)}{\Delta t}  \right\}                   
\end{align}

Recall that $r = \frac{q(x_{k} \mid x_0)}{q(x_{k+1} \mid x_0)}$, and the expectation is over $x_{k+1}\sim q_{k+1|0}(\cdot \mid x_0)$. 
The entropy is constant with respect to $\hat R$. 

\begin{align*}
    \sum_k L_k(x_0) 
   & = \sum_k \Delta t \sum_{x_{k+1}} q(x_{k+1} \mid x_0) \sum_{x_k}  \frac{q(x_k \mid x_0)}{q(x_{k+1} \mid x_0)}\left\{ C(x_0, x_k, x_{k+1}) \phantom{\hat R}\right. \nonumber \\
   & ~~~~~~~~~~~- \left. \underbrace{\left[ \delta \hat R_{k+1}(x_{k+1}, x_k) + \bar \delta   R_k(x_k, x_{k+1}) \log(\hat R_{k+1}(x_{k+1}, x_k))\right]}_{\operatorname{CE}\textnormal{term}} + \frac{o(\Delta t)}{\Delta t}  \right\}                   
\end{align*}
Now recall that $\delta$ and $\bar \delta$ are shorthand for $\delta_{x_t}(y)$ and $1 - \delta_{x_t}(y)$, respectively, and therefore

\begin{align}\label{eq:delta_hat_R}
\sum_{y}\frac{q(y|x_0)}{q(x_t|x_0)}\delta_{x_t}(y) \hat R_t(x_t, y) = \hat R_t(x_t, x_t) = - \sum_{y\not=x_t} \hat R_t(x_t, y).
\end{align}

As $\Delta t \to 0, K\to \infty$, we get $x_{k+1} \to x_{t_{k+1}} = x_t$ for $t\in [0, 1]$, and the summation becomes an integral (since the Markov chain is assumed to be regular, the limit is well defined). Denoting $\lim_{K\to \infty} \sum_k L_k(x_0)$ as $\mathcal L(x_0)$ and using \Cref{eq:delta_hat_R}, we get
\begin{align*}
    \mathcal L(x_0) &= \int_0^1 \sum_{x_t} q(x_t\,|\, x_0) \sum_{y}\frac{q(y\,|\, x_0)}{q(x_t\,|\, x_0)} \left[ \delta \hat R_{t}(x_t, y) + \bar \delta   R_t(y, x_t) \log(\hat R_{t}(x_{t}, y))\right] dt  + C(x_0)\nonumber \\
    &= \E_{t, x_t} \sum_{y\not=x_t}\left[ - \hat R_{t}(x_t, y) +  \frac{q(y\,|\, x_0)}{q(x_t\,|\, x_0)}  R_t(y, x_t) \log(\hat R_{t}(x_{t}, y))\right] dt + C(x_0)\nonumber 
\end{align*}

\end{proof}

%% file: drafts/aistats/sections/appendices/direct_parameterization.tex
\begin{restatable}[Independent Noising]{corollary}{CorELBOSecondInd}\label{cor:elbo_independent_insertion_per_position}
    A (biased) estimate of the upper bound on the negative log-likelihood in \Cref{prop:base_elbo} for the independent per-position parameterization of $\hat R_t^\theta$, when the noising process is per-position deletion (\Cref{prop:transition_independent}), is given by
    \begin{align*}
        &\mathcal L(\theta) = \mathcal L_{\textnormal{len}}(\theta) + \mathcal L_{\textnormal{token}}(\theta),~\text{where} \\
        &\mathcal L^{\textnormal{len}}(\theta) = \E\left[\sum_{i=1}^{m+1} \hat \lambda_t^\theta(\x_0[\bm b], i) - \gamma_t^{\text{ind}} |s_i(\bm b)| \log \hat \lambda_t^\theta(\x_0[\bm b], i)\right],  \\
        &\mathcal L^{\textnormal{token}}(\theta) = -\E\left[\gamma_t^{\text{ind}} \sum_{i=1}^{m+1} \sum_{k \in s_i(\bm b)} \log \hat p_{t}^{\theta}(x_0^k \mid \x_0[\bm b], i)\right].
    \end{align*}
    Here, the expectation is over $\x_0 \sim p_{\textnormal{data}}$, $t\sim \textnormal{Uniform}[0, T]$, $m \sim \textnormal{Binomial}(n, \rho_t)$, $\bm b \sim \textnormal{Uniform}(\sB_{n, m})$, $\gamma_t^{\text{ind}} = \frac{\sigma_t\,\rho_t}{1-\rho_t}$ with $\rho_t = \exp(-\bar \sigma_{0,t})$,
    $\x_0=(n, \seqx_0)$, and $s_i(\bm b)$ is the set of indices of 0s in $\bm b$ that fall between the $(i-1)$-th and $i$-th 1s.
\end{restatable}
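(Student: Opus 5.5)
We plug the per-position parameterization of $\hat R_t^\theta$ into the bound of \Cref{prop:base_elbo} and follow the proof of \Cref{cor:elbo}. The bound splits into a rate term $-\sum_{\y\neq\x}\hat R_t^\theta(\x,\y)$ and a weighted log term.

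\textbf{Rate term.} Since $\hat p_t^\theta(\cdot\mid\x,i)$ sums to one for each gap, the rate term equals $-\sum_{i=1}^{m+1}\hat\lambda_t^\theta(\x,i)$.

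\textbf{Log term.} We expand $\log\hat R_t^\theta(\x,\y)$ for $\y=\ins(\x,i,w)$. Ignoring coincident insertions (the same bias as in \Cref{cor:elbo}), it splits as $\log\hat\lambda_t^\theta(\x,i)+\log\hat p_t^\theta(w\mid\x,i)$.

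\textbf{Weight.} Next we compute the weight $\frac{p_{t|0}(\y\mid\x_0)}{p_{t|0}(\x\mid\x_0)}R_t(\y,\x)$ using \Cref{prop:transition_independent}. For $\x=(m,\seqx)$ and $\y$ of length $m+1$, the length factor is
\begin{equation*}
\frac{\rho_t^{m+1}(1-\rho_t)^{n-m-1}}{\rho_t^{m}(1-\rho_t)^{n-m}}=\frac{\rho_t}{1-\rho_t}.
\end{equation*}
This ratio requires no special case at $m=0$, unlike the joint process. The forward rate into $\x$ under independent noising is $R_t(\y,\x)=\sigma_t\cdot\#\{j:\del(\y,j)=\x\}$, since each position of $\y$ carries its own clock. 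Multiplying gives the factor $\gamma_t^{\text{ind}}=\sigma_t\rho_t/(1-\rho_t)$, times the mask sums $\sum_{\bm b'\in\sB_{n,m+1}}\delta_{\seqx_0[\bm b']}(\seqy)\big/\sum_{\bm b\in\sB_{n,m}}\delta_{\seqx_0[\bm b]}(\seqx)$, times the deletion multiplicity.

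\textbf{Counting.} We reuse the bijection from the proof of \Cref{cor:elbo}. Pairs $(\bm b',j)$ with $\del(\x_0[\bm b'],j)=\x$ correspond to triples $(\bm a,i,k)$ with $\bm a\in\sB_{n,m}(\x_0,\bm b)$ and $k\in s_i(\bm a)$. After substitution, the sum over $\y\neq\x$ becomes
\begin{equation*}
\gamma_t^{\text{ind}}\,\frac{1}{|\sB_{n,m}(\x_0,\bm b)|}\sum_{\bm a}\sum_{i}\sum_{k\in s_i(\bm a)}\bigl[\log\hat\lambda_t^\theta(\x,i)+\log\hat p_t^\theta(x_0^k\mid\x,i)\bigr].
\end{equation*}
There is no $1/(n-m)$ normalization here, because the weight is not a probability over predecessors; it is the raw multiplicity times $\gamma_t^{\text{ind}}$. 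Taking the outer expectation $\x\sim p_{t|0}(\cdot\mid\x_0)$ means $m\sim\Binomial(n,\rho_t)$ and $\bm b$ uniform on $\sB_{n,m}$. The equivalence-class argument from \Cref{cor:elbo} then lets us replace the average over $\bm a\in C(\x_0[\bm b])$ by $\bm a=\bm b$ without bias. This yields $\gamma_t^{\text{ind}}|s_i(\bm b)|\log\hat\lambda_t^\theta$ for the length part and $\gamma_t^{\text{ind}}\sum_{k\in s_i(\bm b)}\log\hat p_t^\theta(x_0^k\mid\cdot,i)$ for the token part. Negating gives $\mathcal L_{\text{len}}+\mathcal L_{\text{token}}$.

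\textbf{Main obstacle.} The delicate step is the counting. We need the multiplicity $\#\{j:\del(\y,j)=\x\}$ in $R_t(\y,\x)$, combined with the mask sums, to reduce exactly to a uniform count over triples $(\bm a,i,k)$. We also need the approximation $(*)$, which attributes each $\y$ to a single insertion location, to be the only source of bias. I would check both on the banana example.
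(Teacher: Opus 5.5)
Your proposal is correct and follows the same route as the paper. You plug the per-gap factorization into \Cref{prop:base_elbo}, take the exact rate term $\sum_i \hat\lambda_t^\theta(\x,i)$, get $\gamma_t^{\textnormal{ind}}$ from the length ratio $\rho_t/(1-\rho_t)$ times $\sigma_t$, and split $\log\hat R_t^\theta$ into $\log\hat\lambda_t^\theta+\log\hat p_t^\theta$. Your bookkeeping is more careful than the paper's proof: the paper simply asserts the ratio is $\rho_t/(1-\rho_t)$ and the forward rate is $\sigma_t$ before pairing each $k\in s_i(\bm b)$ with gap $i$, whereas you track the deletion multiplicity and mask sums and resolve them via the $(\bm b',j)\leftrightarrow(\bm a,i,k)$ bijection and the equivalence-class argument.
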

\begin{proof}
For $\x = (m, \seqx)$, write
\begin{align*}
\hat R_t^{\theta}(\x, \y) = \sum_{i=1}^{m+1} \sum_{w \in \vocab} \hat \lambda_t^{\theta}(\x, i)\,\hat p_t^{\theta}( w \mid \x, i)~ \delta_{\ins(\x, i, w)}(\y).
\end{align*}
Fix $\x_0=(n,\seqx_0)$ and $\x=\x_0[\bm b]$. As in the proof of \Cref{cor:elbo}, we plug this factorization into \Cref{prop:base_elbo} and separate the contribution into the negative rate term and the predecessor-weighted log term.

The negative term is exact:
\begin{align*}
\sum_{\y \neq \x}\hat R_t^\theta(\x,\y)
= \sum_{i=1}^{m+1}\sum_{w\in\vocab}\hat \lambda_t^\theta(\x,i)\hat p_t^\theta(w\mid \x,i)
= \sum_{i=1}^{m+1}\hat \lambda_t^\theta(\x,i).
\end{align*}
For the log term, if $\y=\ins(\x,i,x_0^k)$ for some $k\in s_i(\bm b)$, then \Cref{prop:transition_independent} gives
\begin{align*}
\frac{p_{t|0}(\y\mid \x_0)}{p_{t|0}(\x\mid \x_0)}
= \frac{\rho_t^{m+1}(1-\rho_t)^{n-m-1}}{\rho_t^m(1-\rho_t)^{n-m}}
= \frac{\rho_t}{1-\rho_t},
\end{align*}
and the forward deletion rate from $\y$ to $\x$ is $\sigma_t$. Hence every missing original index contributes the same coefficient
\begin{align*}
\gamma_t^{\textnormal{ind}} = \frac{\sigma_t\,\rho_t}{1-\rho_t}.
\end{align*}

As in \Cref{cor:elbo}, we now use the same approximation for the log term: for each gap $i$, we pair the deleted indices $k\in s_i(\bm b)$ with the factorized reverse rate and replace
\begin{align*}
\log \hat R_t^\theta(\x_0[\bm b], \ins(\x_0[\bm b], i, x_0^k))
\end{align*}
by
\begin{align*}
\log \hat \lambda_t^\theta(\x_0[\bm b], i) + \log \hat p_t^\theta(x_0^k \mid \x_0[\bm b], i).
\end{align*}

With this approximation, the contribution to the upper bound on the negative log-likelihood becomes
\begin{align*}
\E\Bigg[ \sum_{i=1}^{m+1} \hat \lambda_t^\theta(\x_0[\bm b], i)
- \gamma_t^{\textnormal{ind}} \sum_{i=1}^{m+1}|s_i(\bm b)| \log \hat \lambda_t^\theta(\x_0[\bm b], i) \\
- \gamma_t^{\textnormal{ind}} \sum_{i=1}^{m+1}\sum_{k\in s_i(\bm b)} \log \hat p_t^\theta(x_0^k \mid \x_0[\bm b], i)\Bigg],
\end{align*}
where the expectation is over $\x_0 \sim p_{\textnormal{data}}$, $t\sim \textnormal{Uniform}[0,T]$, $m\sim \textnormal{Binomial}(n,\rho_t)$, and $\bm b\sim \textnormal{Uniform}(\sB_{n,m})$. Grouping the first two terms as $\mathcal L_{\textnormal{len}}(\theta)$ and the last term as $\mathcal L_{\textnormal{token}}(\theta)$ gives the stated result.
\end{proof}

%% file: drafts/aistats/sections/appendices/extended_related_work.tex
\section{Extended Discussion on Related Work}\label{app:extended_related_work}

\xhdr{Relation to discrete flow matching and stochastic interpolants.}
Our work is closely related in spirit to recent discrete flow matching and stochastic interpolant approaches for sequence generation~\citep{campbellGenerativeFlowsDiscrete2024,albergo2023stochasticinterpolantsunifyingframework}. In those frameworks, one typically specifies data-conditional target rates $R_t(x_t,\cdot \mid x_0)$ directly and trains a model $\hat R_t^\theta(x_t,\cdot)$ to match them, often using a Bregman divergence. 
In contrast, our starting point is an explicit diffusion-style noising process: a deletion CTMC on variable-length sequences whose reverse-time dynamics define the generative insertion process. Conceptually, the target-rate construction in flow matching plays a role analogous to the choice of noising process in diffusion, but the modeling perspective is different: in our case the reverse process is obtained from a concrete forward deletion mechanism, rather than being specified directly.

\xhdr{Comparison with Edit Flows.}
\citet{havasi2025edit} also study continuous-time generation on discrete sequences, but they do so through conditional discrete flow matching with \emph{edit} operations. The high-level similarity is that both approaches use Markovian sequence evolution in continuous time and both can be trained with closely related rate-matching objectives. The main difference is that our construction is tailored to insertion-only generation. We begin with a deletion process that acts directly on complete sequences and then derive the reverse insertion model, whereas Edit Flows directly parameterizes a generative chain over a broader class of edit operations: insertions, deletions, and substitutions.
The presence of deletion operations in the generative process necessitates that the noising process insert spurious tokens thereby making the explicit expectation over the alignments completely intractable.
Under an insertion-only restriction our formulation provides a way to perform Rao-Blackwellized estimation of the ELBO objective using  dynamic programming as stated in \cref{remark:rao_blackwellization}.
If the Edit Flows operations were restricted to only insertions, and the alignments were sampled as we currently do in our implementation, the Bregman divergence for insertion-only rate matching, after ignoring the constant terms, would produce a training objective similar to the one given in our \cref{cor:elbo_independent_insertion_per_position}.

\xhdr{Comparison with FlexMDM.}
FlexMDM~\citep{kim2025anyorder} is a concurrent work that also generates sequences by growing a partial sequence over time using insertions. The main distinction between DILM and FlexMDM lies in the choice of \emph{what} is inserted and \emph{how} the forward process is defined. 
In our model, a generative step inserts a vocabulary token directly into a gap. In FlexMDM, a generative step inserts a fresh \texttt{MASK} token into a gap and separately allows already-present \texttt{MASK} tokens to be unmasked. Thus both methods are insertion-based, but ours chooses token identity at insertion time, whereas FlexMDM defers that decision to a later unmasking step.

%% file: drafts/aistats/sections/appendices/implementation.tex
\section{Implementation Details}\label{app:implementation_details}
\subsection{Choosing the Noise Schedule}\label{app:choosing_noise_schedule}
Unlike fixed-length MDMs, where the noise schedule is not as critical~\citep{zheng2025masked,ouYourAbsorbingDiscrete2024}, picking the right noise schedule is important for variable-length generation. We want the generator to generalize across different sequence lengths, and therefore we need to pick a noise schedule that provides sufficient coverage of all final lengths in the noised sequences.

\subsubsection{Joint Noising}
\begin{wrapfigure}{r}{0.45\textwidth}
    \centering
    \vspace{-2mm}
    \includegraphics[width=0.35\textwidth]{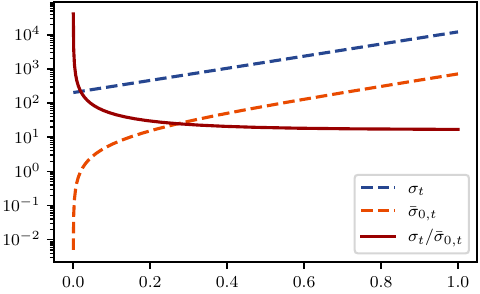}
    \vspace{-2mm}
    \caption{Log-linear noise schedule.}
    \label{fig:loglinear_noise_shape}
    \vspace{-8mm}
\end{wrapfigure}
We use the \emph{log-linear} schedule for joint noising defined as
\begin{align*}%
    \sigma(t) = \sigma_{\min}\left(\frac{\sigma_{\max}}{\sigma_{\min}}\right)^{t},
\end{align*}
where $0<\sigma_{\min}<\sigma_{\max}$ are hyperparameters (in particular $\sigma(0)=\sigma_{\min}$ and $\sigma(1)=\sigma_{\max}$).
The corresponding integrated rate from \cref{prop:transition_probability} is
\begin{align*}%
    \bar{\sigma}_{0,t} = \int_{0}^{t} \sigma(u)\,\dd{u}
    = \frac{\sigma_{\min}}{\ln(\sigma_{\max}/\sigma_{\min})}\left(\left(\frac{\sigma_{\max}}{\sigma_{\min}}\right)^{t}-1\right).
\end{align*}
\begin{figure}[t]
    \centering
    \begin{subfigure}[t]{0.48\linewidth}
        \centering
        \includegraphics[width=\linewidth]{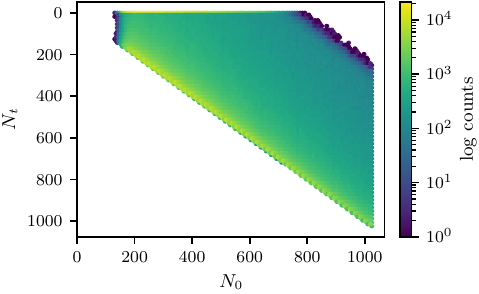}
        \caption{Joint Poisson noising.}
        \label{fig:nt_vs_n0_owt}
    \end{subfigure}\hfill
    \begin{subfigure}[t]{0.48\linewidth}
        \centering
        \includegraphics[width=\linewidth]{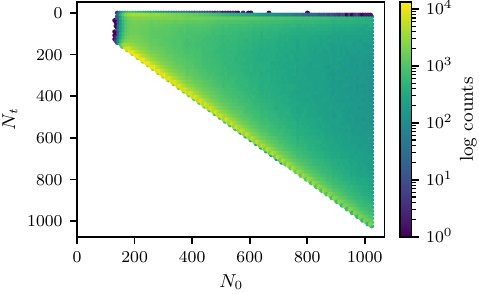}
        \caption{Independent noising.}
        \label{fig:nt_vs_n0_owt_indep}
    \end{subfigure}
    \caption{Empirical joint distributions of initial length $\cN_0$ and noised length $\cN_t$ on OpenWebText training documents.
    For each sequence we draw $t$ from the same uniform distribution on $[\varepsilon,1]$ used in training ($\varepsilon=10^{-4}$).
    In \Cref{fig:nt_vs_n0_owt}, we sample $D \sim \mathrm{Poisson}(\bar{\sigma}_{0,t})$ with the log-linear schedule $(\sigma_{\min}, \sigma_{\max})=(50,3000)$ and set $\cN_t=\max(\cN_0-D,0)$.
    In \Cref{fig:nt_vs_n0_owt_indep}, we sample $D \sim \Binomial(\cN_0, 1-\exp(-\bar{\sigma}_{0,t}))$ with $(\sigma_{\min}, \sigma_{\max})=(0.1,20)$ and set $\cN_t=\cN_0-D$.
    Both panels show hexagonal-bin densities with log-scaled counts (darker is more mass).
    Joint noising yields additive shrinkage, with visible mass near $\cN_t=0$ for short inputs, whereas independent noising is closer to multiplicative shrinkage, so the typical noised length tracks the original length more closely.}
    \label{fig:nt_vs_n0_owt_compare}
\end{figure}
During training we sample $t$ uniformly from $[\varepsilon,1]$, with $\varepsilon$ small (e.g.\ $\varepsilon=10^{-4}$ in the configuration).
The log-linear schedule has the desirable property that the process slows down near $t=0$. We did not explore other schedules in our experiments.

Under joint Poisson noising (\cref{alg:sample_noise}), let $\cN_t$ denote the length at time $t$ given $\cN_0=n$.
Equivalently, sample $D \sim \mathrm{Poisson}(\bar{\sigma}_{0,t})$ for the number of positions to drop and set $\cN_t = \max(n-D,0)$ (after the $\min$ with $n$ in the algorithm).
Then for $m \in \{0,1,\ldots,n\}$,
\begin{align*}
    &\sP\!\left(\cN_t = m \mid \cN_0 = n\right)\\
    &=
    \begin{cases}
        \displaystyle 1 - \sum_{k=0}^{n-1} \frac{e^{-\bar{\sigma}_{0,t}}\,\bar{\sigma}_{0,t}^{\,k}}{k!}, & m = 0, \\[0.9em]
        \displaystyle \frac{e^{-\bar{\sigma}_{0,t}}\,\bar{\sigma}_{0,t}^{\,n-m}}{(n-m)!}, & m \in \{1,\ldots,n\}.
    \end{cases}
\end{align*}

Its conditional expectation is
\begin{align*}
    \E[\cN_t \mid \cN_0=n]
    &= a_n(\bar{\sigma}_{0,t})\,n - b_n(\bar{\sigma}_{0,t})\,\bar{\sigma}_{0,t},
\end{align*}
where
\begin{align*}
    a_n(\lambda) &= \sum_{k=0}^{n-1} \frac{e^{-\lambda}\lambda^k}{k!},
    \qquad
    b_n(\lambda) = \sum_{k=0}^{n-2} \frac{e^{-\lambda}\lambda^k}{k!}.
\end{align*}
Thus joint noising is additive in the sense that the expected length is a linear combination of the original length $n$ and the integrated rate $\bar{\sigma}_{0,t}$, with truncation-dependent coefficients.
When truncation is negligible, $a_n(\bar{\sigma}_{0,t}) \approx b_n(\bar{\sigma}_{0,t}) \approx 1$, recovering the heuristic scale $n-\bar{\sigma}_{0,t}$.

\Cref{fig:nt_vs_n0_owt} illustrates how these choices interact with the heavy-tailed length distribution of OpenWebText: the model trains on a wide range of $(\cN_0,\cN_t)$ pairs rather than a single fixed shrinkage factor, which motivates tuning the schedule so that typical noised lengths still cover the range needed at generation time.

\subsubsection{Independent Noising}
Under independent noising, each position survives until time $t$ with probability
\begin{align*}
    \rho_{0,t} = \exp(-\bar{\sigma}_{0,t}),
\end{align*}
so \Cref{alg:sample_noise} samples
\begin{align*}
    D \sim \Binomial\!\left(n, 1-\rho_{0,t}\right),
\end{align*}
and returns $\cN_t=n-D$.
Equivalently, conditional on $\cN_0=n$, the noised length has marginal
\begin{align*}\label{eq:independent_length_marginal}
    \sP\!\left(\cN_t = m \mid \cN_0 = n\right)
    = \binom{n}{m}\rho_{0,t}^{\,m}\left(1-\rho_{0,t}\right)^{n-m},
    \qquad m \in \{0,1,\ldots,n\}.
\end{align*}
Hence
\begin{align*}
    \E[\cN_t \mid \cN_0=n] = n\rho_{0,t},
\end{align*}
so independent noising induces multiplicative shrinkage of sequence length.
This contrasts with joint Poisson noising, where the number of deletions is additive and, before truncation at zero, independent of $n$.
\Cref{fig:nt_vs_n0_owt_indep} shows the resulting behavior on OpenWebText: the noised lengths remain broadly distributed, but their typical scale tracks the original length more closely than in the joint-noising case.

\subsection{Computing the diffusion coefficients}\label{app:compute_ratio}
Computing the diffusion coefficient $\gamma_t^{\textnormal{ind}}$ in the independent noising case is straightforward, but this is not the case for the joint noising case. Here we provide details on how to numerically compute the diffusion coefficient for the joint noising case.
For the joint noising case, with $t>s$, and $m>0$, from \eqref{eq:dropping_process_transition_probability} we have
\begin{align*}
    \frac{\sP(\cN_t=m+1 \mid \cN_s=n)}{\sP(\cN_t=m \mid \cN_s=n)} & = \frac{ e^{-\bar \sigma_{s, t}} (\bar \sigma_{s, t})^{n-m-1}}{(n-m-1)!} \frac{(n-m)!}{ e^{-\bar \sigma_{s, t}} (\bar \sigma_{s, t})^{n-m}} \\
    & = \frac{n-m}{\bar \sigma_{s, t}}.
\end{align*}
For the case of $m=0$, we need to compute the tail of the exponential for the denominator.
\begin{align*}
    \frac{\sP(\cN_t=m+1 \mid \cN_s=n)}{\sP(\cN_t=m \mid \cN_s=n)} & =  \frac{\sP(\cN_t=1 \mid \cN_s=n)}{\sP(\cN_t=0 \mid \cN_s=n)} \\
    &= \frac{e^{-\bar \sigma_{s, t}} (\bar \sigma_{s, t})^{n-1}}{(n-1)!} \frac{1}{1 - \sum_{k=0}^{n-1} \frac{ e^{-\bar \sigma_{s, t}} (\bar \sigma_{s, t})^{k}}{k!}}\\
    &= \frac{e^{-\bar \sigma_{s, t}} (\bar \sigma_{s, t})^{n-1}}{(n-1)!} \frac{1}{\Phi(n, \bar \sigma_{s, t})}\\
    &= \frac{n}{\bar \sigma_{s, t}}\frac{1}{S(n, \bar \sigma_{s, t})}
\end{align*}
where $S(n, \bar \sigma_{s, t}) = \frac{n!\,e^{\bar \sigma_{s, t}}}{(\bar \sigma_{s, t})^n} \Phi(n, \bar \sigma_{s, t})$ and $\Phi(n, \bar \sigma_{s, t}) = (1 - \frac{\Gamma(n, \bar \sigma_{s, t})}{\Gamma(n)})$ is the regularized lower incomplete gamma function.
We vectorize the computation in PyTorch in a numerically stable manner using Algorithm~\ref{alg:compute_ratio}, where $\text{lgamma}$ and $\text{gammainc}$ are \texttt{torch.special.lgamma} and \texttt{torch.special.gammainc}, respectively.

\begin{algorithm}
    \caption{Computing $\frac{\sP(\cN_t=m+1 \mid \cN_s=n)}{\sP(\cN_t=m \mid \cN_s=n)}$}
    \label{alg:compute_ratio}
    \begin{algorithmic}[1]
        \REQUIRE $n,m$,  $\bar \sigma_{s, t}$
        \STATE $u \gets \frac{n-m}{\bar \sigma_{s, t}}$
        \IF{$m=0$}
            \STATE $S \gets \exp \left[{\text{lgamma}(n+1) + \,{\bar \sigma_{s, t}} - n\log \bar \sigma_{s, t}}+ \text{gammainc}(n, \bar \sigma_{s, t})\right]$
            \STATE $u \gets \frac{u}{S}$
        \ENDIF
        \RETURN $u$
    \end{algorithmic}
\end{algorithm}

For large values of $n$, the regularized lower incomplete gamma function can quickly approach values less than $10^{-50}$, which leads to numerical instability.
Another alternative is to use the confluent hypergeometric function.
\begin{align*}
    S(n, \bar \sigma) &= \frac{n!\,e^{\bar \sigma}}{\bar \sigma^n} \Phi(n, \bar \sigma) \\
    &= \frac{n!\,e^{\bar \sigma}}{\bar \sigma^n} \left(1 - \sum_{k=0}^{n-1} \frac{ e^{-\bar \sigma} \bar \sigma^{k}}{k!}\right)\\
    &=\frac{n!}{\bar \sigma^n} \sum_{k=n}^{\infty} \frac{\bar \sigma^k}{k!}\\
    &= \frac{n!}{\bar \sigma^n} \sum_{k=0}^{\infty} \frac{\bar \sigma^{k+n}}{(k+n)!}\\
    &= \sum_{k=0}^{\infty} \frac{n!}{(n+k)!} \bar \sigma^{k}\\
    &= \text{hyp1f1}(1, n+1; \bar \sigma)
\end{align*}

Since \texttt{hyp1f1} is not available in PyTorch, we can simply use the series expansion.

\begin{longlisting}[label={lst:hyp1f1}]{Computing \texttt{hyp1f1} using series expansion}
def hyp1f1_1_nplus1_vec(x, n, K=500):
    # x: scalar tensor, n: (batch,) tensor
    device = x.device
    n = n.unsqueeze(1).to(torch.float64)  # shape (batch, 1)
    x = x.unsqueeze(1).to(torch.float64)  # shape (batch, 1)

    # create matrix of denominators of shape (*batch, K), where *batch is the leading dimensions of x
    ks = torch.arange(K, dtype=x.dtype, device=device).unsqueeze(
        0
    )  # k=0..K-1, shape (1, K)
    den = n + 1 + ks  # shape (batch, K)

    # factors = x / (n+1+k)
    factors = x / den  # shape (batch, K)

    # compute cumulative product along k to get T_k/T_0
    cumfac = torch.cumprod(factors, dim=-1)  # shape (batch, K)

    # prepend T_0=1 to align
    T = torch.cat([torch.ones_like(n), cumfac], dim=-1)  # (batch, K+1)

    # sum over k
    return T.sum(dim=-1)  # shape (batch,)
\end{longlisting}

\subsection{Efficient Training}
Efficient training of insertion language models requires careful consideration of engineering details. 
Here we discuss some low-level details that are essential for making the training practical.

\xhdr{Multi-worker collation on CPU using sparse tensors}
Computing the token loss requires, for each gap in the corrupted sequence, identifying all original tokens that were removed from that gap. Equivalently, each gap must be paired with a vocabulary-sized multi-hot target vector that has 1s exactly on the tokens belonging to that gap. 
Since the number and identity of removed tokens vary irregularly across gaps and examples, these targets are most naturally assembled during collation by scanning each example using a for-loop and writing only the nonzero entries into a sparse representation.

In practice, this work is best done by CPU dataloader workers in parallel, while the GPU is reserved for the forward and backward passes. It is also important to keep these targets sparse until they have been transferred to the GPU: materializing the dense tensors of shape \texttt{(batch, seq\_len, vocab)} on CPU leads to prohibitively high CPU memory usage in multi-worker settings and wastes CPU-GPU transfer bandwidth on tensors that are almost entirely zero, slowing down training. 
We therefore construct sparse targets during CPU collation and expand them only on the GPU when evaluating the loss. 

With these optimizations, the training of insertion language models is as efficient as MDMs and ARMs in terms of examples per second.

\subsection{Sampling}\label{app:sampling}

\subsubsection{Multi-token insertions}\label{app:lst:multi_token_step}
Figure~\ref{fig:insertion-cumsum} visualizes how multiple insertions can be performed in a single tensorized step without for-loops using cumulative sums to determine shifted positions of existing tokens and the destinations of newly inserted tokens. The listing that follows shows a minimal PyTorch implementation of this core multi-token insertion update.

\begin{figure}[H]
    \centering
    \begin{tikzpicture}[
        x=0.9cm,
        y=0.9cm,
        every node/.style={font=\footnotesize},
        stepblue/.style={draw=cics_blue, line width=1.0pt},
        stepgray/.style={draw=cics_gray, line width=1.0pt},
        arrowblue/.style={->, draw=cics_blue, line width=1.0pt},
        arrowgray/.style={->, draw=cics_gray, line width=0.9pt},
        boxgray/.style={draw=cics_gray, fill=white, rounded corners=2pt, line width=0.9pt},
        boxblue/.style={draw=cics_blue, fill=cics_bglight_blue, rounded corners=2pt, line width=0.9pt},
        boxblueoutline/.style={draw=cics_blue, fill=white, rounded corners=2pt, line width=0.9pt}
    ]
        \node[anchor=east,text=cics_blue] at (0.2,3.45) {$\texttt{ins\_pred.cumsum()}$};
        \node[anchor=east] at (0.2,2.3) {$x_t$};
        \node[anchor=east] at (0.2,1.3) {$\texttt{ins\_pred}$};
        \node[anchor=east] at (0.2,0.0) {$x_s$};
        \node[anchor=east] at (0.2,-1.7) {$x_t$};
        \node[anchor=east,text=cics_gray] at (0.2,-2.85) {$\texttt{roll(ins\_pred).cumsum()}$};

        \draw[stepblue] (1.00,3.20) -- (2.31,3.20);
        \draw[stepblue] (2.31,3.20) -- (2.31,3.45) -- (3.31,3.45);
        \draw[stepblue] (3.31,3.45) -- (3.31,3.70) -- (4.62,3.70);
        \foreach \x/\lbl/\y in {1.31/0/3.38,2.31/1/3.63,3.31/2/3.88,4.31/2/3.88} {
            \node[text=cics_blue] at (\x,\y) {\lbl};
        }

        \draw[stepgray] (1.00,-2.80) -- (3.31,-2.80);
        \draw[stepgray] (3.31,-2.80) -- (3.31,-2.55) -- (4.31,-2.55);
        \draw[stepgray] (4.31,-2.55) -- (4.31,-2.30) -- (4.62,-2.30);
        \foreach \x/\lbl/\y in {1.31/0/-2.62,2.31/0/-2.62,3.31/1/-2.37,4.31/2/-2.12} {
            \node[text=cics_gray] at (\x,\y) {\lbl};
        }

        \draw[draw=cics_gray, line width=0.9pt, rounded corners=4pt] (0.78,-3.48) rectangle (6.95,4.52);

        \draw[arrowblue] (2.31,1.15) -- (3.31,0.38);
        \draw[arrowblue] (3.31,1.15) -- (5.31,0.38);

        \draw[arrowgray] (1.31,-1.47) -- (1.31,-0.10);
        \draw[arrowgray] (2.31,-1.47) -- (2.31,-0.10);
        \draw[arrowgray] (3.31,-1.47) -- (4.31,-0.10);
        \draw[arrowgray] (4.31,-1.47) -- (6.31,-0.10);

        \foreach \i/\lbl in {1/A,2/B,3/C,4/D} {
            \filldraw[boxgray] (\i,2.05) rectangle +(0.62,0.48);
            \node at (\i+0.31,2.29) {\lbl};
        }

        \foreach \i/\lbl/\style/\col in {1/0/boxgray/black,2/1/boxblue/cics_blue,3/1/boxblue/cics_blue,4/0/boxgray/black} {
            \filldraw[\style] (\i,1.05) rectangle +(0.62,0.48);
            \node[text=\col] at (\i+0.31,1.29) {\lbl};
        }

        \foreach \i/\lbl/\style/\col in {1/A/boxgray/black,2/B/boxgray/black,3/x/boxblue/cics_blue,4/C/boxgray/black,5/y/boxblue/cics_blue,6/D/boxgray/black} {
            \filldraw[\style] (\i,-0.10) rectangle +(0.62,0.48);
            \node[text=\col] at (\i+0.31,0.14) {\lbl};
        }

        \foreach \i/\lbl in {1/A,2/B,3/C,4/D} {
            \filldraw[boxgray] (\i,-1.95) rectangle +(0.62,0.48);
            \node at (\i+0.31,-1.71) {\lbl};
        }
    \end{tikzpicture}
    \caption{Illustration of the insertion update. The top panel isolates the insertion slots: $\texttt{ins\_pred.cumsum()}$ indexes the new positions that receive the inserted tokens. The bottom panel isolates the movement of existing tokens: $\texttt{roll(ins\_pred).cumsum()}$ counts earlier insertions and therefore determines how far each original token shifts to the right.}
    \label{fig:insertion-cumsum}
\end{figure}

\begin{longlisting}[label={lst:multi_token_step}]{Core multi-token insertion update in PyTorch}
    def insert_sampled_tokens(
        x_t: torch.Tensor,        # (batch, seq_len)
        ins_pred: torch.Tensor,   # bool mask: insert after this position?
        ins_tokens: torch.Tensor, # sampled token for each active insertion
        pad_token_id: int = 0,
    ) -> torch.Tensor:
        batch_size, seq_len = x_t.shape
        x_s = torch.full_like(x_t, pad_token_id)

        inc_positions = torch.arange(seq_len, device=x_t.device)
        inc_positions = inc_positions.unsqueeze(0).expand(batch_size, -1)

        # Existing tokens shift right by the number of insertions before them.
        existing_tokens_new_positions = (
            inc_positions + ins_pred.roll(shifts=1, dims=-1).int().cumsum(dim=-1)
        ).clamp(max=seq_len - 1)

        # Inserted tokens go in the newly opened slots.
        inserted_tokens_new_positions = (
            inc_positions + ins_pred.int().cumsum(dim=-1)
        ).clamp(max=seq_len - 1)

        batch_index = torch.arange(batch_size, device=x_t.device)
        batch_index = batch_index.unsqueeze(-1).expand(-1, seq_len)

        x_s.scatter_(dim=-1, index=existing_tokens_new_positions, src=x_t)
        x_s[batch_index[ins_pred], inserted_tokens_new_positions[ins_pred]] = (
            ins_tokens[ins_pred]
        )
        return x_s
    \end{longlisting}

\subsection{Data preprocessing}
\paragraph{Graph traversal}
The data for the graph traversal task is obtained from \citet{patel2025insertion}.
\paragraph{Language modeling}
Following \citet{patel2025insertion}, we train one model on LM1B tokenized with BERT tokenizer, with sequences truncated to a maximum length of 128 tokens.
The other model is trained on OpenWebText-1024 split, which is the same as OpenWebText but tokenized using GPT-2 tokenizer and keeping sequences of length 1024 tokens or less. This preserves about 80\% of the corpus.
We use the xLM library~\citep{patel-etal-2026-xlm} for the training and evaluation boilerplate code.

\subsubsection{Hyperparameters}
\Cref{tab:appendix-hyperparams} lists the key training hyperparameters.

\begin{table}[H]
\centering
\footnotesize
\setlength{\tabcolsep}{2pt}
\begin{tabular*}{\linewidth}{@{\extracolsep{\fill}}lccccc@{}}
\toprule
 & LM1B & OpenWebText-1024 & STAR (easy) & STAR (medium) & STAR (hard) \\
\midrule
Per-device batch size & 128 & 32 & 64 & 64 & 64 \\
Num. GPUs & 8 & 8 & 1 & 1 & 1 \\
Global batch size & 512 & 512 & 64 & 64 & 64 \\
Output length & 128 & 1024 & 14 & 16 & 28 \\
Input length  & --- & --- & 28 & 36 & 116 \\
Noise schedule $(\sigma_{\max},\sigma_{\min},\varepsilon)$ & 500, 5, $10^{-4}$ & 3000, 50, $10^{-4}$ & 30, 3, $10^{-3}$ & 30, 3, $10^{-3}$ & 50, 5, $10^{-3}$ \\
Training steps & $1000k$ & $300k$ & $80k$ & $80k$ & $80k$ \\
Learning rate & $5\times 10^{-5}$ & $10^{-5}$ & $10^{-4}$ & $10^{-4}$ & $10^{-4}$ \\
Weight decay & 0.03 & 0.03 & 0.01 & 0.01 & 0.01 \\
Adam $\varepsilon$ & $5\times 10^{-6}$ & $5\times 10^{-6}$ & --- & --- & --- \\
LR schedule & Cosine + min LR & Cosine + min LR & Constant & Constant & Constant \\
Minimum LR (cosine floor) & $10^{-6}$ & $10^{-6}$ & --- & --- & --- \\
LR warmup steps & 2000 & 2000 & 500 & 500 & 500 \\
Cosine cycles & 4.0 & 4.0 & --- & --- & --- \\
\bottomrule
\end{tabular*}
\caption{Training hyperparameters for all models (noise schedule only applicable to {\idlm{}} and {\idlmm{}}).}
\label{tab:appendix-hyperparams}
\end{table}
We picked reasonable defaults for the hyperparameters, and it is possible that the results could be improved with more careful hyperparameter tuning.

\subsubsection{Computing infrastructure}
For the language modeling experiments on LM1B and OpenWebText-1024, we trained all the models on 8 A100-80GB GPUs with `torch.compile` using the hyperparameters in \Cref{tab:appendix-hyperparams}. The training for 1 million steps on LM1B took approximately 28--30 hours, and on OpenWebText-1024 training for 300 thousand steps took approximately 72--80 hours.
For the experiments on star graphs, we use a single A100-80GB GPU, with the training taking 2--3 hours.

\section{Unconditional Generation Examples}\label{app:unconditional-generation-examples}
\subsection{{\idlm{}} on OpenWebText}
The following are unconditional samples from \jointhighlight{\idlm{}} on OpenWebText trained for 300,000 steps.
\begin{tcolorbox}[breakable, colback=cics_bglight_blue]
\ttfamily
A small woman said on Tuesday that a very great, very bad man had been looking for him with her husband, who has been playing with his son, he said.

"He's not really going to have done his whole job in this case, and he has always said that, but he's not talking about him," he said.

"He's got the great man working up with him, and getting him back out there, and he is not talking about himself, because he is on the edge of him, even when you have to call him up, and it's all the right thing.'

"And he's not sitting on the good man, which is that he has been a really good son, but that his son has always come out of him, and that he is his son, his son, and his wife."

"I think that he is going to have him, right now, that he's not like me, because he's not looking for me.

"I think that he's not looking at him, that they are just going to play with him, and that they are not looking for his parents, and when I'm not running up with him, this is a lot of a so-called good man. He's going to be a really good kid, and because they're not getting out at me, he's a very good man, and it is really great that he is going to have it, but I just feel like being a very bad man."

"He's just right here in that room, because he has never been a really good guy, and as he goes on, he's just going to be getting out behind his son," he said, looking at a very bad man, and saying that he's always been told that he is only running up on his side," he said.

"He said that, "I will all see what he is going into being out there."

"He's not really going to see him, and he is not going to be the same guy, and he wouldn't be going to have to see him.

"I'd like to know that, and he's going to be out there, and he's not saying that, and he really knows that, and for sure, you're not in that room."
\end{tcolorbox}
\begin{tcolorbox}[breakable, colback=cics_bglight_blue]
\ttfamily
The next day, I don't have signed up for the first time now. You are "signing it up in here, and you know that there is not anything going on from it, and that the company that's ever caught up with me being a great group," or any other part of that episode (which it just fades around) seems to be going off to make my view of this event, and some of you have been living in that episode, even after bringing it up here, in some time.

"The only way we are going to be doing it is that you are going to go down right here, and we know that you are not really stepping up with them."

Because I don't think that it's just going on in all the good news, and as if it is actually going to go down here? "And if you will go back to thinking about it," it's always going to find it up again.
\end{tcolorbox}

\subsection{{\idlm{}} on LM1B}

\begin{tcolorbox}[colback=cics_bglight_blue]
\textbf{Dataset: LM1B; NLL = 2.36; Entropy = 3.42}\\
\ttfamily
the first quarter of 2008. million of in the fourth quarter 2007. the first quarter of 2009. per share for the quarter of the year. second quarter 2009 results are currently reported. was 346. 4 million. from \$ 15. 7 million in the second quarter of 2009 and the first quarter of 2008. for the second quarter ended june,. quarter as compared to the same period in 2008. million from \$ 46. 6 million for the same period in 2007. total revenue of \$ 50. 9 million, an increase of 3. 1 \% compared. \% compared to the company ' s for the 2008 third quarter.
\end{tcolorbox}

\begin{tcolorbox}[colback=cics_bglight_blue]
\textbf{Dataset: LM1B; NLL = 2.58; Entropy = 3.73}\\
\ttfamily
year - to - quarter operating expenses in the hyatt quarter increased by 1. 6 percent in the first three months of 2008, and for the third quarter 2009 of \$ 7. 7 million, this was a 2. 7 \% increase from the third quarter of 2008 due to net operating expenses of \$ 22. 7 million, an increase of \$ 38. 9 million due to higher operating costs and other related to general and administrative costs. reversing an rise in net income in the fiscal year. in the same period 2008, total revenues of \$ 12 1 million for the second quarter was increased by 12. 8 \%.
\end{tcolorbox}

\begin{tcolorbox}[colback=cics_bglight_blue]
\textbf{Dataset: LM1B; NLL = 3.80; Entropy = 3.94}\\
\ttfamily
" if they want to do that, since the end of 2008, they will know that the global economic downturn will be running at between 4 \% and only 4 \% this year, and that the world economy will be struggling, " said fernando santiago korobattimimo, the head of argentina ' s national environment and environment administration, at the official press conference in brazil, one of the largest developed economies of these two countries.
\end{tcolorbox}

\begin{tcolorbox}[colback=cics_bglight_blue]
\textbf{Dataset: LM1B; NLL = 4.06; Entropy = 4.18}\\
\ttfamily
the thing on the west end of this debate, especially, given all the important events that has already happened in south carolina is that we don ' t vote on the president ' s own performance at the democratic levels as well ) - - and, compared with a lot of voters and sales voters in arizona in the past elections, we know that, as the report showed, we aren ' t going to do the same for all the delegates in and florida because they won in massachusetts, they looked on to change america and weather the storm even before obama was about to change his strategy and win in iowa and lead in the general election.
\end{tcolorbox}

\begin{tcolorbox}[colback=cics_bglight_blue]
\textbf{Dataset: LM1B; NLL = 4.52; Entropy = 4.11}\\
\ttfamily
the move to create this fuel cell product, because of the fast fuel supply chain that has cut gas costs, improved fuel efficiency and enough fueling capacity to replace the smart car vehicles, will have touched off clear assurances made to the local gas industry that if it was no longer made more efficient by the threat of economic and climate changes in japan and other large developed nations, they would be seen as a high earner because of a drop in fuel - like demand.
\end{tcolorbox}

\begin{tcolorbox}[colback=cics_bglight_blue]
\textbf{Dataset: LM1B; NLL = 4.96; Entropy = 4.03}\\
\ttfamily
and in their time, with so much talk of an interest rate cut, the concern that fed ' s aggressive rate cuts, including a dramatic cut in interest rates, and a slow recovery for an already troubled economy, will bring many economic analysts to a close eye on them in a conference call friday - - hours after news earlier this week - - that the fed could keep raising interest rates and cut borrowing and cut spending through this burden at the same time as holding a " smart line, " to start with the economy, the economy, the economy and the economy, which will each increase demand for the economy for years to come.
\end{tcolorbox}